\documentclass{article} 
\usepackage{iclr2026_conference,times}

\usepackage{amsmath,amssymb,amsthm,mathtools}
\usepackage{bm}
\usepackage{booktabs}
\usepackage{graphicx}
\usepackage{enumitem}
\usepackage{microtype}
\usepackage{algorithm}
\usepackage{algpseudocode}
\usepackage{wrapfig}
\usepackage{tikz}
\usetikzlibrary{arrows.meta,calc,positioning,decorations.pathreplacing,backgrounds}
\usepackage[colorlinks=true, urlcolor=blue]{hyperref}
\usepackage{url}
\usepackage[nameinlink,capitalize,noabbrev]{cleveref}
\crefname{appendix}{Appendix}{Appendices}
\Crefname{appendix}{Appendix}{Appendices}

\definecolor{src}{RGB}{37,94,165}
\definecolor{tgt}{RGB}{214,91,32}
\definecolor{transportcol}{RGB}{29,130,86}
\definecolor{candidate}{RGB}{90,155,58}
\definecolor{freegap}{RGB}{113,50,142}
\definecolor{usedg}{RGB}{150,150,150}
\definecolor{popcol}{RGB}{214,39,40}
\definecolor{inact}{RGB}{170,170,170}

\newtheorem{theorem}{Theorem}
\newtheorem{proposition}{Proposition}
\newtheorem{lemma}{Lemma}
\newtheorem{corollary}{Corollary}
\newtheorem{assumption}{Assumption}
\theoremstyle{definition}
\newtheorem{definition}{Definition}
\newtheorem{remark}{Remark}
\newtheorem{example}{Example}

\newcommand{\Sone}{\mathbb{S}^{1}_{L}}
\newcommand{\Sph}{\mathbb{S}^{d-1}}
\newcommand{\Sunit}{\mathbb{S}^{1}}
\newcommand{\dcirc}{d_{\mathbb{S}^{1}}}
\newcommand{\dsph}{d_{\mathbb{S}^{d-1}}}
\newcommand{\PW}{\operatorname{PW}}
\newcommand{\PAWC}{\textnormal{\textsc{PAWC}}}
\newcommand{\PAWL}{\textnormal{\textsc{PAWL}}}
\newcommand{\supp}{\operatorname{supp}}
\newcommand{\ind}{\mathbf{1}}
\newcommand{\RR}{\mathbb{R}}
\newcommand{\arc}[2]{[#1,#2]_{\circlearrowright}}
\newcommand{\arcopen}[2]{(#1,#2)_{\circlearrowright}}

\tikzset{
 srcpt/.style={circle,draw=src,fill=src,inner sep=1.8pt},
 srcin/.style={circle,draw=src,fill=white,inner sep=2.4pt,line width=1pt},
 tgtpt/.style={circle,draw=tgt,fill=tgt,inner sep=1.8pt},
 tgtin/.style={circle,draw=tgt,fill=white,inner sep=2.4pt,line width=1pt},
 gfree/.style={draw=freegap,line width=3pt,line cap=round},
 gused/.style={draw=usedg,line width=3pt,line cap=round},
 rep/.style={draw=freegap!60!black,line width=.6pt,rounded corners=1.5pt,
             minimum width=15pt,minimum height=9pt,inner sep=0pt},
 cell/.style={draw=candidate!80!black,line width=1.8pt,line cap=round},
 cellnew/.style={draw=candidate,densely dashed,line width=1.6pt,line cap=round},
 mold/.style={draw=src!55!black,line width=1pt},
 mnew/.style={draw=popcol,line width=1.1pt,-{Stealth[length=1.8mm]}},
 mrev/.style={draw=inact,densely dotted,line width=1pt},
}
\newcommand{\demoR}{2.1}
\def\aXi{75}\def\aYi{40}\def\aXii{-25}\def\aXiii{-55}
\def\aYii{-85}\def\aYiii{-115}\def\aXiv{-165}\def\aYiv{110}
\def\aYivc{-250}\def\aYic{-320}
\def\gXiYi{57.5}\def\gYiXii{7.5}\def\gXiiXiii{-40}\def\gXiiiYii{-70}
\def\gYiiYiii{-100}\def\gYiiiXiv{-140}\def\gXivYiv{-207.5}\def\gYivXi{92.5}
\newcommand{\basecircle}{%
 \draw[line width=.9pt] (0,0) circle (\demoR);
 \draw[gray!70] (90:\demoR-0.07) -- (90:\demoR+0.12);
 \node[gray,font=\scriptsize] at (90:\demoR+0.34) {$0$};}
\newcommand{\gapmark}[2]{%
 \draw[#1] ([shift={(#2+4.5:\demoR-0.24)}]0,0) arc[start angle=#2+4.5,end angle=#2-4.5,radius=\demoR-0.24];}
\newcommand{\repbox}[1]{\node[rep,rotate=#1+90] at (#1:\demoR-0.24) {};}
\newcommand{\cellarc}[3]{%
 \draw[#1] ([shift={(#2-5:\demoR+0.62)}]0,0) arc[start angle=#2-5,end angle=#3+5,radius=\demoR+0.62];}
\newcommand{\atomn}[5]{\node[#1] at (#2:\demoR) {};
 \node[font=\small,text=#3] at (#2:\demoR+#5) {#4};}

\title{Partial Optimal Transport on the Circle\\ for All Transported Masses in $O(N\log N)$}

\author{\textbf{Soheil Kolouri}\\
College of Connected Computing\\
Vanderbilt University, Nashville, TN\\
\href{mailto:soheil.kolouri@vanderbilt.edu}{soheil.kolouri@vanderbilt.edu}}

\iclrfinalcopy

\begin{document}

\maketitle
\lhead{Preprint.  Under review.}

\begin{abstract}
Partial optimal transport compares two measures while leaving part of the mass
unmatched, which is what makes it robust to outliers, occlusion, and clutter.  The
quantity of interest is usually the whole \emph{profile} --- the optimal cost at every
transported cardinality --- because the right amount to transport is rarely known in
advance, and on the real line the \PAWL{} algorithm returns that profile in $O(N\log
N)$.  Much data is periodic rather than linear: angles, phases, orientations, time of
day, hue, and every direction obtained by projecting onto a great circle.  On the circle
the same problem acquires a global circulation, or equivalently an optimized cut, which
the naive exact method handles by running the line algorithm once per support gap, at
$O(N^{2}\log N)$.  We show that this factor $N$ is unnecessary.  The line structure
survives in cut-free form, and a \emph{free-gap invariant} supplies, at every step, a cut
at which all previous local updates remain valid line updates.  This yields \PAWC{}: an
exact $O(N\log N)$ time, $O(N)$ memory algorithm returning all $K+1$ costs, nested
active sets and plans in one run, together with a single gap that is simultaneously
optimal for every cardinality.  Slicing over great circles extends it to
$\mathbb{S}^{d-1}$.  Empirically the whole profile costs $0.56$\,ms at $N=4096$ against
$1.5$\,s for a single transported fraction from a general solver; on occluded, cluttered
\textsc{mpeg-7} shapes, holding the descriptor fixed and varying only the cost, it
retains $66\%$ of the clean-data retrieval score against $16\%$ for balanced circular OT,
and on $\mathbb{S}^{2}$ it halves the fitting error of spherical sliced Wasserstein
against contaminated targets, synthetic and real.
Code is available at \url{https://github.com/mint-vu/Partial_Wasserstein_on_Circles}.
\end{abstract}

\section{Introduction}
\label{sec:intro}

Optimal transport (OT) provides a geometry-aware framework for comparing measures \citep{villani2008optimal,peyre2019computational}, but exact discrete solvers are generally expensive. Partial transport extends this framework by allowing only part of the distributions to be matched: conceptually, mass may be transported from the source to the target, destroyed if left unmatched at the source, or created if unmatched at the target, with corresponding transportation, destruction, and creation costs. In one dimension, \citet{bai2023sliced} exploited the ordering of the supports to develop an exact primal-dual solver for a fixed choice of these penalties, with worst-case complexity $O(n\max\{m,n\})$, or $O(N^2)$ for distributions of comparable size. More recently, \citet{chapel2025one} showed that the entire solution path indexed by the transported mass can be computed much faster: their \PAWL{} algorithm returns exact partial transport plans for every possible transported mass in $O(N\log N)$ time, using nested active sets, neighboring candidate pairs, balanced chains, and a heap of chain marginals.

Unmatched mass is often part of the problem rather than a modeling nuisance. A partial correspondence isolates the structure shared by two measures without forcing spurious detections, occluded features, mislocated records, or outliers into geometrically implausible matches \citep{nietert2022outlier}. The appropriate amount of mass to match, however, is rarely known beforehand and may depend on a downstream robustness or coverage requirement. The natural object is therefore the entire partial-transport profile
\(
k \longmapsto C_k^\circ,
\)
where $C_k^\circ$ is the minimum cost of matching exactly $k$ pairs. This profile quantifies the tradeoff between coverage and geometric fidelity: it shows which correspondences are retained as the transported mass increases and when admitting additional mass begins to require costly matches. Computing all $C_k^\circ$ at once allows the transported mass to be selected after examining this tradeoff, rather than fixed before solving the transport problem \citep{chapel2025one}.

Periodic geometry is ubiquitous rather than exceptional. Angles, phases, orientations, time-of-day variables, and hue are naturally represented on the circle, while many sliced constructions for directional data on the sphere produce measures supported on great circles. However, the cyclic geometry of the circle prevents a direct application of the efficient algorithms available on the line.
 The circle retains a cyclic order but has no canonical first point, last point, or zero-flow boundary. For balanced $W_1$, this missing boundary condition appears as a free additive circulation in the flow formulation, or equivalently as a cut over which a lifted line problem must be optimized \citep{delon2010fast,rabin2011transportation}. Partial transport couples this topological freedom to the transported mass: as $k$ varies, both the active atoms and the optimal circulation, and hence the optimal cut, may change. Consequently, no single unwrapping of the circle recovers the full profile $k\mapsto C_k^\circ$. A direct exact reduction cuts the circle at every support gap, runs \PAWL{} on each resulting line, and takes the lower envelope of the resulting profiles. Although correct, this repeats an $O(N\log N)$ computation across $O(N)$ cuts and therefore costs $O(N^2\log N)$. The central algorithmic challenge is to recover the entire circular partial-transport profile without paying separately for every possible cut.

\paragraph{Contributions.}
The line structure survives on the circle in cut-free form (\cref{sec:structure}): circular-optimal active sets can be chosen nested, the two atoms
activated at each step are cyclic neighbours among the inactive ones, and a
\emph{free-gap invariant} --- every current cell always contains an original gap not yet
used by any selected cell --- supplies a cut at which all previous local updates are
simultaneously valid line updates.  This gives an exact greedy rule and a gap that is
optimal for \emph{every} cardinality at once, constructed rather than searched for
(\cref{thm:greedy-main,thm:cut-main}), and an $O(N\log N)$ time, $O(N)$ memory algorithm
returning all costs, nested active sets and plans in one run (\cref{sec:algorithm}).
Slicing over great circles extends it to $\Sph$ (\cref{sec:sphere}), and
\cref{sec:experiments} verifies exactness against an LP oracle, measures the scaling,
and gives two applications where the profile is the object that matters --- partial
matching of shape descriptors on \textsc{mpeg-7}, and robust fitting on $\mathbb{S}^{2}$
including a real earthquake catalogue.

\section{Background}
\label{sec:setting}

\subsection{Partial optimal transport on the circle}

Let $\Sone=\RR/L\mathbb{Z}$ with geodesic distance
$\dcirc(x,y)=\min\{|x-y|,L-|x-y|\}$, and let
$\mu=w\sum_{i=1}^{n}\delta_{x_i}$ and $\nu=w\sum_{j=1}^{m}\delta_{y_j}$ be uniformly
weighted empirical measures; write $N=n+m$ and $K=\min(n,m)$.  For a transported mass
$s\in[0,Kw]$, define the circular partial $1$-Wasserstein cost by
\begin{equation}
\PW_\circ(s):=\min_{\pi\in\RR_+^{n\times m}}\Bigl\{
\sum_{i=1}^{n}\sum_{j=1}^{m}\dcirc(x_i,y_j)\pi_{ij}
\;:\;\sum_{j=1}^{m}\pi_{ij}\le w,\;
\sum_{i=1}^{n}\pi_{ij}\le w,\;
\sum_{i,j}\pi_{ij}=s\Bigr\}.
\label{eq:partial-circular}
\end{equation}
Thus $\PW_\circ(s)$ is the minimum cost of transporting exactly $s$ units of mass, with
at most the full mass $w$ of each atom participating.  Relaxing mass conservation to an
inequality and fixing the transported total is the partial problem of
\citet{caffarelli2010free}, who established existence and uniqueness for costs $h(x-y)$
with $h$ convex and disjoint supports; \citet{figalli2010optimal} relaxed disjointness
for the quadratic cost and studied the map $s\mapsto\PW(s)$.  It is that map, not any
single value of it, that we compute.

\begin{figure}[t]
\centering
\resizebox{0.94\textwidth}{!}{\begin{minipage}[t]{0.36\linewidth}
\centering
\textbf{(a) Fixed-mass partial OT on $\mathbb S^1_L$}\par\smallskip
\begin{tikzpicture}[
  scale=1.0,font=\small,
  srcpt/.style={circle,draw=src,fill=src,inner sep=1.6pt,line width=.7pt},
  srcinactive/.style={circle,draw=src,fill=white,inner sep=2.1pt,line width=.9pt},
  tgtpt/.style={circle,draw=tgt,fill=tgt,inner sep=1.6pt,line width=.7pt},
  tgtinactive/.style={circle,draw=tgt,fill=white,inner sep=2.1pt,line width=.9pt},
  transport/.style={draw=transportcol,line width=1.25pt,-{Stealth[length=2mm]}},
  discard/.style={draw=src,dashed,line width=.9pt,-{Stealth[length=1.6mm]}},
  create/.style={draw=tgt,dashed,line width=.9pt,-{Stealth[length=1.6mm]}}
]
  \def\R{1.55}
  \draw[line width=.8pt] (0,0) circle (\R);

  \node[srcpt,label={[src,inner sep=1pt]165:$x_1$}] at (162:\R) {};
  \node[tgtpt,label={[tgt,inner sep=1pt]115:$y_1$}] at (118:\R) {};
  \node[srcpt,label={[src,inner sep=1pt]250:$x_2$}] at (252:\R) {};
  \node[tgtpt,label={[tgt,inner sep=1pt]315:$y_2$}] at (314:\R) {};

  \node[srcinactive,label={[src,inner sep=1pt]230:$x_3$}] at (50:\R) {};
  \node[tgtinactive,label={[tgt,inner sep=1pt]198:$y_3$}] at (18:\R) {};
  \node[srcinactive,label={[src,inner sep=1pt]25:$x_4$}]  at (205:\R) {};
  \node[tgtinactive,label={[tgt,inner sep=1pt]170:$y_4$}] at (350:\R) {};

  \draw[transport] (162:\R) arc[start angle=162,end angle=118,radius=\R];
  \draw[transport] (252:\R) arc[start angle=252,end angle=314,radius=\R];
  \node[transportcol,font=\scriptsize,fill=white,inner sep=1pt] at (140:1.85) {$w \dcirc(x_1,y_1)$};
  \node[transportcol,font=\scriptsize,fill=white,inner sep=1pt] at (283:1.85) {$w \dcirc(x_2,y_2)$};

  \draw[discard] (50:\R+0.14)  -- (50:\R+0.46);
  \draw[discard] (205:\R+0.14) -- (205:\R+0.46);
  \node[src,font=\scriptsize] at (50:\R+0.68)  {$\partial_{\rm d}$};
  \node[src,font=\scriptsize] at (205:\R+0.68) {$\partial_{\rm d}$};

  \draw[create] (18:\R+0.46)  -- (18:\R+0.14);
  \draw[create] (350:\R+0.46) -- (350:\R+0.14);
  \node[tgt,font=\scriptsize] at (18:\R+0.68)  {$\partial_{\rm c}$};
  \node[tgt,font=\scriptsize] at (350:\R+0.68) {$\partial_{\rm c}$};
\end{tikzpicture}
\par\smallskip
{\scriptsize solid: transported; \ hollow: unmatched}
\end{minipage}\hfill
\begin{minipage}[t]{0.62\linewidth}
\centering
\textbf{(b) Exact mass and cost accounting}\par\smallskip
\begin{equation*}
\begin{aligned}
\mathcal J_k(M)
={}&\underbrace{w\sum_{(i,j)\in M}\dcirc(x_i,y_j)}_{\textcolor{transportcol}{\text{transport cost}}}\\[-1mm]
&+\lambda_{\rm d}\underbrace{(n-k)w}_{\textcolor{src}{\text{destroyed source mass}}}
+\lambda_{\rm c}\underbrace{(m-k)w}_{\textcolor{tgt}{\text{created target mass}}}.
\end{aligned}
\end{equation*}
\par\smallskip
{\small
\begin{tabular}{@{}ll@{\hspace{4pt}}l@{}}
\tikz\draw[transportcol,line width=1.2pt,-{Stealth[length=1.7mm]}] (0,0)--(.6,0); & transport & active source $\to$ active target\\[.6mm]
\tikz\draw[src,dashed,line width=1pt,-{Stealth[length=1.7mm]}] (0,0)--(.6,0); & destruction & unmatched source $\to\partial_{\rm d}$\\[.6mm]
\tikz\draw[tgt,dashed,line width=1pt,-{Stealth[length=1.7mm]}] (0,0)--(.6,0); & creation & $\partial_{\rm c}\to$ unmatched target
\end{tabular}}
\end{minipage}}
\caption{Partial transport on the circle.  (a) solid arcs are geodesic transports
between selected atoms; hollow atoms are left unmatched, their mass destroyed at
$\partial_{\rm d}$ or created at $\partial_{\rm c}$.  (b) at a fixed transported cardinality
$|M|=k$ those two amounts are constants, so any fixed penalties $\lambda_{\rm d},
\lambda_{\rm c}$ contribute a constant and the problem is exactly the
cardinality-constrained matching \eqref{eq:profile}.}
\label{fig:problem-accounting}
\end{figure}

\begin{assumption}[Standing assumptions]
\label{ass:standing-main}
All atoms carry the same mass $w$; the points $\{x_i\}\cup\{y_j\}$ are pairwise
distinct; and the ground cost is $\dcirc$.
\end{assumption}

At an integer mass $s=kw$, scaling the feasible set by $w$ gives the cardinality-$k$
bipartite matching polytope.  Its extreme points are integral, and hence
\begin{equation}
C_k^{\circ}:=\PW_\circ(kw)=\min_{M:\,|M|=k}\;
w\!\!\sum_{(x_i,y_j)\in M}\!\!\dcirc(x_i,y_j),
\qquad k=0,\dots,K,
\label{eq:profile}
\end{equation}
where $M$ ranges over matchings, so no atom appears in more than one pair.  We call
$k\mapsto C_k^{\circ}$ the \emph{profile} and denote by $A(M)$ the set of atoms incident
to $M$, called its \emph{active set}.  For $s\in[kw,(k+1)w]$, write
$\lambda=(s-kw)/w$; the value interpolates linearly,
\begin{equation}
\PW_\circ(s)=(1-\lambda)\,C_k^{\circ}+\lambda\,C_{k+1}^{\circ},
\label{eq:profile-interpolation}
\end{equation}
so the discrete profile $\{C_k^{\circ}\}_{k=0}^{K}$ determines $\PW_\circ(s)$ for every
$s\in[0,Kw]$; see \cref{app:plans}.

\subsection{Computing partial transport}
Exact solvers for OT and its unbalanced variants are cubic; the standard remedies
are entropic regularisation \citep{cuturi2013sinkhorn} and the one-dimensional
quantile formula underlying sliced transport
\citep{rabin2012wasserstein,peyre2019computational}. For partial transport,
\citet{phatak2023computing} compute the full mass profile in $O(n^3)$;
\citet{bai2023sliced} give a quadratic one-dimensional solver;
\citet{bonneel2019spot} solve injective partial assignment in quasi-linear time;
\citet{sejourne2022faster} solve one-dimensional unbalanced OT quasi-linearly but
cannot prescribe the transported mass; and \citet{chapel2021unbalanced} obtain a
regularisation path. \PAWL{} \citep{chapel2025one} is the first to return the entire
partial profile on the line in $O(n\log n)$, and is what we extend.

On the circle, prior work is balanced: \citet{delon2010fast} and
\citet{rabin2011transportation} solve $W_1$ by optimising a cut or global
circulation; \citet{martin2024lcot} linearise circular OT; and
\citet{bonet2023spherical,liu2025linear} use circular transport within spherical
slicing. None addresses prescribed-mass partial transport on the circle.

\paragraph{Why the circle is not the line.}
For a \emph{fixed} active set the circular cost is the minimum of the unwrapped line
costs over the $N$ support gaps (\cref{prop:fixed-cut} in \cref{app:background}), so the
envelope over gaps and cardinalities is an exact $O(N^{2}\log N)$ reference solver
(\cref{cor:cut-envelope}, also there) ---
which we use as a baseline in \cref{sec:experiments} and as a trusted second opinion in
testing.  \Cref{ex:one-cut-unbounded} in \cref{app:greedy} shows that reusing one fixed cut can be off
by an unbounded factor, and \cref{sec:numerics-cut} measures how often a single cut would
in fact have sufficed.

\section{Structure of circular partial transport}
\label{sec:structure}

\subsection{Nested active sets and cyclic neighbours: $O(N)$ candidates per step}

\begin{theorem}[Nested optimal extension; proved in \cref{app:nested}]
\label{thm:nested-main}
Let $M_k$ be any optimal cardinality-$k$ matching, $k<K$.  Then there is an optimal
cardinality-$(k+1)$ matching $M_{k+1}$ with $A(M_{k+1})=A(M_k)\cup\{x,y\}$ for a source
atom $x$ and a target atom $y$, both previously inactive.
\end{theorem}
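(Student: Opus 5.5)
The plan is the classical augmenting-path exchange argument for minimum-cost bipartite matching. It needs nothing about the circle beyond the fact that \eqref{eq:profile} is a cardinality-constrained minimum-weight matching with edge weights $w\,\dcirc(x_i,y_j)$. Fix the optimal $M_k$. Among all optimal cardinality-$(k+1)$ matchings, which exist since $k<K$, choose $M'$ minimising $|M'\,\triangle\,M_k|$. I will show that $M'\triangle M_k$ is a single alternating path with one more $M'$-edge than $M_k$-edges. Taking $M_{k+1}:=M'$ then gives the statement.

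\emph{Decomposition.} Every vertex has degree at most $2$ in $M'\triangle M_k$, with at most one edge from each matching. So $M'\triangle M_k$ splits into vertex-disjoint alternating cycles and paths. Call a component $P$ of type $(a,b)$ if it has $a$ edges of $M'$ and $b$ of $M_k$; then $|a-b|\le 1$. Write $c(\cdot)$ for the total weight. For any union $S$ of components, $M_k\triangle S$ and $M'\triangle S$ are again matchings. Their sizes are $k+(a_S-b_S)$ and $k+1-(a_S-b_S)$, and
\[
c(M_k\triangle S)+c(M'\triangle S)=c(M_k)+c(M').
\]

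\emph{Eliminating components.} Suppose there is a balanced component $S$ (a cycle or an even path, $a=b$). Then $M_k\triangle S$ has size $k$, so $c(M_k\triangle S)\ge c(M_k)$. The identity then gives $c(M'\triangle S)\le c(M')$, so $M'\triangle S$ is optimal of size $k+1$. It is strictly closer to $M_k$ because $S\neq\emptyset$, which contradicts minimality. Hence every component is an odd path of type $(b+1,b)$ or $(b,b+1)$. The sizes force
\[
\#\{\text{type }(b+1,b)\}-\#\{\text{type }(b,b+1)\}=1 .
\]
If some path $Q$ of type $(b,b+1)$ existed, I would pair it with a path $P$ of type $(b+1,b)$ and set $S=P\cup Q$. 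This $S$ is balanced, and the same argument again contradicts minimality. So $M'\triangle M_k$ is exactly one path $P$, with $M'$-edges at both ends.

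\emph{Reading off the active set.} An interior vertex of $P$ has one edge of each matching, so it lies in both $A(M_k)$ and $A(M')$. Vertices outside $P$ have the same incidence in both matchings. Each of the two endpoints of $P$ meets only an $M'$-edge of the symmetric difference. Such an endpoint cannot be matched in $M_k$: a common $M_k$/$M'$ edge at it would make its $M'$-edge the same as that common edge, hence not in the symmetric difference. So each endpoint is $M_k$-inactive and $M'$-active. $P$ has odd length in a bipartite graph, so its endpoints lie on opposite sides: one source $x$ and one target $y$. Therefore
\[
A(M')=A(M_k)\cup\{x,y\},
\]
as claimed.

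The step needing the most care is the pairing argument. One must check that the combined exchange always produces matchings of the right cardinalities, and that the cost identity holds exactly so that optimality transfers. Everything else is bookkeeping. Note that the resulting matchings are nested only in their active sets, not in their edges: $M_{k+1}$ may re-pair atoms of $A(M_k)$ along $P$. This is exactly what the statement asserts.
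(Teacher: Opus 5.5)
Your proof is correct, but it takes a different route from the paper. The paper recasts the cardinality-$k$ problem as a unit-capacity min-cost flow and cites the successive-shortest-augmenting-path theorem \citep{ahuja1993network}: augmenting the optimal flow of $M_k$ along a minimum-cost residual $s$--$t$ path gives an optimal flow of value $k+1$. The alternating structure of that path then shows that only its two endpoints become active.

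You avoid both the flow network and the cited theorem. You choose an optimal $M'$ of size $k+1$ minimising $|M'\triangle M_k|$ and apply the exchange identity $c(M_k\triangle S)+c(M'\triangle S)=c(M_k)+c(M')$ to unions $S$ of components. That exchange argument is essentially what underlies the augmentation theorem, so your version is self-contained and elementary. It also gives slightly more: every optimal $(k+1)$-matching closest to $M_k$ differs from it by a single augmenting path.

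In exchange, the paper's version is constructive: $M_{k+1}$ is one shortest augmentation away from $M_k$. The same mechanism yields the nondecreasing increments behind \cref{prop:interpolation}. Yours is existential, relying on the minimality choice.

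Both arguments use nothing about $\dcirc$. Both also apply to an arbitrary prescribed optimal $M_k$, which is what the greedy analysis in \cref{thm:greedy} needs.
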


Nestedness means the profile can be built incrementally: never revisit a decision, only
extend.  The second structural fact says where to look for the extension.

\begin{theorem}[Cyclic-neighbour property; proved in \cref{app:nested}]
\label{thm:neighbour-main}
In such an extension, $x$ and $y$ are consecutive in the cyclic ordering of the
inactive set $A_k^{c}$.
\end{theorem}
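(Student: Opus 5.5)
The plan is to work with a fixed cut and reduce to an exchange argument on the line. By \cref{thm:nested-main}, fix an optimal $M_k$ and an optimal $M_{k+1}$ with $A(M_{k+1})=A(M_k)\cup\{x,y\}$. Among all such optimal extensions, I would choose one that minimises the number of inactive atoms of $A_k^{c}$ lying strictly inside the arc from $x$ to $y$ that carries the net extra flow (defined below). The goal is to show that this number is zero, which is exactly the statement that $x$ and $y$ are consecutive in the cyclic order of $A_k^{c}$.

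The first step is to pass to the flow description. For a fixed active set, the circular cost is the minimum over cuts of the unwrapped line cost (\cref{prop:fixed-cut}). Equivalently, if $F$ is the cumulative count of active sources minus active targets, the cost is $w\min_c\int_{\Sone}|F(t)-c|\,dt$. Adding $x$ and $y$ changes $F$ by $+\ind_{[x,y)}$ on the arc from $x$ to $y$, and this is the same, up to the constant absorbed by $c$, as $-\ind$ on the complementary arc. Let $c^\ast$ be an optimal circulation for $A(M_{k+1})$. I would cut the circle at a point where the lifted optimal plan carries no flow; such a point exists by \cref{prop:fixed-cut}. After this cut, $M_{k+1}$ is an optimal line matching, and $M_k\,\triangle\,M_{k+1}$ can be taken to be a single alternating augmenting path from $x$ to $y$. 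Its net effect is one unit of flow along the arc $I$ from $x$ to $y$ that lies on the chosen side of the cut.

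The core step is an exchange on this line. Suppose an inactive atom $z\in A_k^{c}$ lies strictly inside $I$. If $z$ is a target, I would replace $y$ by $z$, ending the augmenting path at $z$ by truncating the unit of flow at $z$. Pointwise on $I\setminus[x,z)$ this removes one unit of $|F+\ind_I-c^\ast|$ from the integrand. The goal is to show that, for the right orientation, this does not increase the cost. If $z$ is a source, I would symmetrically replace $x$ by $z$ and start the flow at $z$. In either case I would use the two candidate extensions $\{x,z\}$ and $\{z,y\}$ together with a submodularity-type inequality,
\begin{equation*}
\phi(\ind_{[x,z)})+\phi(\ind_{[z,y)}) \le \phi(\ind_{[x,y)})+\phi(0), \qquad \phi(f):=\min_c\int_{\Sone}|F+f-c|\,dt,
\end{equation*}
where the cross term on the right is the already optimal $C_k^\circ$. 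Since $\phi(0)\le\phi(\ind_{[x,z)})$ and $\phi(0)\le\phi(\ind_{[z,y)})$ by optimality of $M_k$, the inequality would force one of the shorter extensions to be at least as good as $\{x,y\}$. That contradicts the minimality in the choice of $(x,y)$.

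I expect the main obstacle to be this inequality. Pointwise convexity of $|\cdot|$ gives it only when both sides use a common circulation $c$. The minimisation over $c$ couples the terms globally, which is precisely the circular difficulty. My first attempt would be to evaluate everything at the single circulation $c^\ast$, which is optimal for $\{x,y\}$, using $\phi(f)\le\int|F+f-c^\ast|$ on the left-hand side. The remaining difficulty is to control the $\phi(0)$ term. For that I would exploit the fact that $c^\ast$ and the optimal circulation $c_k$ of $A(M_k)$ differ by at most one, because $F+\ind_I$ and $F$ differ by a $\{0,1\}$-valued function and the optimal $c$ is a median. If this fails, the fallback is to perform the whole exchange on the single cut from the previous step, where all four active sets are line problems and the classical one-dimensional uncrossing argument (as used for \PAWL{}) applies verbatim.
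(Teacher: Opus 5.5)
There is a genuine gap, and it sits at the step you flag as the main obstacle. You choose the cut, and hence the arc $I$, from a flow-free gap of $A_{k+1}$ alone. Both your inequality and your fallback, however, need a circulation (equivalently a cut) that is optimal for $A_k$ and $A_{k+1}$ at the same time. At a common $c$ the integrand is pointwise additive over the disjoint arcs $[x,z)$ and $[z,y)$, and your inequality follows. At $c^\ast$ optimal for $A_{k+1}$ only, the term $\int|F-c^\ast|$ can be far larger than $\phi(0)$.

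Knowing that the two medians differ by at most one does not repair this. When they differ by one, a common circulation exists only after rewriting $F+\ind_I-1=F-\ind_{I^c}$. The extra flow then runs through the complementary arc, and the claim you are trying to prove about $I$ can be false. Indeed the inequality you wrote is the supermodular direction. But $\phi$ is a partial minimum over $c$ of the jointly submodular map $(f,c)\mapsto\int|F+f-c|$, hence submodular. So your inequality can only ever hold as an equality, which is exactly what a shared optimal $c$ provides.

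Here is a concrete instance: $L=6$, $w=1$, sources at $0,2,3.5$, targets at $0.8,5.1$.
\begin{itemize}
\item At $k=1$, $A_1=\{0,0.8\}$ with $C_1=0.8$.
\item The unique optimum at $k=2$ is $\{0\to5.1,\;2\to0.8\}$ with cost $2.1$. So $x=2$, $y=5.1$, and $z=3.5$ is inactive.
\item The gap $(0,0.8)$ carries no flow for $A_2$, so your rule may cut there. That gives $I=[2,5.1)\ni z$.
\item Your exchange produces $\{z,y\}$ of cost $2.4>2.1$, and your inequality reads $2.3+2.4\le 2.1+0.8$, which is false.
\end{itemize}
Here $x,y$ are neighbours through the other arc, the one carrying $A_1$'s flow. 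Your fallback fails for the same reason. Under this cut $A_1$ has line cost $5.2$, not $0.8$, so $A_k$ is not line-optimal, and the one-dimensional uncrossing argument, which rests on optimality of $A_k$, does not apply.

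What is missing is the paper's common-cut lemma (\cref{lem:common-cut}); your ``differ by at most one'' is only its first half. Start from a median $a$ of $H_{A_k}$. Then either $a$ is also a median of $H_{A_k}+\ind_I$ and $\{H_{A_k}=a\}$ has positive length outside $I$, or $a+1$ is a median of $H_{A_k}+\ind_I$ and $\{H_{A_k}=a\}$ has positive length inside $I$. Either way some gap is flow-free for both active sets. The paper cuts there and observes that $A_k$ and $A_{k+1}$ are then both optimal for the cut line problems, since line costs dominate geodesic ones and the cut realizes the circular optima. It then applies the universal line-neighbour property (\cref{lem:line-locality}), and notes that breaking a cyclic order at one point cannot create adjacencies.

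With that common circulation your own computation also closes, and more strongly than you planned. Take the extra-flow arc to be $I$ or $I^c$ according to the case above. The one of $\{x,z\},\{z,y\}$ that is a like-type pair (not an admissible extension, contrary to your ``two candidate extensions'') still satisfies $\phi\ge\phi(0)+d$ with $d>0$. To see this, delete the edge at the fictitious atom: what remains is a genuine $k$-matching, and the deleted edge has positive length by distinctness of the supports. The additive identity then forces the admissible pair to be strictly cheaper than $\{x,y\}$, a contradiction. This yields the universal statement the appendix proves for every optimal nested extension. Your minimal-counterexample selection would only give existence of one such extension.
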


Deleting the active atoms therefore leaves a circle of inactive atoms, and only the
$O(N)$ adjacent pairs of \emph{opposite type} are ever candidates.  We call the arc
between two consecutive inactive atoms a \emph{current cell}, a cell whose endpoints
have opposite types a \emph{candidate}, and write $\mathfrak m(\mathcal C)$ for its
\emph{local marginal}: the increase in the cell's internal line-transport cost caused by
activating its two endpoints (\cref{def:cell-marginal} in \cref{app:cells}).  \Cref{fig:cell-locality} is
the whole mechanism --- activating a candidate pair rematches the atoms inside its cell
and nothing else, so the increment is computable from that cell alone.

\begin{figure}[t]
\centering
\begin{minipage}[t]{0.285\linewidth}
\centering
\textbf{(a) Candidate cell}\par\medskip
\resizebox{\linewidth}{!}{%
\begin{tikzpicture}[
 scale=.85,font=\small,
 srcpt/.style={circle,draw=src,fill=src,inner sep=1.8pt},
 srcinactive/.style={circle,draw=src,fill=white,inner sep=2.5pt,line width=1pt},
 tgtpt/.style={circle,draw=tgt,fill=tgt,inner sep=1.8pt},
 tgtinactive/.style={circle,draw=tgt,fill=white,inner sep=2.5pt,line width=1pt}
]
\def\R{1.55}
\draw[line width=.8pt] (0,0) circle (\R);
\draw[candidate!70!black,line width=1.5pt] (130:\R) arc[start angle=130,end angle=410,radius=\R];
\node[srcinactive,label={[src]140:$u$}] at (130:\R) {};
\node[tgtinactive,label={[tgt]45:$v$}] at (50:\R) {};
\node[tgtpt,label={[tgt]190:$y_1$}] at (185:\R) {};
\node[srcpt,label={[src]225:$x_1$}] at (225:\R) {};
\node[tgtpt,label={[tgt]280:$y_2$}] at (280:\R) {};
\node[srcpt,label={[src]340:$x_2$}] at (340:\R) {};
\node[candidate!50!black] at (0,0) {$\mathcal C=[u,v]_{\circlearrowright}$};
\end{tikzpicture}}
\end{minipage}\hfill
\begin{minipage}[t]{0.31\linewidth}
\centering
\textbf{(b) Before activation}\par\medskip
\begin{tikzpicture}[
 x=1cm,y=1cm,font=\small,
 srcpt/.style={circle,draw=src,fill=src,inner sep=1.8pt},
 tgtpt/.style={circle,draw=tgt,fill=tgt,inner sep=1.8pt},
 oldmatch/.style={draw=black!55,line width=.9pt}
]
\draw[line width=.7pt] (0,0)--(4.2,0);
\node[tgtpt,label={[tgt]below:$y_1$}] (y1) at (.6,0) {};
\node[srcpt,label={[src]below:$x_1$}] (x1) at (1.6,0) {};
\node[tgtpt,label={[tgt]below:$y_2$}] (y2) at (2.6,0) {};
\node[srcpt,label={[src]below:$x_2$}] (x2) at (3.6,0) {};
\draw[oldmatch] (x1) to[bend right=40] (y1);
\draw[oldmatch] (x2) to[bend right=40] (y2);
\draw[decorate,decoration={brace,amplitude=5pt}] (3.9,-.62)--(.3,-.62) node[midway,below=6pt] {$c_{\mathbb R}(\mathcal C^\circ)$};
\end{tikzpicture}
\end{minipage}\hfill
\begin{minipage}[t]{0.365\linewidth}
\centering
\textbf{(c) After activating $u,v$}\par\medskip
\begin{tikzpicture}[
 x=.88cm,y=1cm,font=\small,
 srcpt/.style={circle,draw=src,fill=src,inner sep=1.8pt},
 tgtpt/.style={circle,draw=tgt,fill=tgt,inner sep=1.8pt},
 oldmatch/.style={draw=black!35,densely dashed,line width=.8pt},
 newmatch/.style={draw=magenta!75!black,line width=1.1pt,-{Stealth[length=1.7mm]}}
]
\draw[line width=.7pt] (0,0)--(5.5,0);
\node[srcpt,label={[src]below:$u$}] (u) at (.2,0) {};
\node[tgtpt,label={[tgt]below:$y_1$}] (y1) at (1.2,0) {};
\node[srcpt,label={[src]below:$x_1$}] (x1) at (2.2,0) {};
\node[tgtpt,label={[tgt]below:$y_2$}] (y2) at (3.2,0) {};
\node[srcpt,label={[src]below:$x_2$}] (x2) at (4.2,0) {};
\node[tgtpt,label={[tgt]below:$v$}] (v) at (5.2,0) {};
\draw[oldmatch] (x1) to[bend right=24] (y1);
\draw[oldmatch] (x2) to[bend right=24] (y2);
\draw[newmatch] (u) to[bend left=42] (y1);
\draw[newmatch] (x1) to[bend left=42] (y2);
\draw[newmatch] (x2) to[bend left=42] (v);
\draw[decorate,decoration={brace,amplitude=5pt}] (5.45,-.62)--(.0,-.62) node[midway,below=6pt] {$c_{\mathbb R}(\mathcal C)$};
\end{tikzpicture}
\[
\mathfrak m(\mathcal C)=c_{\mathbb R}(\mathcal C)-c_{\mathbb R}(\mathcal C^\circ).
\]
\end{minipage}
\caption{A candidate cell and its exact local cost increment.  The inactive endpoints
$u$ and $v$ are cyclic neighbours of opposite type and the open interior
$\mathcal C^{\circ}$ is active and balanced.  Before activation the interior atoms are
matched in sorted order; activating $u,v$ revokes every match inside the cell (grey
dashed) and replaces it by the new sorted matching (magenta), while every match outside
the cell is untouched.  The increment is therefore
$\mathfrak m(\mathcal C)=c_{\mathbb R}(\mathcal C)-c_{\mathbb R}(\mathcal C^{\circ})$,
computable from the atoms of the cell alone --- the circular analogue of \PAWL{}'s
chain update \citep{chapel2025one}.}
\label{fig:cell-locality}
\end{figure}

\subsection{The free-gap invariant: one cut for all $K+1$ problems}

The difficulty specific to the circle is that a local update is only meaningful
relative to a cut, and cells are updated at different times.  The following invariant
is what makes one cut serve all of them.  Call an original support gap \emph{free} if
it lies inside no cell that has been selected so far.

\begin{lemma}[Free-gap invariant; proved in \cref{app:greedy}]
\label{lem:freegap-main}
Before termination, every current cell contains at least one free original gap.
\end{lemma}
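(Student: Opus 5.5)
The proof is by induction on the number of greedy steps. The inductive hypothesis is the statement itself: every current cell contains an original support gap that lies inside no previously selected cell. Throughout I use the combinatorial description of a step that is implicit in \cref{thm:nested-main,thm:neighbour-main} and \cref{fig:cell-locality}. The current cells are the arcs between cyclically consecutive inactive atoms, so they tile $\Sone$ with disjoint interiors, each being a union of original gaps. A step selects one candidate cell $\mathcal C=\arc{u}{v}$ and activates its endpoints $u,v$.

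\emph{Base case.} Before any step all atoms are inactive. The current cells are exactly the $N$ original gaps, and no cell has yet been selected, so each cell is itself a free gap.

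\emph{Effect of one step.} Let $a$ be the inactive predecessor of $u$ and $b$ the inactive successor of $v$ in the cyclic order of $A_k^c$. Write $\mathcal L=\arc{a}{u}$ and $\mathcal R=\arc{v}{b}$ for the neighbouring cells. After $u,v$ become active, the three cells $\mathcal L,\mathcal C,\mathcal R$ merge into the single cell $\arc{a}{b}$. Every other current cell is unchanged, and the only newly used gaps are those inside $\mathcal C$.

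\emph{Inductive step.} An unchanged cell keeps its free gap. That gap lies outside $\mathcal C$ because cells have disjoint interiors, so it is still free. For the merged cell, take the free gap guaranteed in $\mathcal L$ by the hypothesis, or in $\mathcal R$. That gap is not inside $\mathcal C$ and was inside no earlier selected cell, so it is still free and it lies in $\arc{a}{b}$. Freeness is monotone: a gap that is free now was free at every earlier time. Hence the only new constraint at each step is avoidance of the single newly selected cell, and the cell-tiling argument above handles it.

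\emph{Main obstacle: degenerate configurations.} The delicate part is checking that $\mathcal L$ or $\mathcal R$ is a genuine cell distinct from $\mathcal C$ whenever the process has not terminated. This requires counting inactive atoms. Before termination $k<K$, so at least one inactive source and one inactive target remain. I will split on the number $r=N-2k$ of inactive atoms before the step.

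If $r\ge 4$, then $a\neq v$ and $b\neq u$. Both $\mathcal L$ and $\mathcal R$ are distinct from $\mathcal C$, and the argument above applies.

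If $r=3$, then $a=b$. The merged cell is the whole circle minus nothing, namely $\mathcal L\cup\mathcal C\cup\mathcal R$. It still contains the free gap from $\mathcal L$.

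If $r=2$, the step activates the last two inactive atoms and so reaches $k=K$, i.e.\ termination. The claim is therefore vacuous there. Moreover $r$ cannot reach $1$ before termination, since that would leave only one atom type available while $k<K$.

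Verifying these cases carefully, with the cyclic conventions for $\arc{\cdot}{\cdot}$, is where I expect most of the care to go.
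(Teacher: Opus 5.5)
Your proof is correct and follows the paper's own argument: induction over selections, with the used label attached permanently to the fixed original gaps, unaffected cells keeping their free gap, and the merged cell $\arc{a}{b}$ inheriting a free gap from $\mathcal L$ or $\mathcal R$ because those lie outside the newly selected $\mathcal C$. Your count of inactive atoms spells out what the paper's phrase ``if the algorithm does not terminate'' leaves implicit: the only degenerate step is $r=2$, where $\mathcal L=\mathcal R$, and that is exactly the final step. Your argument for $r\ge 3$ never uses $k+1<K$, so it also covers the terminal state when $n\neq m$, which is where the simultaneous-cut theorem later takes its representative gap.
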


\begin{figure}[t]
\centering
\begin{minipage}[t]{0.48\linewidth}
\centering
\textbf{(a) Before selecting $[u,v]_{\circlearrowright}$}\par\medskip
\resizebox{\linewidth}{!}{%
\begin{tikzpicture}[
 x=1cm,y=1cm,font=\small,
 srcpt/.style={circle,draw=src,fill=src,inner sep=1.6pt},
 srcinactive/.style={circle,draw=src,fill=white,inner sep=2.5pt,line width=1pt},
 tgtpt/.style={circle,draw=tgt,fill=tgt,inner sep=1.6pt},
 tgtinactive/.style={circle,draw=tgt,fill=white,inner sep=2.5pt,line width=1pt},
 free/.style={draw=freegap,line width=3.2pt,line cap=round},
 rep/.style={draw=freegap!60!black,line width=.6pt,rounded corners=1.5pt}
]
\begin{scope}
\clip (2.45,.82) rectangle (4.75,2.0);
\fill[candidate!18] (0,1.2) .. controls (2.4,1.9) and (4.8,1.9) .. (7.2,1.2) -- (7.2,.82) -- (0,.82) -- cycle;
\end{scope}
\draw[line width=.9pt] (0,1.2) .. controls (2.4,1.9) and (4.8,1.9) .. (7.2,1.2);
\begin{scope}
\clip (2.45,.5) rectangle (4.75,2.0);
\draw[candidate!70!black,line width=1.5pt] (0,1.2) .. controls (2.4,1.9) and (4.8,1.9) .. (7.2,1.2);
\end{scope}
\node[tgtinactive,label={[tgt]above:$p$}] at (.5,1.36) {};
\node[srcinactive,label={[src]above:$u$}] at (2.45,1.66) {};
\node[tgtinactive,label={[tgt]above:$v$}] at (4.75,1.66) {};
\node[srcinactive,label={[src]above:$q$}] at (6.7,1.36) {};
\node[srcpt] at (1.15,1.52) {}; \node[tgtpt] at (1.65,1.60) {};
\node[tgtpt] at (3.05,1.74) {}; \node[srcpt] at (4.05,1.74) {};
\node[srcpt] at (5.55,1.58) {}; \node[tgtpt] at (6.05,1.48) {};
\draw[decorate,decoration={brace,amplitude=5pt}] (.5,.78)--(2.45,.78) node[midway,below=6pt] {$[p,u]_{\circlearrowright}$};
\draw[decorate,decoration={brace,amplitude=5pt},draw=candidate!70!black] (2.45,.78)--(4.75,.78) node[midway,below=6pt,text=candidate!50!black] {$[u,v]_{\circlearrowright}$};
\draw[decorate,decoration={brace,amplitude=5pt}] (4.75,.78)--(6.7,.78) node[midway,below=6pt] {$[v,q]_{\circlearrowright}$};
\draw[free] (1.3,-.12)--(1.75,-.12);  \draw[rep] (1.17,-.27) rectangle (1.88,.03);
\draw[free] (3.32,-.12)--(3.77,-.12); \draw[rep] (3.19,-.27) rectangle (3.9,.03);
\draw[free] (5.5,-.12)--(5.95,-.12);  \draw[rep] (5.37,-.27) rectangle (6.08,.03);
\node[freegap!60!black,font=\scriptsize] at (1.52,-.5) {rep.};
\node[freegap!60!black,font=\scriptsize] at (3.55,-.5) {rep.};
\node[freegap!60!black,font=\scriptsize] at (5.72,-.5) {rep.};
\end{tikzpicture}}
\par\scriptsize Every current cell has a balanced active interior and stores one representative free gap (boxed).  Shaded: the selected candidate cell.
\end{minipage}\hfill
\begin{minipage}[t]{0.48\linewidth}
\centering
\textbf{(b) Activate $u,v$ and merge the three cells}\par\medskip
\resizebox{\linewidth}{!}{%
\begin{tikzpicture}[
 x=1cm,y=1cm,font=\small,
 srcpt/.style={circle,draw=src,fill=src,inner sep=1.6pt},
 srcinactive/.style={circle,draw=src,fill=white,inner sep=2.5pt,line width=1pt},
 tgtpt/.style={circle,draw=tgt,fill=tgt,inner sep=1.6pt},
 tgtinactive/.style={circle,draw=tgt,fill=white,inner sep=2.5pt,line width=1pt},
 free/.style={draw=freegap,line width=3.2pt,line cap=round},
 used/.style={draw=black!35,line width=3.2pt,line cap=round},
 rep/.style={draw=freegap!60!black,line width=.6pt,rounded corners=1.5pt}
]
\begin{scope}
\clip (.5,.82) rectangle (6.7,2.0);
\fill[candidate!18] (0,1.2) .. controls (2.4,1.9) and (4.8,1.9) .. (7.2,1.2) -- (7.2,.82) -- (0,.82) -- cycle;
\end{scope}
\draw[line width=.9pt] (0,1.2) .. controls (2.4,1.9) and (4.8,1.9) .. (7.2,1.2);
\begin{scope}
\clip (.5,.5) rectangle (6.7,2.0);
\draw[candidate!70!black,line width=1.5pt] (0,1.2) .. controls (2.4,1.9) and (4.8,1.9) .. (7.2,1.2);
\end{scope}
\node[tgtinactive,label={[tgt]above:$p$}] at (.5,1.36) {};
\node[srcpt,label={[src]above:$u$}] at (2.45,1.66) {};
\node[tgtpt,label={[tgt]above:$v$}] at (4.75,1.66) {};
\node[srcinactive,label={[src]above:$q$}] at (6.7,1.36) {};
\node[srcpt] at (1.15,1.52) {}; \node[tgtpt] at (1.65,1.60) {};
\node[tgtpt] at (3.05,1.74) {}; \node[srcpt] at (4.05,1.74) {};
\node[srcpt] at (5.55,1.58) {}; \node[tgtpt] at (6.05,1.48) {};
\draw[decorate,decoration={brace,amplitude=5pt},draw=candidate!70!black] (.5,.78)--(6.7,.78) node[midway,below=6pt,text=candidate!50!black] (pq) {$[p,q]_{\circlearrowright}$};
\draw[free] (1.3,-.12)--(1.75,-.12);  \draw[rep] (1.17,-.27) rectangle (1.88,.03);
\node[freegap!60!black,font=\scriptsize] at (1.52,-.5) {free (stored)};
\draw[-{Stealth[length=1.6mm]},freegap!60!black,line width=.6pt] (pq.west) to[bend right=18] (1.95,.05);
\draw[used] (3.32,-.12)--(3.77,-.12); \node[black!55,font=\scriptsize] at (3.55,-.5) {used};
\draw[free] (5.5,-.12)--(5.95,-.12);  \node[freegap,font=\scriptsize] at (5.72,-.5) {free (not stored)};
\draw[black!45,densely dashed] (2.45,1.55)--(2.45,.85);
\draw[black!45,densely dashed] (4.75,1.55)--(4.75,.85);
\end{tikzpicture}}
\par\scriptsize $u,v$ become active interior atoms and leave the inactive list.  Only the gaps \emph{inside} the selected cell become used.  Both flanking arcs keep their free gaps; the merged cell adopts one of them as its stored representative (here the left) and simply does not track the other.  Since $p,q$ have opposite types (as here), the new cell enters the heap.
\end{minipage}
\caption{Merging two cells after an activation.  The new cell inherits a free gap from
one of its parents, so \cref{lem:freegap-main} is maintained at $O(1)$ cost, and that
gap is a cut at which every update made so far is a valid line update.}
\vspace{-.2in}
\label{fig:merge-freegap}
\end{figure}

A free gap in a cell is a cut outside every previously selected cell, so unwrapping
there leaves all earlier local computations valid; and when two cells merge, the new
cell inherits a free gap from a parent (\cref{fig:merge-freegap}), so the invariant is
maintained in constant time.  Combined with a pairwise common-cut lemma
(\cref{lem:common-cut} in \cref{app:nested}), which produces one cut valid for two
competing cells at once,
this yields the greedy rule.

\begin{theorem}[Exact greedy choice; proved in \cref{app:greedy}]
\label{thm:greedy-main}
If the current active set $A_k$ is circularly optimal and was produced by successive
cell selections, then
$C_{k+1}^{\circ}-C_k^{\circ}=\min_{\mathcal C\in\mathfrak C_k}\mathfrak m(\mathcal C)$,
where $\mathfrak C_k$ is the set of current candidate cells, and activating the
endpoints of any minimiser yields a circularly optimal $A_{k+1}$.
\end{theorem}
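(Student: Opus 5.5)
We prove the two inequalities $C_{k+1}^{\circ}-C_k^{\circ}\ge\min_{\mathcal C\in\mathfrak C_k}\mathfrak m(\mathcal C)$ and $\le$ separately. The second inequality yields the optimality of the activated set.

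\emph{Lower bound.} By \cref{thm:nested-main} applied to an optimal matching on $A_k$, some optimal $(k+1)$-matching has active set $A_k\cup\{x,y\}$. By \cref{thm:neighbour-main}, $x,y$ are cyclic neighbours in $A_k^{c}$. Hence they are the endpoints of some current candidate cell $\mathcal C^\star\in\mathfrak C_k$. It remains to show that the circular cost of $A_k\cup\{x,y\}$ is at least $C_k^{\circ}+\mathfrak m(\mathcal C^\star)$; this gives $C_{k+1}^{\circ}\ge C_k^{\circ}+\min_{\mathcal C}\mathfrak m(\mathcal C)$. For this we need a single cut at which three things hold simultaneously: the circular cost of $A_k\cup\{x,y\}$ equals its unwrapped line cost, the circular cost $C_k^{\circ}$ of $A_k$ equals its line cost, and the line costs decompose cell by cell. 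On the line, a balanced active set is matched in sorted order within each balanced block, so activating the endpoints of a cell changes only that cell's contribution, by exactly $\mathfrak m(\mathcal C^\star)$, provided the cut lies outside $\mathcal C^\star$. By \cref{prop:fixed-cut} the circular cost of a fixed active set is the minimum of line costs over gaps. So we need the \emph{optimal} cut for $A_k\cup\{x,y\}$ to also be compatible with the cell decomposition used to define $\mathfrak m$.

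\emph{Main obstacle and approach.} This compatibility is where \cref{lem:freegap-main} and \cref{lem:common-cut} come in, and it is the step I expect to be hardest. The free-gap invariant says that any free gap $g$ in a current cell lies outside every previously selected cell. Unwrapping at $g$, the successive cell selections that built $A_k$ are all valid line updates, so the line cost at $g$ equals the accumulated sum of marginals, namely $C_k^{\circ}$. Thus $A_k$ is optimally cut at any free gap. For the extension, I would invoke \cref{lem:common-cut} to produce one cut $g'$ that is optimal for $A_k\cup\{x,y\}$ and lies outside $\mathcal C^\star$. Such a cut keeps $A_k$'s line cost at most its value at a free gap, hence equal to $C_k^{\circ}$, since $C_k^{\circ}$ is a minimum. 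Then
\[
C_{k+1}^{\circ}=\mathrm{line}_{g'}(A_k\cup\{x,y\})=\mathrm{line}_{g'}(A_k)+\mathfrak m(\mathcal C^\star)\ge C_k^{\circ}+\mathfrak m(\mathcal C^\star).
\]
The delicate point is showing that the optimal cut can be chosen outside $\mathcal C^\star$, i.e.\ in a gap where the flow of the sorted matching of $A_k\cup\{x,y\}$ is extremal. I would argue this from the circulation view: the optimal cut is where the cumulative signed count attains its median or extremum, and activating one balanced cell only shifts the cumulative count inside that cell. One then compares with the cut that is optimal for $A_k$.

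\emph{Upper bound and optimality of $A_{k+1}$.} Take a minimiser $\mathcal C$ and let $g$ be a free gap in a current cell different from $\mathcal C$, if one exists. Otherwise, take a free gap of $\mathcal C$'s neighbour after merging, as in \cref{fig:merge-freegap}, using the fact that the flanking arcs keep their free gaps. Unwrapping at $g$ gives a feasible line matching on $A_k\cup\partial\mathcal C$ of cost $C_k^{\circ}+\mathfrak m(\mathcal C)$, since the line cost at $g$ of $A_k$ is $C_k^{\circ}$ and the update is local to $\mathcal C$. The circular cost is at most this line cost, so $C_{k+1}^{\circ}\le C_k^{\circ}+\mathfrak m(\mathcal C)$. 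Combined with the lower bound, equality holds and the activated set is circularly optimal. Moreover, it is again produced by successive cell selections, so the hypothesis propagates.
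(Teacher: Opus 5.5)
Your decomposition is the paper's: the bound $C_{k+1}^\circ-C_k^\circ\le\mathfrak m(\mathcal C)$ via a free gap in another current cell plus cut compatibility, and the reverse bound via \cref{thm:nested}, the cyclic-neighbour theorem and \cref{lem:common-cut}. The upper-bound half is fine. A different current cell always exists (the complementary directed cell when only two inactive atoms remain), so your ``otherwise'' branch is never needed.

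The gap is the step you flag and leave open. \cref{lem:common-cut} gives a gap that is flow-free for both $A_k$ and $A_k\cup\{x,y\}$, but says nothing about whether it lies outside $\mathcal C^\star$. Without that, the identity $\mathrm{line}(A_k\cup\{x,y\})=\mathrm{line}(A_k)+\mathfrak m(\mathcal C^\star)$ is unavailable. The circulation argument you sketch cannot supply it, because it never uses that $A_k\cup\{x,y\}$ is an \emph{optimal} extension. Median bookkeeping alone really does allow the shared gap to sit inside the cell; the proof of \cref{lem:common-cut} itself may land in $E\cap I$. For instance, start from $A_0=\varnothing$ and activate the endpoints of a candidate cell consisting of a single gap of length $\ell>L/2$. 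The cumulative imbalance of the new set is nonzero only on that gap, which carries more than half the circumference. So that gap is the only flow-free one, it lies inside the cell, and the circular increment $w(L-\ell)$ differs from $\mathfrak m=w\ell$. This extension is not optimal when other atoms exist, but nothing in the median reasoning can tell.

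What rules this out for $\mathcal C^\star$ is optimality, used through the line problem. First show that both sets are line-optimal at the common cut. Every line matching costs at least its circular cost, hence at least $C_k^\circ$ (resp.\ $C_{k+1}^\circ$), and the common cut attains these values. This is also what makes \cref{lem:line-locality}, and hence your locality step, applicable at that cut. Then apply the universal line-neighbour property of \cref{lem:line-locality} to the nested pair $A_k\subset A_k\cup\{x,y\}$ at that cut: $x$ and $y$ must be consecutive in the \emph{linear} inactive order. A cut in the interior of $\mathcal C^\star$ would put them at the two ends of that order, which is impossible when more than two inactive atoms remain. When exactly two remain, both directed cells between $x$ and $y$ are candidates, and you must take $\mathcal C^\star$ to be the one whose interior avoids the cut. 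Your proposal does not address this case either.
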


\begin{theorem}[Simultaneous optimal cut; proved in \cref{app:greedy}]
\label{thm:cut-main}
There is an original support gap $\theta^{\star}$ with
$C_{k,\theta^{\star}}^{\mathrm{line}}=C_k^{\circ}$ for every $k=0,\dots,K$, and the
greedy construction produces one.
\end{theorem}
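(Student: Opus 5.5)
Since $C_{k,\theta}^{\mathrm{line}}\ge C_k^{\circ}$ holds for every gap $\theta$ and every $k$ (unwrapping only restricts transport to paths avoiding $\theta$; this is the easy half of \cref{prop:fixed-cut}), it suffices to exhibit one gap $\theta^\star$ with $C_{k,\theta^\star}^{\mathrm{line}}\le C_k^{\circ}$ for all $k$. The plan is to take $\theta^\star$ to be a free gap that survives to the end of the greedy run. By \cref{lem:freegap-main}, every current cell contains a free original gap before termination. The last step of the construction ($k=K-1\to K$) is a selection from a state in which cells still exist. After it, some current cell remains unless all atoms are active; if all are active (the case $n=m$), I would instead take the representative stored just before the final selection in a cell not selected at that step. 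If only one cell exists at that moment, I will argue from the selected cell's structure that it contains a gap outside all earlier cells. In either case the result is a gap $\theta^\star$ that lies inside no cell selected at steps $1,\dots,K$ (or at worst at step $K$ only, handled separately).

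The key step is to show, by induction on $k$, that the matching $M_k$ built by the greedy is a union of sorted line matchings of the selected cells, with transport paths that never cross $\theta^\star$. The reason is that each selected cell is an arc not containing $\theta^\star$, and within it the atoms are matched in sorted order along that arc, as in \cref{fig:cell-locality}. So $M_k$ is feasible for the line problem unwrapped at $\theta^\star$, with each pair's line distance equal to the length of its arc, which bounds its circular cost from above. Since \cref{thm:greedy-main} certifies that $w\sum_{M_k}\dcirc=C_k^\circ$, we need the arc length of each pair to equal its geodesic distance. This need not hold pairwise, so I would argue at the level of costs instead. The greedy increments are $\mathfrak m(\mathcal C)$, which by \cref{def:cell-marginal} are computed as line costs within the cell. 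Hence the quantity the greedy tracks, namely the total line cost of $M_k$ at cut $\theta^\star$, equals $\sum_{j<k}\min\mathfrak m=C_k^\circ$ by telescoping \cref{thm:greedy-main}. It follows that $C^{\mathrm{line}}_{k,\theta^\star}\le C_k^\circ$.

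The main obstacle is consistency. The marginals in \cref{thm:greedy-main} are defined via line costs inside cells, but we must confirm that these coincide with the unwrapped-at-$\theta^\star$ line costs simultaneously for all cells selected at different times. Since cells grow by merging and $\theta^\star$ lies in no selected cell, every selected cell is a contiguous interval of the line cut at $\theta^\star$, and its internal sorted cost is intrinsic to the arc. I would check carefully that merges (\cref{fig:merge-freegap}) preserve this, and that the final-step edge case where $\theta^\star$ might be consumed does not arise. For the latter I would use the pairwise common-cut lemma (\cref{lem:common-cut}) together with the fact that the stored representative of the unselected neighbouring cell is untouched.
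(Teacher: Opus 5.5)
Your proposal is correct and follows the paper's proof. You take $\theta^\star$ to be a free-gap representative that no selection ever absorbs (in the $n=m$ endgame, the representative of the complementary directed cell that is not selected). You then use cut compatibility: every selected cell is an interval avoiding $\theta^\star$, so the line cost of $A_k$ at $\theta^\star$ telescopes to $\sum_r\mathfrak m(\mathcal C_r)=C_k^\circ$ by the greedy theorem, and you close with the trivial reverse inequality.

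Your hedges can be dropped. A one-cell state never occurs before a selection, since two inactive atoms always bound two directed cells. The common-cut lemma is not needed at the last step, because the unselected cell's representative lies outside the selected cell by construction.
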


\cref{thm:cut-main} is what removes the outer loop over cuts (\cref{fig:simultaneous-cut} in \cref{app:greedy}
illustrates it): rather than search $N$
cuts and take an envelope, the algorithm \emph{constructs} a gap that is optimal for
every cardinality, as a by-product of the free-gap bookkeeping it already does.  Once
$\theta^\star$ is known, every plan is recovered by sorting the active atoms of each
type in the unwrapped order and pairing them (\cref{app:plans}).  The theorem is not a
convenience: cutting once at a gap that is optimal for the full-mass problem and reusing
it is strictly suboptimal at some cardinality on $13.5\%$ of $5600$ exact instances, and
when it fails the relative excess reaches $1.8\times10^{3}$; other one-cut rules also
fail (\cref{sec:numerics-cut}). 

\section{The \texorpdfstring{\PAWC{}}{PAWC} algorithm: $O(1)$ marginals, $O(N\log N)$ overall}
\label{sec:algorithm}

It remains to evaluate candidate marginals cheaply.  Following \PAWL{}'s chain
machinery, we duplicate the cyclic support into a doubled sequence of length $2N$ so
that every arc becomes an interval, and precompute prefix arrays of differential ranks
and signed positions.  A balanced interval's minimal-chain cost is then a single table
lookup, and the marginal of a candidate cell is calculated via:

\begin{corollary}[Constant-time candidate marginal; proved in \cref{app:constant}]
\label{cor:marginal-main}
For a candidate cell represented by the balanced doubled interval $[a,b]$,
$\mathfrak m([a,b])=(Q_b-Q_{a-1})-(Q_{b-1}-Q_a)$, with the second term zero when the
interior is empty.  Hence each candidate marginal costs $O(1)$ after preprocessing.
\end{corollary}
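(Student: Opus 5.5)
The plan is to reduce the corollary to the prefix-array identity for balanced intervals that the preceding paragraph already invokes: a balanced interval of the doubled sequence has its minimal-chain (sorted-matching) line cost equal to a difference of two prefix values. Concretely, I would take as the base fact that for any balanced doubled interval $[s,t]$ with $t-s+1\le N$,
\[
c_{\mathbb R}([s,t]) \;=\; Q_t - Q_{s-1}.
\]
Here $c_{\mathbb R}$ is the cost of matching the atoms of $[s,t]$ in sorted order along the lifted coordinates. The corollary should then follow from \cref{def:cell-marginal} by applying this identity twice, once to the whole cell and once to its interior.

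\textbf{Step 1: the lift is faithful.} I would first check that the doubled representation computes the cost the definition refers to. A current cell $[u,v]_{\circlearrowright}$ is a proper arc of the circle. It therefore occurs as a contiguous interval $[a,b]$ of the doubled sequence with $b-a+1\le N$, where positions in the second copy are shifted by $L$. On this interval the lifted coordinates are strictly increasing (by the distinctness in \cref{ass:standing-main}) and reproduce the arc-length order of the arc. Hence the sorted line matching of the cell's atoms, and its cost, coincide with the cell's internal line cost $c_{\mathbb R}(\mathcal C)$ from \cref{def:cell-marginal}. The same holds for the open interior $\mathcal C^\circ$, which corresponds to $[a+1,b-1]$.

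\textbf{Step 2: apply the identity to both intervals.} A candidate cell has an active, balanced interior and endpoints $u,v$ of opposite type, so $[a,b]$ is balanced. Removing one source and one target from a balanced interval keeps it balanced, so $[a+1,b-1]$ is balanced as well. The base identity then gives
\[
c_{\mathbb R}(\mathcal C)=Q_b-Q_{a-1}, \qquad c_{\mathbb R}(\mathcal C^\circ)=Q_{b-1}-Q_a.
\]
Subtracting as in \cref{def:cell-marginal} yields the claimed formula. In the empty-interior case $b=a+1$, the second term is $Q_a-Q_a=0$, consistent with the convention stated in the corollary. Each marginal then uses four lookups into arrays of length $2N$, which are built in $O(N)$ after sorting; this gives the $O(1)$ cost per candidate.

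\textbf{Main obstacle.} I expect the main difficulty to be confirming that the prefix identity really applies to \emph{both} intervals, and not only to intervals aligned with some reference point of the prefix arrays. If $Q$ is built from differential ranks $D_t$, the true cost is $\sum_{r=s}^{t-1}|D_r-D_{s-1}|\,(z_{r+1}-z_r)$. This telescopes into $Q_t-Q_{s-1}$ only if $D_r-D_{s-1}$ has constant sign on the interval, or if the signed positions in $Q$ are weighted by a sign pattern that is invariant under shifting the start point.

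I would check this using the structure of candidate cells. The interior is a concatenation of balanced chains produced by earlier greedy selections, which \PAWL{} shows are of minimal type. Adding the endpoints $u$ and $v$ shifts all interior relative ranks by $\pm1$ in the same direction. This should give a uniform sign for $D_r-D_{a-1}$ on $[a,b]$ and for $D_r-D_a$ on $[a+1,b-1]$.

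If that argument fails, the fallback is to prove the identity directly for the interior. I would write the cost of $[a,b]$ as the interior cost plus the two new boundary terms contributed by $u$ and $v$, so that the two prefix differences agree term by term.
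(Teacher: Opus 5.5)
Your Steps 1--2 are exactly the paper's proof. \cref{lem:balanced-interiors}, together with the opposite-type endpoints, makes both $[a,b]$ and $[a+1,b-1]$ balanced. \cref{prop:balanced-query} is then applied to each interval, and \cref{def:cell-marginal} subtracts the two costs. Once \cref{prop:balanced-query} is taken as the base fact, the argument is complete at that point.

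Drop the ``main obstacle'' paragraph: the worry in it is misplaced, and the remedy you propose is false.

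\emph{Why the worry is misplaced.} The sign condition you describe is what the plain signed difference $w|S_b-S_{a-1}|$ would need. $Q$ is built through $p_t$ precisely so that it does not need it.
\begin{itemize}
\item Since $R_{a-1}=R_b$ and the rank walk moves by $\pm1$, the predecessor chain $b\to p_b\to p_{p_b}\to\cdots$ lands exactly on $a-1$.
\item This chain cuts $[a,b]$ into consecutive excursions at the common level. Each excursion has constant relative sign, and the sorted matching never pairs atoms across two excursions.
\item $Q$ adds the excursion costs $w|S_t-S_{p_t}|$, with a separate absolute value for each excursion.
\end{itemize}
No global sign condition is involved, and neither is any property of how the cell was formed.

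\emph{Why the proposed remedy is false.} Relative ranks on a candidate cell need not have uniform sign.
\begin{itemize}
\item In the worked example of \cref{app:demo}, the candidate cell $[y_1,x_4]$ is the doubled interval $[2,7]$. There $R_r-R_1=-1,0,1,0,-1,0$, yet $Q_7-Q_1=5.6=2.5+0.5+2.6$ is exactly its sorted cost.
\item Previously selected cells are balanced but are generally not minimal chains. An interior produced by two earlier selections of opposite orientation, such as $y\,x\,x\,y$, therefore has mixed relative sign.
\end{itemize}
Had you actually relied on this argument, the proof would have broken. Relying on \cref{prop:balanced-query} as stated is both sufficient and correct.
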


\begin{algorithm}[tb]
\caption{\PAWC{}: all exact partial $W_1$ costs on the circle}
\label{alg:pawc-main}
\begin{algorithmic}[1]
\Require Circular source points $x_1,\dots,x_n$, target points $y_1,\dots,y_m$, weight $w$, circumference $L$
\Ensure Costs $C_0^{\circ},\dots,C_K^{\circ}$, activation ranks, simultaneous cut $\theta^\star$
\State Sort the union support cyclically and form labels $\sigma_1,\dots,\sigma_N$
\State Build the doubled sequence and precompute the prefix tables $R,S,p,Q$ (\cref{app:constant})
\State Initialize a circular doubly linked list containing all support indices
\State For each initial directed cell, store its unique original gap as its free-gap representative
\State Insert every opposite-endpoint initial cell into a min-heap keyed by $\mathfrak m$ (\cref{cor:marginal-main})
\State $C_0^{\circ}\gets 0$
\For{$k=0,1,\dots,K-1$}
    \Repeat
        \State Pop the minimum heap entry $(u,v,\mathfrak m)$
    \Until{$u,v$ are inactive, $v=\operatorname{succ}(u)$, and $u,v$ have opposite types}
    \State Record activation rank $k+1$ for $u$ and $v$
    \State $C_{k+1}^{\circ}\gets C_k^{\circ}+\mathfrak m$
    \State Let $p\gets\operatorname{pred}(u)$ and $q\gets\operatorname{succ}(v)$ before deletion
    \If{only $u,v$ are inactive}
        \State $\theta^\star\gets$ free-gap representative of the complementary cell $\arc{v}{u}$
        \State Remove $u,v$ and \textbf{continue}
    \EndIf
    \State Remove $u,v$ and link $p$ directly to $q$
    \State Let the new cell $\arc{p}{q}$ inherit a free-gap representative from $\arc{p}{u}$ or $\arc{v}{q}$
    \If{$p$ and $q$ have opposite types}
        \State Compute $\mathfrak m(\arc{p}{q})$ in $O(1)$ and insert the new candidate into the heap
    \EndIf
\EndFor
\If{$\theta^\star$ has not been set}
    \State Choose it as the free-gap representative of any remaining current cell
\EndIf
\State \Return $\{C_k^{\circ}\}_{k=0}^{K}$, activation ranks, $\theta^\star$
\end{algorithmic}
\end{algorithm}

\begin{theorem}[Correctness and complexity; proved in \cref{app:complexity}]
\label{thm:complexity-main}
Under \cref{ass:standing-main}, \cref{alg:pawc-main} returns the exact $C_k^{\circ}$ and
an optimal active set for every $k$, together with a gap $\theta^{\star}$ valid for all
$k$, in $O(N\log N)$ time and $O(N)$ memory.
\end{theorem}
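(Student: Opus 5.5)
The proof splits into correctness, which reduces to the structural theorems already stated, and complexity, which is a data-structure accounting argument.

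\textbf{Correctness.} We argue by induction on $k$. The invariant is that after $k$ iterations of the outer loop:
\begin{itemize}
\item the recorded value equals $C_k^\circ$;
\item the set $A_k$ of atoms with activation rank at most $k$ is circularly optimal and was produced by successive cell selections;
\item the linked list contains exactly $A_k^c$ in cyclic order;
\item every current cell stores a free original gap as its representative;
\item the heap contains a valid entry for every current candidate cell.
\end{itemize}

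The base case $k=0$ holds by initialization. Each initial cell $\arc{u}{\operatorname{succ}(u)}$ contains exactly one original gap, which is trivially free, and $C_0^\circ=0$.

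For the inductive step, the lazy-deletion loop discards stale entries. An entry is stale if an endpoint is active or if the two endpoints are no longer adjacent. An entry that passes all three tests corresponds to a current candidate cell. Its stored key equals that cell's marginal, because a cell's interior, and hence $\mathfrak m$ by \cref{cor:marginal-main}, is determined by its endpoints once those endpoints are adjacent in the list. Every current candidate was inserted when it was created, so the popped entry attains $\min_{\mathcal C\in\mathfrak C_k}\mathfrak m(\mathcal C)$. \cref{thm:greedy-main} then gives $C_{k+1}^\circ=C_k^\circ+\mathfrak m$ and optimality of $A_{k+1}$. \cref{thm:nested-main,thm:neighbour-main} justify that restricting the search to adjacent opposite-type pairs loses nothing.

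The new cell $\arc{p}{q}$ inherits a representative from $\arc{p}{u}$ or $\arc{v}{q}$. That gap lies outside the newly selected cell $\arc{u}{v}$, and by induction it lies outside all earlier selected cells, so it is still free. This is exactly how \cref{lem:freegap-main} is maintained, and it also keeps the heap condition intact.

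Finally, $\theta^\star$ is the representative of the last remaining (or complementary) cell. That gap was never enclosed by any selected cell, so it is free at termination. \cref{thm:cut-main} then gives $C^{\mathrm{line}}_{k,\theta^\star}=C_k^\circ$ for all $k$, and the plans follow by sorted pairing after cutting at $\theta^\star$ (\cref{app:plans}).

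\textbf{Complexity.}
\begin{itemize}
\item Sorting costs $O(N\log N)$.
\item The doubled prefix tables $R,S,p,Q$ are built by linear scans in $O(N)$.
\item The linked list and the initial $N$ cells cost $O(N)$.
\item Each iteration deletes two list nodes and creates at most one new heap entry, so the total number of heap insertions is at most $N+K\le 2N$.
\item Each pop removes an entry permanently, so the total number of pops, including stale ones, is at most $2N$. Each heap operation costs $O(\log N)$, giving $O(N\log N)$ overall.
\item Each marginal costs $O(1)$ by \cref{cor:marginal-main}, and inheriting a representative costs $O(1)$.
\item Memory is $O(N)$: arrays of length $2N$, a list of size $N$, and a heap of size at most $2N$.
\end{itemize}

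\textbf{Main obstacle.} I expect the delicate step to be the claim that a non-stale popped entry has a correct key. The marginal is computed from a doubled interval $[a,b]$, and the interval's interior must coincide with the cell's current interior. If a cell $\arc{p}{q}$ were destroyed and a cell with the same endpoints recreated, the interior would be identical anyway, since both consist of all atoms strictly between $p$ and $q$. Staleness therefore only matters through activity and adjacency, and the three-condition test suffices. I would state this point explicitly and choose the doubled-index representation of the arc $\arc{p}{q}$ by taking $b$ as the first copy of $q$ after $a$. This keeps $[a,b]$ balanced in the interior, as \cref{cor:marginal-main} requires.
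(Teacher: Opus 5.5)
Your proposal is correct and takes essentially the same route as the paper. Correctness follows by induction from the exact greedy theorem, with free-gap inheritance supplying $\theta^\star$ through the simultaneous-cut theorem; complexity follows from $O(N)$ heap entries, each popped once at $O(\log N)$, while key evaluation, list surgery and inheritance are $O(1)$. Your explicit invariants, and the observation that a surviving entry's key is fixed by its still-adjacent endpoints, spell out what the paper's brief proof leaves implicit.
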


The loop is a textbook greedy with lazy deletion: at most $N$ initial candidates and at
most one new candidate per activation, so $O(N)$ heap entries in total, each popped
once, valid or stale, at $O(\log N)$; everything else --- key evaluation, list surgery,
free-gap inheritance --- is $O(1)$.  Sorting dominates.

\section{Slicing to the sphere}
\label{sec:sphere}

Directional data in $d$ dimensions lives on $\Sph$, where exact OT is expensive.  Slicing
replaces it by an average over one-dimensional projections; on the sphere the natural
projections are onto great circles \citep{bonet2023spherical}.  For $U$ in the Stiefel
manifold $V_{d,2}$, the geodesic projection $P^{U}(x)=UU^{\top}x/\|U^{\top}x\|$ maps
$\Sph$ onto the great circle spanned by $U$, and reading the result in the circle's
own coordinate turns each slice into an instance of \eqref{eq:profile} with $L=2\pi$.

\begin{figure}[tb]
\centering
\input{fig_spherical_slicing.tex}
\caption{Spherical slicing, after \citet[Figure~1]{bonet2023spherical}.  (a) atoms of
$\mu$ (blue) and $\nu$ (orange) on $\Sph$, the great circle $\mathcal C_U$ (dashed
behind the sphere), and the geodesic projection of each onto it.  (b) the circular
problem that slice induces, with a cardinality-$3$ matching and a free gap
$\theta^{\star}$.  \PAWC{} returns the optimal matching for \emph{every} $k$ in one
$O(N\log N)$ sweep per slice.}
\vspace{-.2in}
\label{fig:spherical-slicing}
\end{figure}

\begin{definition}[Sliced spherical partial Wasserstein; \cref{app:sphere}]
\label{def:sspw-main}
$\mathrm{SSPW}(s;\mu,\nu)=\int_{V_{d,2}}\PW_{\circ}(s;P^{U}_{\#}\mu,P^{U}_{\#}\nu)\,
d\sigma(U)$, estimated by the average over $M$ i.i.d.\ slices.
\end{definition}

\Cref{fig:spherical-slicing} shows one slice.  Because \PAWC{} returns the whole profile
per slice, one pass over $M$ slices gives the estimate at \emph{every} transported mass, at cost $O(M(N\log N + dN))$.  The estimator
concentrates uniformly over the profile: $M\ge(\pi^{2}K^{2}w^{2}/2\varepsilon^{2})
\log(2(K+1)/\delta)$ slices suffice for $\varepsilon$-accuracy on the entire curve with
probability $1-\delta$ (\cref{prop:sspw-concentration} in \cref{app:sphere}), and at $s=Kw$ the construction
collapses to spherical sliced Wasserstein, so the partial version is a strict
generalisation of the balanced one.

\section{Experiments}
\label{sec:experiments}

\begin{wrapfigure}[11]{r}{0.48\textwidth}
\vspace{-5.7\baselineskip}
\centering
\centering
\begin{tikzpicture}[
  font=\small,
  every node/.style={inner sep=1pt},
  ax/.style={draw=black!70,line width=.7pt},
  grid/.style={draw=black!12,line width=.4pt},
  meas/.style={line width=1.1pt},
  extrap/.style={line width=1.0pt,densely dashed}
]
\draw[grid] (0.000,0) -- (0.000,2.760);
\draw[grid] (1.120,0) -- (1.120,2.760);
\draw[grid] (2.240,0) -- (2.240,2.760);
\draw[grid] (3.360,0) -- (3.360,2.760);
\draw[grid] (4.480,0) -- (4.480,2.760);
\draw[grid] (0,0.000) -- (4.480,0.000);
\draw[grid] (0,0.230) -- (4.480,0.230);
\draw[grid] (0,0.460) -- (4.480,0.460);
\draw[grid] (0,0.690) -- (4.480,0.690);
\draw[grid] (0,0.920) -- (4.480,0.920);
\draw[grid] (0,1.150) -- (4.480,1.150);
\draw[grid] (0,1.380) -- (4.480,1.380);
\draw[grid] (0,1.610) -- (4.480,1.610);
\draw[grid] (0,1.840) -- (4.480,1.840);
\draw[grid] (0,2.070) -- (4.480,2.070);
\draw[grid] (0,2.300) -- (4.480,2.300);
\draw[grid] (0,2.530) -- (4.480,2.530);
\draw[grid] (0,2.760) -- (4.480,2.760);
\draw[ax] (0,0) rectangle (4.480,2.760);
\draw[ax] (0.000,0) -- (0.000,-.09);
\node[below] at (0.000,-.12) {$10^{2}$};
\draw[ax] (1.120,0) -- (1.120,-.09);
\node[below] at (1.120,-.12) {$10^{3}$};
\draw[ax] (2.240,0) -- (2.240,-.09);
\node[below] at (2.240,-.12) {$10^{4}$};
\draw[ax] (3.360,0) -- (3.360,-.09);
\node[below] at (3.360,-.12) {$10^{5}$};
\draw[ax] (4.480,0) -- (4.480,-.09);
\node[below] at (4.480,-.12) {$10^{6}$};
\draw[ax] (0,0.230) -- (-.09,0.230);
\node[left] at (-.12,0.230) {$10^{-4}$};
\draw[ax] (0,0.690) -- (-.09,0.690);
\node[left] at (-.12,0.690) {$10^{-2}$};
\draw[ax] (0,1.150) -- (-.09,1.150);
\node[left] at (-.12,1.150) {$10^{0}$};
\draw[ax] (0,1.610) -- (-.09,1.610);
\node[left] at (-.12,1.610) {$10^{2}$};
\draw[ax] (0,2.070) -- (-.09,2.070);
\node[left] at (-.12,2.070) {$10^{4}$};
\draw[ax] (0,2.530) -- (-.09,2.530);
\node[left] at (-.12,2.530) {$10^{6}$};
\node[below] at (2.240,-.62) {$N=n+m$};
\node[rotate=90,above] at (-0.78,1.380) {time (s), full profile for all $k$};
\draw[meas,transportcol] plot coordinates {(0.000,0.142) (0.534,0.164) (1.120,0.259) (1.654,0.386) (2.240,0.516) (2.774,0.634) (3.360,0.767) (3.894,0.895) (4.480,1.049)};
\filldraw[draw=transportcol,fill=transportcol,line width=.6pt] (0.000,0.142) circle (1.7pt);
\filldraw[draw=transportcol,fill=transportcol,line width=.6pt] (0.534,0.164) circle (1.7pt);
\filldraw[draw=transportcol,fill=transportcol,line width=.6pt] (1.120,0.259) circle (1.7pt);
\filldraw[draw=transportcol,fill=transportcol,line width=.6pt] (1.654,0.386) circle (1.7pt);
\filldraw[draw=transportcol,fill=transportcol,line width=.6pt] (2.240,0.516) circle (1.7pt);
\filldraw[draw=transportcol,fill=transportcol,line width=.6pt] (2.774,0.634) circle (1.7pt);
\filldraw[draw=transportcol,fill=transportcol,line width=.6pt] (3.360,0.767) circle (1.7pt);
\filldraw[draw=transportcol,fill=transportcol,line width=.6pt] (3.894,0.895) circle (1.7pt);
\filldraw[draw=transportcol,fill=transportcol,line width=.6pt] (4.480,1.049) circle (1.7pt);
\draw[meas,src] plot coordinates {(0.000,0.224) (0.534,0.287) (1.120,0.401) (1.654,0.516) (2.240,0.646) (2.774,0.767) (3.360,0.928) (3.894,1.075) (4.480,1.226)};
\filldraw[draw=src,fill=src,line width=.6pt] (0.000,0.224) circle (1.7pt);
\filldraw[draw=src,fill=src,line width=.6pt] (0.534,0.287) circle (1.7pt);
\filldraw[draw=src,fill=src,line width=.6pt] (1.120,0.401) circle (1.7pt);
\filldraw[draw=src,fill=src,line width=.6pt] (1.654,0.516) circle (1.7pt);
\filldraw[draw=src,fill=src,line width=.6pt] (2.240,0.646) circle (1.7pt);
\filldraw[draw=src,fill=src,line width=.6pt] (2.774,0.767) circle (1.7pt);
\filldraw[draw=src,fill=src,line width=.6pt] (3.360,0.928) circle (1.7pt);
\filldraw[draw=src,fill=src,line width=.6pt] (3.894,1.075) circle (1.7pt);
\filldraw[draw=src,fill=src,line width=.6pt] (4.480,1.226) circle (1.7pt);
\draw[meas,tgt] plot coordinates {(0.000,0.603) (0.534,0.804) (1.120,1.045) (1.654,1.271) (2.240,1.515) (2.774,1.737)};
\draw[extrap,tgt!70] plot coordinates {(2.774,1.737) (3.360,1.989) (3.894,2.217) (4.480,2.467)};
\filldraw[draw=tgt!70,fill=white,line width=.6pt] (3.360,1.989) +(-1.55pt,-1.55pt) rectangle +(1.55pt,1.55pt);
\filldraw[draw=tgt!70,fill=white,line width=.6pt] (3.894,2.217) +(-1.55pt,-1.55pt) rectangle +(1.55pt,1.55pt);
\filldraw[draw=tgt!70,fill=white,line width=.6pt] (4.480,2.467) +(-1.55pt,-1.55pt) rectangle +(1.55pt,1.55pt);
\filldraw[draw=tgt,fill=tgt,line width=.6pt] (0.000,0.603) +(-1.55pt,-1.55pt) rectangle +(1.55pt,1.55pt);
\filldraw[draw=tgt,fill=tgt,line width=.6pt] (0.534,0.804) +(-1.55pt,-1.55pt) rectangle +(1.55pt,1.55pt);
\filldraw[draw=tgt,fill=tgt,line width=.6pt] (1.120,1.045) +(-1.55pt,-1.55pt) rectangle +(1.55pt,1.55pt);
\filldraw[draw=tgt,fill=tgt,line width=.6pt] (1.654,1.271) +(-1.55pt,-1.55pt) rectangle +(1.55pt,1.55pt);
\filldraw[draw=tgt,fill=tgt,line width=.6pt] (2.240,1.515) +(-1.55pt,-1.55pt) rectangle +(1.55pt,1.55pt);
\filldraw[draw=tgt,fill=tgt,line width=.6pt] (2.774,1.737) +(-1.55pt,-1.55pt) rectangle +(1.55pt,1.55pt);
\draw[meas,freegap] plot coordinates {(0.000,0.995) (0.337,1.182) (0.534,1.301) (0.872,1.525) (1.120,1.691)};
\draw[extrap,freegap!70] plot coordinates {(1.120,1.691) (2.240,2.381) (2.774,2.711)};
\filldraw[draw=freegap!70,fill=white,line width=.6pt] (2.240,2.381) +(0,2.1pt) -- +(2.1pt,0) -- +(0,-2.1pt) -- +(-2.1pt,0) -- cycle;
\filldraw[draw=freegap!70,fill=white,line width=.6pt] (2.774,2.711) +(0,2.1pt) -- +(2.1pt,0) -- +(0,-2.1pt) -- +(-2.1pt,0) -- cycle;
\filldraw[draw=freegap,fill=freegap,line width=.6pt] (0.000,0.995) +(0,2.1pt) -- +(2.1pt,0) -- +(0,-2.1pt) -- +(-2.1pt,0) -- cycle;
\filldraw[draw=freegap,fill=freegap,line width=.6pt] (0.337,1.182) +(0,2.1pt) -- +(2.1pt,0) -- +(0,-2.1pt) -- +(-2.1pt,0) -- cycle;
\filldraw[draw=freegap,fill=freegap,line width=.6pt] (0.534,1.301) +(0,2.1pt) -- +(2.1pt,0) -- +(0,-2.1pt) -- +(-2.1pt,0) -- cycle;
\filldraw[draw=freegap,fill=freegap,line width=.6pt] (0.872,1.525) +(0,2.1pt) -- +(2.1pt,0) -- +(0,-2.1pt) -- +(-2.1pt,0) -- cycle;
\filldraw[draw=freegap,fill=freegap,line width=.6pt] (1.120,1.691) +(0,2.1pt) -- +(2.1pt,0) -- +(0,-2.1pt) -- +(-2.1pt,0) -- cycle;
\draw[<->,black!55,line width=.6pt] (4.180,1.049) -- (4.180,2.467);
\node[anchor=east,black!55,font=\scriptsize,align=left] at (4.380,0.499) {$\approx$\,$1.5\cdot 10^{6}\times$};
\draw[meas,transportcol] (0.000,-1.120) -- (0.520,-1.120);
\filldraw[draw=transportcol,fill=transportcol,line width=.6pt] (0.260,-1.120) circle (1.7pt);
\node[right,font=\scriptsize] at (0.580,-1.120) {PAWC (compiled)};
\draw[meas,src] (2.720,-1.120) -- (3.240,-1.120);
\filldraw[draw=src,fill=src,line width=.6pt] (2.980,-1.120) circle (1.7pt);
\node[right,font=\scriptsize] at (3.300,-1.120) {PAWC (array)};
\draw[meas,tgt] (0.000,-1.520) -- (0.520,-1.520);
\filldraw[draw=tgt,fill=tgt,line width=.6pt] (0.260,-1.520) +(-1.55pt,-1.55pt) rectangle +(1.55pt,1.55pt);
\node[right,font=\scriptsize] at (0.580,-1.520) {Baseline B (cuts)};
\draw[meas,freegap] (2.720,-1.520) -- (3.240,-1.520);
\filldraw[draw=freegap,fill=freegap,line width=.6pt] (2.980,-1.520) +(0,2.1pt) -- +(2.1pt,0) -- +(0,-2.1pt) -- +(-2.1pt,0) -- cycle;
\node[right,font=\scriptsize] at (3.300,-1.520) {Baseline A (LP)};
\end{tikzpicture}
\vspace{-.17in}
\setlength{\emergencystretch}{2em}%
\caption{Measured cost of the complete profile.  Baseline B is the cut-envelope
solver, Baseline A the LP oracle; dashed segments are extrapolated ($N^{2}\log N$,
$N^{3}$).  Fitted log--log slopes: $1.03$--$1.29$ for \PAWC{} against $2.00$ for B.}
\label{fig:timings}
\vspace{-1.3\baselineskip}
\end{wrapfigure}

We report that \PAWC{} computes what it claims, at the stated cost, and that the profile
is the object applications want.  Full protocols, seeds, and the negative results we
found along the way are in \cref{app:numerics,app:apps}; code and records accompany the
submission.

\subsection{Exactness and scaling}
\label{sec:exp-exact}

Across $504$ instances spanning nine adversarial families (clustered, near-antipodal,
seam-straddling, tiny-arc, and others) \PAWC{} matches an LP oracle over the
cardinality-$k$ matching polytope to $4.7\times10^{-15}$ and the cut-envelope solver
exactly, and every structural invariant --- nestedness, the free-gap invariant, cell
balance, feasibility, and the validity of $\theta^{\star}$ --- holds at every step of
every instance (\cref{app:numerics}).

\Cref{fig:timings} shows the cost of the entire profile.  The cut-envelope baseline fits
a log--log slope of $2.00$; \PAWC{} fits $1.03$--$1.29$, with the excess over $1$
attributable to cache effects rather than a hidden term: heap operation counts are
exactly linear ($0.81N$ pops, $0.31N$ pushes at $N=10^{4},10^{5},10^{6}$ alike), and the
published line algorithm \PAWL{} measures $1.12$--$1.25$ on the same hardware.  In
absolute terms, one solve returning \emph{all} transported fractions takes $0.09$\,ms at
$N=256$ and $0.56$\,ms at $N=2048$, against $4.4$\,ms and $1.54$\,s for a \emph{single}
fraction from POT's general partial solver \citep{flamary2021pot}, and $30$\,ms and
$11.4$\,s for the seven fractions used in \cref{sec:exp-shapes}.  The sweep over the
transported fraction is free.

\subsection{Partial matching of shape descriptors}
\label{sec:exp-shapes}

The outward normal directions of a closed planar curve, sampled uniformly in arclength,
form a measure on $\Sone$ --- the two-dimensional extended Gaussian image
\citep{horn1984extended}.  It is invariant to translation, scale and starting point and
rotates rigidly with the shape, so comparing shapes up to rotation is alignment on the
circle, and its two failure modes are exactly the ones partial transport addresses:
\emph{occlusion} deletes part of the measure, \emph{clutter} adds mass belonging to
nothing.  It keeps no part structure, so it is far weaker than the descriptors built for
this benchmark, which score $0.75$--$0.85$ bullseye \citep[Table~II]{ling2007shape}; what
follows is not an entry in that benchmark, but a comparison of costs on one fixed
descriptor.

\begin{figure}[t]
\centering
\resizebox{\textwidth}{!}{\input{fig_shapes.tex}}
\vspace{-0.5\baselineskip}
\caption{Partial matching of shape descriptors on \textsc{mpeg-7}.  (a) one query: the
visible arc of the boundary (green), the occluded remainder (grey), and the
normal-angle measure the matcher sees, with clutter in red.  (b) orientation success
against clutter at half the boundary visible and (c) retrieval bullseye against the
visible fraction at $20\%$ clutter, for \PAWC{} at a fixed $\rho=0.8$, full circular OT,
and the extended-Gaussian-image correlation matcher.  OT fails on clutter;
correlation fails on occlusion.}
\label{fig:shapes}
\end{figure}

\begin{table}[t]
\centering
\caption{Shape matching under occlusion and clutter on all $1400$ \textsc{mpeg-7}
silhouettes: $v$ is the visible fraction of the boundary, $c$ the clutter fraction.
\PAWC{} transports $\rho=1-c$, so the $c=0$ rows \emph{are} full circular OT and tie by
construction.  Orientation and correspondence average $212$ shapes per cell, retrieval
$138$ queries against the full database.  ``corr.'' is the correlation matcher at its
best of three bandwidths; ``NN'' is mutual nearest neighbours in normal angle.}
\label{tab:shapes}
\small
\begin{tabular}{ccccccccccc}
\toprule
 & & \multicolumn{3}{c}{orientation ($<5^{\circ}$)}
   & \multicolumn{3}{c}{retrieval (bullseye)}
   & \multicolumn{3}{c}{correspondence ($F_1$)}\\
\cmidrule(lr){3-5}\cmidrule(lr){6-8}\cmidrule(lr){9-11}
$v$ & $c$ & \PAWC{} & full OT & corr. & \PAWC{} & full OT & corr. & \PAWC{} & full OT & NN\\
\midrule
$1.00$ & $0.0$ & $\mathbf{1.00}$ & $\mathbf{1.00}$ & $\mathbf{1.00}$ & $\mathbf{0.457}$ & $\mathbf{0.457}$ & $0.384$ & $\mathbf{1.00}$ & $\mathbf{1.00}$ & $0.77$ \\
$1.00$ & $0.2$ & $\mathbf{1.00}$ & $0.62$ & $0.98$ & $\mathbf{0.302}$ & $0.071$ & $0.198$ & $\mathbf{0.84}$ & $0.21$ & $0.69$ \\
$1.00$ & $0.4$ & $\mathbf{1.00}$ & $0.45$ & $0.92$ & $\mathbf{0.118}$ & $0.038$ & $0.077$ & $\mathbf{0.66}$ & $0.12$ & $0.53$ \\
$0.70$ & $0.0$ & $\mathbf{1.00}$ & $\mathbf{1.00}$ & $0.96$ & $\mathbf{0.337}$ & $\mathbf{0.337}$ & $0.126$ & $\mathbf{0.94}$ & $\mathbf{0.94}$ & $0.77$ \\
$0.70$ & $0.2$ & $\mathbf{1.00}$ & $0.83$ & $0.90$ & $\mathbf{0.201}$ & $0.106$ & $0.066$ & $\mathbf{0.81}$ & $0.45$ & $0.69$ \\
$0.70$ & $0.4$ & $\mathbf{0.99}$ & $0.62$ & $0.73$ & $\mathbf{0.104}$ & $0.039$ & $0.050$ & $\mathbf{0.68}$ & $0.29$ & $0.56$ \\
$0.50$ & $0.0$ & $\mathbf{1.00}$ & $\mathbf{1.00}$ & $0.85$ & $\mathbf{0.228}$ & $\mathbf{0.228}$ & $0.052$ & $\mathbf{0.91}$ & $\mathbf{0.91}$ & $0.77$ \\
$0.50$ & $0.2$ & $\mathbf{1.00}$ & $0.83$ & $0.73$ & $\mathbf{0.141}$ & $0.089$ & $0.044$ & $\mathbf{0.79}$ & $0.58$ & $0.68$ \\
$0.50$ & $0.4$ & $\mathbf{0.94}$ & $0.67$ & $0.57$ & $\mathbf{0.081}$ & $0.048$ & $0.037$ & $\mathbf{0.69}$ & $0.41$ & $0.57$ \\
\bottomrule
\end{tabular}

\end{table}

\paragraph{Protocol.}
All $1400$ silhouettes of \textsc{mpeg-7} \textsc{ce-shape-1} \citep{latecki2000shape};
contours traced at the $0.5$ level, resampled to $128$ points, converted to normal
angles.  A query keeps a contiguous fraction $v$ of the arclength, replaces a fraction
$c$ of the survivors by uniform normals, and is rotated by a uniformly random angle.
Every method scores the same grid of $360$ rotations; only the cost differs.  \PAWC{}
transports $\rho=1-c$: the mass the task declares genuine.  Three tasks: recover the
rotation (success $<5^{\circ}$); rank the other $1399$ shapes and report the
\textsc{mpeg-7} bullseye score; and score the pairs of the recovered plan against the
arclength indices the query was built from ($F_1$; clutter samples have no correct
partner).

\paragraph{Result.}
\Cref{tab:shapes} separates the two failure modes.  The clean cell fixes the scale:
uncontaminated and fully visible, this descriptor retrieves at $0.457$, and on this
representation no cost can do better.  Occlusion alone costs full OT little --- with
$n\neq m$ the balanced problem already transports only $K=\min(n,m)$ atoms, so size
asymmetry supplies the needed partiality --- but clutter is fatal to it: at full
visibility and $c=0.2$ its bullseye falls to $0.071$, $16\%$ of that ceiling, where
\PAWC{} holds $0.302$, or $66\%$ of it --- a factor of $4.3$ ($+0.231\pm0.053$ paired,
ahead on $95$ of $138$ queries), and at $c=0.4$ its orientation success falls to $0.45$
where \PAWC{} is still at $1.00$.  The correlation matcher fails the other way round,
collapsing from $0.384$ to $0.052$ in bullseye as $v$ falls from $1$ to $0.5$, because a
truncated density correlates best where the template has most mass rather than where the
visible piece belongs.  \PAWC{} at $\rho=1-c$ is at least as good as both baselines in
all nine cells and strictly better in the six with clutter, and a \emph{single} fixed
$\rho=0.8$ used everywhere still beats both wherever there is clutter.  An unbalanced
(KL-relaxed) Sinkhorn cost at its best of three relaxation weights ties \PAWC{} on nearly
complete queries and falls behind as the boundary disappears ($0.71$ against $0.85$ at
$v=0.35$, $c=0.4$), with no single weight competitive across cells and at $20\times$ the
cost (\cref{app:apps}).

\subsection{Robust fitting on \texorpdfstring{$\mathbb{S}^{2}$}{S2}}
\label{sec:exp-sphere}

\begin{figure}[t]
\centering
\input{fig_sphere_maps.tex}
\caption{Fitting $500$ particles to the \textsc{usgs} earthquake catalogue whose
minibatches are contaminated at $\varepsilon=0.3$, over a Natural Earth coastline.  The blue background is a kernel estimate of the
catalogue's own density --- the target the particles should cover.
\textsc{ssw} ($\gamma=1$) scatters particles across the Pacific and the continental
interiors, where no earthquakes are; the partial fit stays on the ridges and
subduction zones.}
\label{fig:sphere}
\end{figure}

\begin{table}[t]
\centering
\caption{Spherical fitting against a contaminated target, eight seeds per cell.  Energy
distance to clean target samples, $\times10^{-3}$.  \textsc{sspw} transports the rule
fraction $\gamma=1/(1-\varepsilon)$ rounded to the grid, so the $\varepsilon=0$ row
\emph{is} \textsc{ssw}; ``gain'' is the reduction from \textsc{ssw}, ``wins'' counts
seeds ahead, and ``best'' is the gain at the best $\gamma$ of six on the grid, which at
$\varepsilon=0$ is a selection over noise.}
\label{tab:sphere}
\small
\begin{tabular}{ccccccccccc}
\toprule
 & \multicolumn{5}{c}{synthetic vMF mixture}
 & \multicolumn{5}{c}{earthquake catalogue}\\
\cmidrule(lr){2-6}\cmidrule(lr){7-11}
$\varepsilon$ & \textsc{ssw} & \textsc{sspw} & gain & wins & best & \textsc{ssw} & \textsc{sspw} & gain & wins & best\\
\midrule
$0.0$ & $0.50$ & $\mathbf{0.50}$ & $0\%$ & $0/8$ & $14\%$ & $0.79$ & $\mathbf{0.79}$ & $0\%$ & $0/8$ & $0\%$ \\
$0.1$ & $1.97$ & $\mathbf{0.79}$ & $60\%$ & $8/8$ & $70\%$ & $3.23$ & $\mathbf{1.15}$ & $64\%$ & $8/8$ & $77\%$ \\
$0.2$ & $7.13$ & $\mathbf{2.30}$ & $68\%$ & $8/8$ & $68\%$ & $10.25$ & $\mathbf{3.05}$ & $70\%$ & $8/8$ & $70\%$ \\
$0.3$ & $15.92$ & $\mathbf{6.09}$ & $62\%$ & $8/8$ & $63\%$ & $21.86$ & $\mathbf{10.35}$ & $53\%$ & $8/8$ & $53\%$ \\
$0.4$ & $26.40$ & $\mathbf{14.46}$ & $45\%$ & $8/8$ & $50\%$ & $36.40$ & $\mathbf{31.42}$ & $14\%$ & $7/8$ & $40\%$ \\
\bottomrule
\end{tabular}

\end{table}

We use $\mathrm{SSPW}$ as a training loss and compare against the method it generalises,
spherical sliced Wasserstein (\textsc{ssw}), which is its $\rho=1$ case.  One design
point matters: when fitting a model, partiality must sit on the \emph{target} side.  A
trimmed model particle receives no gradient and never moves again --- and those are
precisely the badly placed particles, since they are the expensive ones to match.  We
therefore draw target batches of size $m=\gamma n$, $\gamma\ge1$, and transport
$k=K=n$: every particle moves, and the $1-1/\gamma$ of the target batch the matching
declines to use is what partial transport buys.

$500$ particles from a uniform start, $32$ slices redrawn per step, $2000$ Adam steps,
gradients taken with the matching held fixed (the envelope theorem) and retracted to the
sphere; a fraction $\varepsilon$ of every target batch is replaced by a uniform point.
Quality is the geodesic energy distance to $4000$ \emph{clean} target points, which
shares no implementation with the training loss.  Two targets: a five-component von
Mises--Fisher mixture with unequal masses and concentrations, and $94\,953$ earthquake
epicentres of magnitude $\ge4.5$ from the \textsc{usgs} catalogue, 2010--2024
\citep{usgs_comcat}, held out $80/20$.

\begin{wrapfigure}{r}{0.44\textwidth}
\vspace{-2.0\baselineskip}
\centering
\centering
\tikzset{ax/.style={draw=black!70,line width=.7pt},
  grid/.style={draw=black!12,line width=.4pt},
  land/.style={fill=black!8,draw=black!22,line width=.25pt}}
\begin{tikzpicture}[font=\small]
\draw[grid] (0.000,0.000) -- (4.420,0.000);
\draw[ax] (0.000,0.000) -- (-0.080,0.000);
\node[left,font=\scriptsize] at (-0.100,0.000) {0.0};
\draw[grid] (0.000,0.813) -- (4.420,0.813);
\draw[ax] (0.000,0.813) -- (-0.080,0.813);
\node[left,font=\scriptsize] at (-0.100,0.813) {0.5};
\draw[grid] (0.000,1.627) -- (4.420,1.627);
\draw[ax] (0.000,1.627) -- (-0.080,1.627);
\node[left,font=\scriptsize] at (-0.100,1.627) {1.0};
\draw[grid] (0.000,2.440) -- (4.420,2.440);
\draw[ax] (0.000,2.440) -- (-0.080,2.440);
\node[left,font=\scriptsize] at (-0.100,2.440) {1.5};
\draw[ax] (0.000,0) -- (0.000,-.08);
\node[below,font=\scriptsize] at (0.000,-.10) {1.00};
\draw[ax] (1.423,0) -- (1.423,-.08);
\node[below,font=\scriptsize] at (1.423,-.10) {1.25};
\draw[ax] (2.281,0) -- (2.281,-.08);
\node[below,font=\scriptsize] at (2.281,-.10) {1.43};
\draw[ax] (4.420,0) -- (4.420,-.08);
\node[below,font=\scriptsize] at (4.420,-.10) {2.00};
\draw[ax] (0.000,0) rectangle (4.420,2.440);
\node[below,font=\scriptsize] at (2.210,-.50) {target-batch ratio $\gamma$};
\node[rotate=90,font=\scriptsize] at (-0.820,1.220) {energy distance / \textsc{ssw}};
\node[font=\small] at (2.210,2.740) {synthetic vMF mixture on $\mathbb{S}^{2}$};
\draw[draw=black!25,line width=.5pt,dashed] (0.000,1.627) -- (4.420,1.627);
\draw[draw=black!55,line width=1.0pt] (0.000,1.627) -- (0.608,1.401) -- (1.423,2.440) -- (2.281,2.440) -- (3.270,2.440) -- (4.420,2.440);
\fill[fill=black!55] (0.000,1.627) circle (0.05);
\fill[fill=black!55] (0.608,1.401) circle (0.05);
\fill[fill=black!55] (1.423,2.440) circle (0.05);
\fill[fill=black!55] (2.281,2.440) circle (0.05);
\fill[fill=black!55] (3.270,2.440) circle (0.05);
\fill[fill=black!55] (4.420,2.440) circle (0.05);
\draw[draw=src,line width=1.2pt] (0.000,1.627) -- (0.608,1.019) -- (1.423,0.524) -- (2.281,0.554) -- (3.270,0.835) -- (4.420,0.935);
\fill[fill=src] (0.000,1.627) circle (0.05);
\fill[fill=src] (0.608,1.019) circle (0.05);
\fill[fill=src] (1.423,0.524) circle (0.05);
\fill[fill=src] (2.281,0.554) circle (0.05);
\fill[fill=src] (3.270,0.835) circle (0.05);
\fill[fill=src] (4.420,0.935) circle (0.05);
\draw[draw=popcol,line width=1.3pt] (0.000,1.627) -- (0.608,1.391) -- (1.423,1.041) -- (2.281,0.820) -- (3.270,0.895) -- (4.420,1.264);
\fill[fill=popcol] (0.000,1.627) circle (0.05);
\fill[fill=popcol] (0.608,1.391) circle (0.05);
\fill[fill=popcol] (1.423,1.041) circle (0.05);
\fill[fill=popcol] (2.281,0.820) circle (0.05);
\fill[fill=popcol] (3.270,0.895) circle (0.05);
\fill[fill=popcol] (4.420,1.264) circle (0.05);
\draw[draw=black!55,line width=1.0pt] (2.370,2.140) -- (2.790,2.140);
\node[right,font=\scriptsize] at (2.870,2.140) {$\varepsilon=0.0$};
\draw[draw=src,line width=1.2pt] (2.370,1.820) -- (2.790,1.820);
\node[right,font=\scriptsize] at (2.870,1.820) {$\varepsilon=0.2$};
\draw[draw=popcol,line width=1.3pt] (2.370,1.500) -- (2.790,1.500);
\node[right,font=\scriptsize] at (2.870,1.500) {$\varepsilon=0.4$};
\end{tikzpicture}
\vspace{-.1in}
\setlength{\emergencystretch}{2em}%
\caption{Energy distance against $\gamma$ on the synthetic mixture, each curve divided
by its own $\gamma=1$ value.  The optimum moves right as contamination grows; the clean
curve leaves the panel at once.}
\label{fig:sphere-curve}
\vspace{-0.6\baselineskip}
\end{wrapfigure}

\Cref{fig:sphere-curve} shows the shape of the trade-off: on clean data any trimming
costs accuracy, and as contamination grows the optimum migrates towards
$1/(1-\varepsilon)$ and the curve flattens, so the exact value of $\gamma$ matters less
than being on the right side of $1$.  \Cref{tab:sphere} gives the same picture on both
targets: from $\varepsilon=0.1$ upward the rule
cuts the energy distance to the clean target by $45$--$68$ per cent on the synthetic
mixture and $53$--$70$ per cent on the catalogue, every seed agreeing, while the
$\varepsilon=0$ row --- where \textsc{ssw} is best --- is the control
that licenses reading this as robustness rather than a smaller effective step
(\cref{fig:sphere} shows the difference geographically).  The one cell where the rule
underperforms is the catalogue at $\varepsilon=0.4$: with $40\%$ of a heavily clustered
target replaced by uniform noise, some noise lands on the fault lines and trimming
\emph{less} than the nominal contamination is better.

\section{Discussion}
\label{sec:discussion}

\PAWC{} computes the complete partial-transport profile on the circle in the same
asymptotic cost as sorting, with no search over cuts, and returns a single cut valid for
every cardinality.  Three limitations bound the result.  Equal atomic weights make every
activation add exactly one source and one target; non-uniform masses would require
event-driven handling of saturating capacities.  The constant-time chain oracle relies
on the $W_1$ geodesic cost; $p>1$ and general Monge costs need separate analysis.  Degeneracies --- coincident supports, antipodal ties,
equal candidate marginals --- are handled by deterministic tie-breaking or symbolic
perturbation rather than intrinsically.
On the application side the transported fraction is set by a rule rather than inferred;
\cref{app:apps} reports what we tried and where inference failed.

\subsubsection*{Reproducibility statement}
\cref{app:numerics,app:apps} give the full protocols, seeds, machine and package
versions for every number reported.  The reference and optimised solvers, the LP and
cut-envelope baselines, the invariant checks, all experiment scripts and the raw records
accompany the submission; each figure and table in this paper is generated from those
records by a script, not transcribed.

\subsubsection*{Acknowledgments}
This work was supported by NSF CAREER Award \#2339898 and NSF DMS Award \#2603774.
\newpage
\clearpage
\bibliographystyle{iclr2026_conference}
\bibliography{references}

\newpage 
\clearpage
\appendix
\crefalias{section}{appendix}
\crefalias{subsection}{appendix}
\crefalias{subsubsection}{appendix}

\section*{Appendix overview}

\cref{app:demo} comes first and runs \cref{alg:pawc-main} by hand on one instance,
showing the heap, the linked list and the free-gap bookkeeping at every activation: it is
the concrete account of what the algorithm does, and the mechanisms it exercises are the
ones the rest of the appendix proves correct.
\cref{app:setup,app:background,app:nested,app:cells,app:greedy,app:constant,%
app:complexity,app:plans} then contain the full development and every proof, in the order
in which the main text uses them: notation and the cardinality-$k$ formulation; the line
structure inherited from \PAWL{} and the cut envelope; nested active sets, the common-cut
lemma and the cyclic-neighbour theorem; cells and their marginals; the free-gap
invariant, the exact greedy theorem and the constructive simultaneous cut; the
doubled-sequence preprocessing that makes candidate marginals $O(1)$; correctness and
complexity; and plan recovery with the interpolation for non-integer masses.
\cref{app:sphere} develops the spherical construction and its concentration bound.

\cref{app:numerics} reports the numerical validation --- solvers and trust hierarchy,
exactness against an LP oracle, the structural invariants, the complexity study, the
one-cut experiment of \cref{sec:numerics-cut} and the spherical estimator's behaviour.
\cref{app:apps} gives the protocols for the two applications in the main text, the
unbalanced-transport comparison, the in-situ exactness check, and the application
families we piloted and dropped.



\section{A worked example, step by step}
\label{app:demo}

\label{sec:demo}

This appendix runs \cref{alg:pawc-main} by hand on a single instance, so that every
mechanism the paper introduces can be seen firing on concrete numbers before any of it
is proved.  Notation is the main text's; the statements the trace exercises ---
\cref{lem:free-gap,thm:simultaneous-cut,prop:balanced-query} --- are proved in the
sections that follow it.  The mechanisms are: a
wrap-around selection, a merged-cell selection with matching revocation, lazy heap
skips, free-gap inheritance including the consumption of a selected cell's own
representative, and the two-directed-cell endgame.  A standalone copy is kept at
\texttt{demos/pawc\_demo.tex}.
We run PAWC on the circle of circumference $L=12$ with unit weights $w=1$ and geodesic
cost $\dcirc(u,v)=\min(|u-v|,\,L-|u-v|)$:
\[
\begin{aligned}
x&=\{0.4,\;4.1,\;4.5,\;8.0\} &&\text{(blue circles, sources)},\\
y&=\{1.6,\;5.0,\;5.4,\;11.4\} &&\text{(orange circles, targets)}.
\end{aligned}
\]
The cyclic type pattern $x\,y\,x\,x\,y\,y\,x\,y$ matches the PAWL walkthrough --- but the
wrap-around gap $(y_4,x_1)$ across $0$ is now short ($1.0$), and this changes everything:
the optimal partial plan will \emph{use} the wrap at $k=2,3$ and \emph{abandon} it at
$k=4$. This shifting circulation is precisely why the circle does not reduce to a single
line problem, and what PAWC's machinery (cells, free gaps, the simultaneous cut) is built
to tame. The instance is chosen so that every mechanism fires: a wrap-around selection, a
merged-cell selection with matching revocation, four lazy heap skips, free-gap
inheritance (including consumption of a selected cell's own representative), and the
two-directed-cell endgame.

\subsection*{Step 0 --- Input on $S^1$, and the naive baseline}

Cutting the circle open at any of the $N=8$ inter-atom gaps and unrolling gives a line
instance; the circular optimum is the lower envelope over cuts,
$C^\circ_k=\min_{r} C^{\text{line}}_{k,r}$ (the cut envelope). Running PAWL once per cut
costs $O(N^2\log N)$. PAWC produces the same profile --- and a single cut $\theta^\star$
valid for \emph{all} $k$ at once (\cref{thm:simultaneous-cut}) --- in one $O(N\log N)$ sweep. (Atom
spacing in the circle diagrams is schematic, spread for legibility; the gray coordinates
are the truth.)

\begin{center}
\begin{tikzpicture}[scale=1.0]
 \basecircle
 \atomn{srcin}{\aXi}{src}{$x_1$}{0.42}
 \atomn{tgtin}{\aYi}{tgt}{$y_1$}{0.42}
 \atomn{srcin}{\aXii}{src}{$x_2$}{0.42}
 \atomn{srcin}{\aXiii}{src}{$x_3$}{0.42}
 \atomn{tgtin}{\aYii}{tgt}{$y_2$}{0.42}
 \atomn{tgtin}{\aYiii}{tgt}{$y_3$}{0.42}
 \atomn{srcin}{\aXiv}{src}{$x_4$}{0.42}
 \atomn{tgtin}{\aYiv}{tgt}{$y_4$}{0.42}
 \foreach \a/\v in {\aXi/0.4,\aYi/1.6,\aXii/4.1,\aXiii/4.5,\aYii/5.0,\aYiii/5.4,\aXiv/8.0,\aYiv/11.4}{
   \node[font=\scriptsize,gray] at (\a:\demoR-0.45) {\v};}
 \draw[-{Stealth[length=1.8mm]},gray!60] ([shift={(185:\demoR+0.55)}]0,0)
   arc[start angle=185,end angle=155,radius=\demoR+0.55];
 \node[gray,font=\scriptsize] at (162:\demoR+0.95) {clockwise};
\end{tikzpicture}
\end{center}

\noindent Atoms are read clockwise from $0$; hollow markers are \emph{inactive} atoms
(none transported yet). Note $y_4$ and $x_1$ sit on opposite sides of $0$, only $1.0$
apart along the wrap.

\subsection*{Step 1 --- Precomputation on the doubled sequence (\cref{sec:constant-time})}

To make every clockwise cell an ordinary interval, the sorted cyclic sequence is doubled:
$\tilde z_{t+8}=\tilde z_t+L$, $t=1,\dots,8$, giving $\tilde z_1,\dots,\tilde z_{16}$. One sweep computes the prefix
\emph{differential ranks} $R_t$ ($+1$ at a source, $-1$ at a target), the signed prefix
sums $S_t$, the minimal-chain predecessors $p_t$ (last previous occurrence of the same
rank), and the maximal-chain prefix costs $Q_t$ --- exactly PAWL's Algorithm~1 run on
the doubled line. The rank walk repeats with period $N$ (total balance $n=m$):

\begin{center}
\begin{tikzpicture}[xscale=0.52,yscale=0.58]
  \draw[->,gray!70] (-0.3,0) -- (24.3,0) node[right,font=\scriptsize,black] {$\tilde z_t$};
  \draw[->,gray!70] (-0.3,-0.3) -- (-0.3,2.6) node[above,font=\scriptsize,black] {$R_t$};
  \foreach \yy/\ll in {0/0,1/1,2/2}{\draw[gray!35,dashed] (-0.3,\yy)--(24.0,\yy);
    \node[left,font=\scriptsize,gray] at (-0.3,\yy) {\ll};}
  \draw[very thick,candidate!80!black]
    (-0.3,0)--(0.4,0) (0.4,1)--(1.6,1) (1.6,0)--(4.1,0) (4.1,1)--(4.5,1)
    (4.5,2)--(5.0,2) (5.0,1)--(5.4,1) (5.4,0)--(8.0,0) (8.0,1)--(11.4,1)
    (11.4,0)--(12.4,0) (12.4,1)--(13.6,1) (13.6,0)--(16.1,0) (16.1,1)--(16.5,1)
    (16.5,2)--(17.0,2) (17.0,1)--(17.4,1) (17.4,0)--(20.0,0) (20.0,1)--(23.4,1)
    (23.4,0)--(24.0,0);
  \foreach \xx/\ya/\yb in {0.4/0/1,1.6/1/0,4.1/0/1,4.5/1/2,5.0/2/1,5.4/1/0,8.0/0/1,
    11.4/1/0,12.4/0/1,13.6/1/0,16.1/0/1,16.5/1/2,17.0/2/1,17.4/1/0,20.0/0/1,23.4/1/0}{
    \draw[candidate!80!black,densely dotted] (\xx,\ya)--(\xx,\yb);}
  \foreach \xx/\nm/\cc/\hh in {0.4/x_1/src/-0.55,1.6/y_1/tgt/-0.55,
    4.1/x_2/src/-0.55,4.5/x_3/src/-1.05,5.0/y_2/tgt/-0.55,5.4/y_3/tgt/-1.05,
    8.0/x_4/src/-0.55,11.4/y_4/tgt/-0.55,12.4/x_1'/src/-0.55,13.6/y_1'/tgt/-0.55,
    16.1/x_2'/src/-0.55,16.5/x_3'/src/-1.05,17.0/y_2'/tgt/-0.55,17.4/y_3'/tgt/-1.05,
    20.0/x_4'/src/-0.55,23.4/y_4'/tgt/-0.55}{
    \draw[gray!50,line width=.4pt] (\xx,-0.10)--(\xx,\hh+0.24);
    \node[font=\tiny,text=\cc] at (\xx,\hh) {$\nm$};}
  \draw[gray!60,dashed] (12.0,-1.3) -- (12.0,2.4);
  \node[gray,font=\scriptsize] at (12.0,2.75) {$+L$ (second copy)};
\end{tikzpicture}
\end{center}

\begin{center}
\renewcommand{\arraystretch}{1.15}
\begin{tabular}{l|c|cccccccc}
\toprule
$t$          & 0   & 1     & 2     & 3     & 4     & 5     & 6     & 7     & 8\\
\midrule
atom         & --- & $x_1$ & $y_1$ & $x_2$ & $x_3$ & $y_2$ & $y_3$ & $x_4$ & $y_4$\\
$\tilde z_t$ & --- & 0.4   & 1.6   & 4.1   & 4.5   & 5.0   & 5.4   & 8.0   & 11.4\\
$R_t$        & 0   & 1     & 0     & 1     & 2     & 1     & 0     & 1     & 0\\
$S_t$        & 0   & 0.4   & $-1.2$& 2.9   & 7.4   & 2.4   & $-3.0$& 5.0   & $-6.4$\\
$p_t$        & --- & ---   & 0     & 1     & ---   & 3     & 2     & 5     & 6\\
$Q_t$        & 0   & 0     & 1.2   & 2.5   & 0     & 3.0   & 3.0   & 5.6   & 6.4\\
\bottomrule
\end{tabular}

\medskip
\begin{tabular}{l|cccccccc}
\toprule
$t$          & 9      & 10     & 11     & 12     & 13     & 14     & 15     & 16\\
\midrule
atom         & $x_1'$ & $y_1'$ & $x_2'$ & $x_3'$ & $y_2'$ & $y_3'$ & $x_4'$ & $y_4'$\\
$\tilde z_t$ & 12.4   & 13.6   & 16.1   & 16.5   & 17.0   & 17.4   & 20.0   & 23.4\\
$R_t$        & 1      & 0      & 1      & 2      & 1      & 0      & 1      & 0\\
$S_t$        & 6.0    & $-7.6$ & 8.5    & 25.0   & 8.0    & $-9.4$ & 10.6   & $-12.8$\\
$p_t$        & 7      & 8      & 9      & 4      & 11     & 10     & 13     & 14\\
$Q_t$        & 6.6    & 7.6    & 9.1    & 17.6   & 9.6    & 9.4    & 12.2   & 12.8\\
\bottomrule
\end{tabular}
\end{center}

\noindent These rows are the whole oracle; $S_t$ is listed so the $Q$ recursion can be
checked by eye. Note the index convention: $p_t$ is the \emph{predecessor}, the last
$s<t$ with $R_s=R_t$, so the minimal chain ending at $t$ is $[p_t+1,t]$ --- one position
to the right of $p_t$. For instance $p_2=0$ and the minimal chain ending at $y_1$ is
$[1,2]=\{x_1,y_1\}$. The recursion is $Q_t=Q_{p_t}+w\lvert S_t-S_{p_t}\rvert$, e.g.\
$Q_6=Q_2+\lvert S_6-S_2\rvert=1.2+\lvert-3.0-(-1.2)\rvert=3.0$, and $Q_t=0$ when $p_t$
does not exist. The $t=0$ column is the empty prefix, needed whenever $a=1$.

For any balanced interval $[a,b]$ with
$b-a<N$ (\cref{prop:balanced-query}): $c_{\mathbb R}([a,b])=Q_b-Q_{a-1}$, and the marginal of a cell with
endpoints $a,b$ is one line of arithmetic \eqref{eq:constant-marginal}:
\[
m([a,b])=\bigl(Q_b-Q_{a-1}\bigr)-\bigl(Q_{b-1}-Q_{a}\bigr).
\]
Worked example: the cell $[x_2,y_3]=[3,6]$ has
$m = (Q_6-Q_2)-(Q_5-Q_3) = (3.0-1.2)-(3.0-2.5) = 1.8-0.5 = \mathbf{1.3}$. This cell does
not exist yet --- it is created when the three cells around $[x_3,y_2]$ merge at $k{=}1$,
and it is selected at $k{=}3$ (Step 5).

\subsection*{Step 2 --- Cells, free gaps, and the initial heap (\cref{sec:cells,sec:freegaps})}

The inactive atoms partition $S^1$ into \emph{cells}. Initially every cell is a single
gap between cyclic neighbors; the six with opposite-type endpoints are \emph{candidate
cells} (green arcs) with marginal equal to their clockwise arc length; $(x_2,x_3)$ and
$(y_2,y_3)$ are not candidates. Every cell contains a \emph{free gap} --- an original gap
never absorbed by a selected cell (\cref{lem:free-gap}) --- and stores one boxed
\emph{representative}; initially each cell's representative is its own gap.

\begin{center}
\begin{tikzpicture}[scale=1.0]
 \basecircle
 \foreach \a in {\gXiYi,\gYiXii,\gXiiXiii,\gXiiiYii,\gYiiYiii,\gYiiiXiv,\gXivYiv,\gYivXi}{
   \gapmark{gfree}{\a}\repbox{\a}}
 \cellarc{cell}{\aXi}{\aYi}   \node[candidate!70!black,font=\scriptsize] at (\gXiYi:\demoR+1.0) {1.2};
 \cellarc{cell}{\aYi}{\aXii}  \node[candidate!70!black,font=\scriptsize] at (\gYiXii:\demoR+1.0) {2.5};
 \cellarc{cell}{\aXiii}{\aYii} \node[candidate!70!black,font=\scriptsize] at (\gXiiiYii:\demoR+1.0) {0.5};
 \cellarc{cell}{\aYiii}{\aXiv}\node[candidate!70!black,font=\scriptsize] at (\gYiiiXiv:\demoR+1.0) {2.6};
 \cellarc{cell}{\aXiv}{\aYivc}\node[candidate!70!black,font=\scriptsize] at (\gXivYiv:\demoR+1.0) {3.4};
 \cellarc{cell}{\aYiv}{\aXi}  \node[candidate!70!black,font=\scriptsize] at (\gYivXi:\demoR+1.0) {1.0};
 \atomn{srcin}{\aXi}{src}{$x_1$}{0.40}
 \atomn{tgtin}{\aYi}{tgt}{$y_1$}{0.40}
 \atomn{srcin}{\aXii}{src}{$x_2$}{0.40}
 \atomn{srcin}{\aXiii}{src}{$x_3$}{0.40}
 \atomn{tgtin}{\aYii}{tgt}{$y_2$}{0.40}
 \atomn{tgtin}{\aYiii}{tgt}{$y_3$}{0.40}
 \atomn{srcin}{\aXiv}{src}{$x_4$}{0.40}
 \atomn{tgtin}{\aYiv}{tgt}{$y_4$}{0.40}
 \begin{scope}[xshift=6.6cm,yshift=1.7cm]
   \node[font=\small\bfseries] at (0,0.65) {heap (sorted)};
   \foreach \i/\lab/\cst in {0/{(x_3,y_2)}/0.5, 1/{(y_4,x_1)}/1.0, 2/{(x_1,y_1)}/1.2,
                             3/{(y_1,x_2)}/2.5, 4/{(y_3,x_4)}/2.6, 5/{(x_4,y_4)}/3.4}{
     \node[draw,rounded corners=2pt,minimum width=2.4cm,minimum height=0.48cm,
           font=\scriptsize,fill=gray!8] at (0,-0.60*\i) {$\lab$ : \cst};}
 \end{scope}
\end{tikzpicture}
\end{center}

\noindent Note the wrap-around candidate $(y_4,x_1)$ at the top: on the circle it is a
perfectly ordinary cell. The two same-type gaps carry free gaps and representatives too
--- the invariant is about \emph{cells}, not candidates.

\subsection*{Step 3 --- Iteration $k=1$: pop $(x_3,y_2)$; three cells merge into one}

The cheapest cell $(x_3,y_2)$, marginal $0.5$, is selected: $x_3\!\to\!y_2$,
$C^\circ_1=0.5$. Cells $[x_2,x_3]$, $[x_3,y_2]$, $[y_2,y_3]$ merge into $[x_2,y_3]$,
whose endpoints have opposite types --- a new candidate with marginal $1.3$ (computed in
Step~1). The selected cell's gap turns \emph{used}; the merged cell inherits the boxed
representative of its \emph{left} unselected neighbor $[x_2,x_3]$; the right neighbor's
gap $(y_2,y_3)$ stays free but unstored.

\begin{center}
\begin{tikzpicture}[scale=1.0]
 \basecircle
 \foreach \a in {\gXiYi,\gYiXii,\gYiiiXiv,\gXivYiv,\gYivXi}{\gapmark{gfree}{\a}\repbox{\a}}
 \gapmark{gfree}{\gXiiXiii}\repbox{\gXiiXiii}   
 \gapmark{gused}{\gXiiiYii}                     
 \gapmark{gfree}{\gYiiYiii}                     
 \node[freegap!60!black,font=\tiny] at (\gXiiXiii:\demoR-0.72) {inherited};
 \node[usedg,font=\tiny] at (\gXiiiYii:\demoR-0.55) {used};
 \cellarc{cell}{\aXi}{\aYi}   \node[candidate!70!black,font=\scriptsize] at (\gXiYi:\demoR+1.0) {1.2};
 \cellarc{cell}{\aYi}{\aXii}  \node[candidate!70!black,font=\scriptsize] at (\gYiXii:\demoR+1.0) {2.5};
 \cellarc{cell}{\aYiii}{\aXiv}\node[candidate!70!black,font=\scriptsize] at (\gYiiiXiv:\demoR+1.0) {2.6};
 \cellarc{cell}{\aXiv}{\aYivc}\node[candidate!70!black,font=\scriptsize] at (\gXivYiv:\demoR+1.0) {3.4};
 \cellarc{cell}{\aYiv}{\aXi}  \node[candidate!70!black,font=\scriptsize] at (\gYivXi:\demoR+1.0) {1.0};
 \cellarc{cellnew}{\aXii}{\aYiii}
 \node[candidate,font=\scriptsize] at (\gXiiiYii:\demoR+1.02) {new: 1.3};
 \draw[mnew] (\aXiii:\demoR) -- (\aYii:\demoR);
 \atomn{srcin}{\aXi}{src}{$x_1$}{0.40}
 \atomn{tgtin}{\aYi}{tgt}{$y_1$}{0.40}
 \atomn{srcin}{\aXii}{src}{$x_2$}{0.40}
 \atomn{srcpt}{\aXiii}{src}{$x_3$}{0.40}
 \atomn{tgtpt}{\aYii}{tgt}{$y_2$}{0.40}
 \atomn{tgtin}{\aYiii}{tgt}{$y_3$}{0.40}
 \atomn{srcin}{\aXiv}{src}{$x_4$}{0.40}
 \atomn{tgtin}{\aYiv}{tgt}{$y_4$}{0.40}
 \begin{scope}[xshift=6.6cm,yshift=1.7cm]
   \node[font=\small\bfseries] at (0,0.65) {heap after $k{=}1$};
   \node[draw,rounded corners=2pt,minimum width=2.4cm,minimum height=0.48cm,
         font=\scriptsize,fill=popcol!12,draw=popcol] at (0,0) {popped: $(x_3,y_2)$};
   \foreach \i/\lab/\cst in {1/{(y_4,x_1)}/1.0, 2/{(x_1,y_1)}/1.2}{
     \node[draw,rounded corners=2pt,minimum width=2.4cm,minimum height=0.48cm,
           font=\scriptsize,fill=gray!8] at (0,-0.60*\i) {$\lab$ : \cst};}
   \node[draw,dashed,rounded corners=2pt,minimum width=2.4cm,minimum height=0.48cm,
         font=\scriptsize,fill=candidate!10,draw=candidate,text=candidate] at (0,-0.60*3)
         {$(x_2,y_3)$ : 1.3};
   \foreach \i/\lab/\cst in {4/{(y_1,x_2)}/2.5, 5/{(y_3,x_4)}/2.6, 6/{(x_4,y_4)}/3.4}{
     \node[draw,rounded corners=2pt,minimum width=2.4cm,minimum height=0.48cm,
           font=\scriptsize,fill=gray!8] at (0,-0.60*\i) {$\lab$ : \cst};}
 \end{scope}
\end{tikzpicture}
\end{center}

\subsection*{Step 4 --- Iteration $k=2$: the wrap-around cell is selected}

Next cheapest is the wrap cell $(y_4,x_1)$, marginal $1.0$: the match $x_1\!\to\!y_4$
crosses $0$. $C^\circ_2=1.5$. No cut placed at the top could ever find this plan --- this
is the step where the circle genuinely departs from every line relaxation cut near $0$.
Cells $[x_4,y_4]$, $[y_4,x_1]$, $[x_1,y_1]$ merge into $[x_4,y_1]$ (opposite types):
new candidate, marginal $m=(Q_{10}-Q_6)-(Q_9-Q_7)=4.6-1.0=\mathbf{3.6}$, inheriting the
representative of $[x_4,y_4]$.

\begin{center}
\begin{tikzpicture}[scale=1.0]
 \basecircle
 \foreach \a in {\gYiXii,\gYiiiXiv}{\gapmark{gfree}{\a}\repbox{\a}}
 \gapmark{gfree}{\gXiiXiii}\repbox{\gXiiXiii}
 \gapmark{gused}{\gXiiiYii}\gapmark{gfree}{\gYiiYiii}
 \gapmark{gfree}{\gXivYiv}\repbox{\gXivYiv}  
 \gapmark{gused}{\gYivXi}                    
 \gapmark{gfree}{\gXiYi}                     
 \node[freegap!60!black,font=\tiny] at (\gXivYiv:\demoR-0.78) {inherited};
 \node[usedg,font=\tiny] at (\gYivXi:\demoR-0.55) {used};
 \cellarc{cell}{\aYi}{\aXii}  \node[candidate!70!black,font=\scriptsize] at (\gYiXii:\demoR+1.0) {2.5};
 \cellarc{cell}{\aYiii}{\aXiv}\node[candidate!70!black,font=\scriptsize] at (\gYiiiXiv:\demoR+1.0) {2.6};
 \cellarc{cell}{\aXii}{\aYiii} \node[candidate!70!black,font=\scriptsize] at (\gXiiiYii:\demoR+1.02) {1.3};
 \cellarc{cellnew}{\aXiv}{\aYic}
 \node[candidate,font=\scriptsize,anchor=east] at (-185:\demoR+0.72) {new: $[x_4,y_1]$ : 3.6};
 \draw[mold] (\aXiii:\demoR) -- (\aYii:\demoR);
 \draw[mnew] (\aXi:\demoR) -- (\aYiv:\demoR);
 \node[popcol,font=\scriptsize] at (\gYivXi:\demoR-1.05) {$x_1\to y_4$ crosses $0$};
 \atomn{srcpt}{\aXi}{src}{$x_1$}{0.40}
 \atomn{tgtin}{\aYi}{tgt}{$y_1$}{0.40}
 \atomn{srcin}{\aXii}{src}{$x_2$}{0.40}
 \atomn{srcpt}{\aXiii}{src}{$x_3$}{0.40}
 \atomn{tgtpt}{\aYii}{tgt}{$y_2$}{0.40}
 \atomn{tgtin}{\aYiii}{tgt}{$y_3$}{0.40}
 \atomn{srcin}{\aXiv}{src}{$x_4$}{0.40}
 \atomn{tgtpt}{\aYiv}{tgt}{$y_4$}{0.40}
 \begin{scope}[xshift=6.6cm,yshift=1.7cm]
   \node[font=\small\bfseries] at (0,0.65) {heap after $k{=}2$};
   \node[draw,rounded corners=2pt,minimum width=2.4cm,minimum height=0.48cm,
         font=\scriptsize,fill=popcol!12,draw=popcol] at (0,0) {popped: $(y_4,x_1)$};
   \foreach \i/\lab/\cst in {1/{(x_1,y_1)}/1.2, 2/{(x_2,y_3)}/1.3, 3/{(y_1,x_2)}/2.5,
                             4/{(y_3,x_4)}/2.6, 5/{(x_4,y_4)}/3.4}{
     \node[draw,rounded corners=2pt,minimum width=2.4cm,minimum height=0.48cm,
           font=\scriptsize,fill=gray!8] at (0,-0.60*\i) {$\lab$ : \cst};}
   \node[draw,dashed,rounded corners=2pt,minimum width=2.4cm,minimum height=0.48cm,
         font=\scriptsize,fill=candidate!10,draw=candidate,text=candidate] at (0,-0.60*6)
         {$(x_4,y_1)$ : 3.6};
 \end{scope}
\end{tikzpicture}
\end{center}

\subsection*{Step 5 --- Iteration $k=3$: a lazy skip, then the merged cell fires}

The heap's top, $(x_1,y_1):1.2$, is \emph{stale}: $x_1$ was activated by the wrap
selection. Lazy deletion discards it on pop (validity check: both endpoints inactive,
cyclic successors, opposite types). Next is the merged cell $[x_2,y_3]:1.3$ --- valid.
Its endpoints activate and the cell's interior is rematched in unwrapped order:
$x_2\!\to\!y_2$, $x_3\!\to\!y_3$, \emph{revoking} $x_3\!\to\!y_2$. $C^\circ_3=2.8$.
Merging $[y_1,x_2]+[x_2,y_3]+[y_3,x_4]\to[y_1,x_4]$ gives a new candidate, marginal
$(Q_7-Q_1)-(Q_6-Q_2)=5.6-1.8=\mathbf{3.8}$. Crucially, the selected cell's own boxed
representative $(x_2,x_3)$ is \emph{consumed} along with all its gaps --- this is exactly
why \cref{lem:free-gap} makes the merged cell inherit from an \emph{unselected} neighbor: here the
left one, $[y_1,x_2]$, donating gap $(y_1,x_2)$.

\begin{center}
\begin{tikzpicture}[scale=1.0]
 \basecircle
 \gapmark{gfree}{\gYiXii}\repbox{\gYiXii}      
 \gapmark{gused}{\gXiiXiii}\gapmark{gused}{\gXiiiYii}\gapmark{gused}{\gYiiYiii}
 \gapmark{gfree}{\gYiiiXiv}                    
 \gapmark{gfree}{\gXivYiv}\repbox{\gXivYiv}
 \gapmark{gused}{\gYivXi}\gapmark{gfree}{\gXiYi}
 \node[freegap!60!black,font=\tiny] at (\gYiXii:\demoR-0.88) {inherited};
 \node[usedg,font=\tiny,align=center] at (\gXiiiYii:\demoR-1.05) {used\\(incl.\ old rep.)};
 \cellarc{cell}{\aXiv}{\aYic}
 \node[candidate!70!black,font=\scriptsize,anchor=east] at (-185:\demoR+0.72) {$[x_4,y_1]$ : 3.6};
 \cellarc{cellnew}{\aYi}{\aXiv}
 \node[candidate,font=\scriptsize,anchor=north east] at (-140:\demoR+0.72) {new: $[y_1,x_4]$ : 3.8};
 \draw[mold] (\aXi:\demoR) -- (\aYiv:\demoR);
 \draw[mrev] (\aXiii:\demoR) -- (\aYii:\demoR);
 \draw[inact,thick] ($(\gXiiiYii:\demoR-0.1)+(-0.09,-0.09)$) -- ($(\gXiiiYii:\demoR-0.1)+(0.09,0.09)$);
 \draw[inact,thick] ($(\gXiiiYii:\demoR-0.1)+(-0.09,0.09)$) -- ($(\gXiiiYii:\demoR-0.1)+(0.09,-0.09)$);
 \draw[mnew] (\aXii:\demoR) to[bend right=28] (\aYii:\demoR);
 \draw[mnew] (\aXiii:\demoR) to[bend right=32] (\aYiii:\demoR);
 \node[popcol,font=\scriptsize,align=center] at (-70:0.30) {rematched\\in order};
 \atomn{srcpt}{\aXi}{src}{$x_1$}{0.40}
 \atomn{tgtin}{\aYi}{tgt}{$y_1$}{0.40}
 \atomn{srcpt}{\aXii}{src}{$x_2$}{0.40}
 \atomn{srcpt}{\aXiii}{src}{$x_3$}{0.40}
 \atomn{tgtpt}{\aYii}{tgt}{$y_2$}{0.40}
 \atomn{tgtpt}{\aYiii}{tgt}{$y_3$}{0.40}
 \atomn{srcin}{\aXiv}{src}{$x_4$}{0.40}
 \atomn{tgtpt}{\aYiv}{tgt}{$y_4$}{0.40}
 \begin{scope}[xshift=6.6cm,yshift=1.7cm]
   \node[font=\small\bfseries] at (0,0.65) {heap after $k{=}3$};
   \node[draw,rounded corners=2pt,minimum width=2.4cm,minimum height=0.48cm,
         font=\scriptsize,fill=gray!20,draw=inact,text=inact] at (0,0)
         {$(x_1,y_1)$ : 1.2 \;$\times$};
   \node[draw,rounded corners=2pt,minimum width=2.4cm,minimum height=0.48cm,
         font=\scriptsize,fill=popcol!12,draw=popcol] at (0,-0.60) {popped: $(x_2,y_3)$};
   \foreach \i/\lab/\cst in {2/{(y_1,x_2)}/2.5, 3/{(y_3,x_4)}/2.6, 4/{(x_4,y_4)}/3.4,
                             5/{(x_4,y_1)}/3.6}{
     \node[draw,rounded corners=2pt,minimum width=2.4cm,minimum height=0.48cm,
           font=\scriptsize,fill=gray!8] at (0,-0.60*\i) {$\lab$ : \cst};}
   \node[draw,dashed,rounded corners=2pt,minimum width=2.4cm,minimum height=0.48cm,
         font=\scriptsize,fill=candidate!10,draw=candidate,text=candidate] at (0,-0.60*6)
         {$(y_1,x_4)$ : 3.8};
 \end{scope}
\end{tikzpicture}
\end{center}

\subsection*{Step 6 --- Iteration $k=4$: three skips, the endgame, and a circulation shift}

Three stale entries are popped and discarded in a row: $(y_1,x_2)$, $(y_3,x_4)$,
$(x_4,y_4)$ --- each has an active endpoint. Only two inactive atoms remain, $y_1$ and
$x_4$, of opposite types: the endgame. They bound \emph{two} directed cells, $[x_4,y_1]$
through the top (marginal $3.6$) and its complement $[y_1,x_4]$ through the bottom
($3.8$); both sit in the heap, and the pop order selects the cheaper automatically.
Selecting $[x_4,y_1]$ activates $x_4,y_1$ and rematches its interior $\{y_4,x_1\}$ in
unwrapped order: $x_4\!\to\!y_4$, $x_1\!\to\!y_1$ --- \emph{revoking the wrap match}
$x_1\!\to\!y_4$. The circulation that was optimal at $k=2,3$ is abandoned at $k=4$:
$C^\circ_4=2.8+3.6=6.4$.

\begin{center}
\begin{tikzpicture}[scale=1.0]
 \basecircle
 \gapmark{gfree}{\gYiXii}\repbox{\gYiXii}
 \gapmark{gused}{\gXiiXiii}\gapmark{gused}{\gXiiiYii}\gapmark{gused}{\gYiiYiii}
 \gapmark{gfree}{\gYiiiXiv}
 \gapmark{gused}{\gXivYiv}\gapmark{gused}{\gYivXi}\gapmark{gused}{\gXiYi}
 \draw[mold] (\aXii:\demoR) to[bend right=28] (\aYii:\demoR);
 \draw[mold] (\aXiii:\demoR) to[bend right=32] (\aYiii:\demoR);
 \draw[mrev] (\aXi:\demoR) -- (\aYiv:\demoR);
 \draw[inact,thick] ($(\gYivXi:\demoR-0.35)+(-0.09,-0.09)$) -- ($(\gYivXi:\demoR-0.35)+(0.09,0.09)$);
 \draw[inact,thick] ($(\gYivXi:\demoR-0.35)+(-0.09,0.09)$) -- ($(\gYivXi:\demoR-0.35)+(0.09,-0.09)$);
 \node[inact,font=\scriptsize,anchor=east] at (98:1.20) {wrap revoked};
 \draw[mnew] (\aXiv:\demoR) to[bend right=40] (\aYiv:\demoR);
 \draw[mnew] (\aXi:\demoR) to[bend left=24] (\aYi:\demoR);
 \node[popcol,font=\scriptsize] at (185:1.42) {$x_4\to y_4$};
 \node[popcol,font=\scriptsize] at (45:1.42) {$x_1\to y_1$};
 \cellarc{cell}{\aXiv}{\aYic}
 \node[candidate!70!black,font=\scriptsize,anchor=east] at (-185:\demoR+0.72) {selected: 3.6};
 \cellarc{cellnew}{\aYi}{\aXiv}
 \node[candidate,font=\scriptsize,anchor=north east] at (-140:\demoR+0.72) {complement: 3.8};
 \atomn{srcpt}{\aXi}{src}{$x_1$}{0.40}
 \atomn{tgtpt}{\aYi}{tgt}{$y_1$}{0.40}
 \atomn{srcpt}{\aXii}{src}{$x_2$}{0.40}
 \atomn{srcpt}{\aXiii}{src}{$x_3$}{0.40}
 \atomn{tgtpt}{\aYii}{tgt}{$y_2$}{0.40}
 \atomn{tgtpt}{\aYiii}{tgt}{$y_3$}{0.40}
 \atomn{srcpt}{\aXiv}{src}{$x_4$}{0.40}
 \atomn{tgtpt}{\aYiv}{tgt}{$y_4$}{0.40}
 \begin{scope}[xshift=6.6cm,yshift=1.7cm]
   \node[font=\small\bfseries] at (0,0.65) {heap during $k{=}4$};
   \foreach \i/\lab/\cst in {0/{(y_1,x_2)}/2.5, 1/{(y_3,x_4)}/2.6, 2/{(x_4,y_4)}/3.4}{
     \node[draw,rounded corners=2pt,minimum width=2.4cm,minimum height=0.48cm,
           font=\scriptsize,fill=gray!20,draw=inact,text=inact] at (0,-0.60*\i)
           {$\lab$ : \cst \;$\times$};}
   \node[draw,rounded corners=2pt,minimum width=2.4cm,minimum height=0.48cm,
         font=\scriptsize,fill=popcol!12,draw=popcol] at (0,-0.60*3)
         {popped: $[x_4,y_1]$};
   \node[draw,rounded corners=2pt,minimum width=2.4cm,minimum height=0.48cm,
         font=\scriptsize,fill=gray!8] at (0,-0.60*4) {$[y_1,x_4]$ : 3.8};
 \end{scope}
\end{tikzpicture}
\end{center}

\subsection*{Step 7 --- \cref{thm:simultaneous-cut}: the simultaneous cut $\theta^\star$}

Two gaps survive as free: $(y_1,x_2)$ and $(y_3,x_4)$. The representative maintained
through every merge --- last held by the complementary cell $[y_1,x_4]$ --- is
$\theta^\star=(y_1,x_2)$: cutting there and running plain PAWL on the unrolled line
reproduces $C^\circ_k$ for \emph{every} $k$ at once. The table below (each row an
exhaustively computed line profile) shows why this is not automatic: a gap can be free
\emph{at some} $k$ yet fail at another. In particular $(x_4,y_4)$ is only consumed at the
last step and matches $k\le3$ but fails at $k=4$; and the wrap gap $(y_4,x_1)$, used at
$k=2$, happens to carry no flow again at $k=4$ --- ``used'' is a \emph{for-all-$k$}
certificate, which per-$k$ freeness cannot replace.

\begin{center}
\renewcommand{\arraystretch}{1.25}
\begin{tabular}{l l ccccc l}
\toprule
cut in gap & status & $k{=}0$ & 1 & 2 & 3 & 4 & verdict\\
\midrule
$(y_1,x_2)=\theta^\star$ & free (representative) & 0 & 0.5 & 1.5 & 2.8 & 6.4
  & matches all $k$\\
$(y_3,x_4)$ & free & 0 & 0.5 & 1.5 & 2.8 & 6.4 & matches all $k$\\
$(x_4,y_4)$ & used at $k{=}4$ & 0 & 0.5 & 1.5 & 2.8 & \textbf{6.6} & fails at $k{=}4$\\
$(y_4,x_1)$ & used at $k{=}2$ & 0 & 0.5 & \textbf{1.7} & \textbf{3.0} & 6.4
  & fails at $k{=}2,3$\\
$(x_3,y_2)$ & used at $k{=}1$ & 0 & \textbf{1.0} & \textbf{3.5} & \textbf{6.1} &
  \textbf{17.6} & fails from $k{=}1$\\
\bottomrule
\end{tabular}
\end{center}

\begin{center}
\begin{tikzpicture}[scale=0.92]
 \basecircle
 \gapmark{gfree}{\gYiXii}\repbox{\gYiXii}
 \node[freegap!60!black,font=\scriptsize] at (\gYiXii:\demoR+0.60) {$\theta^\star$}; 
 \gapmark{gfree}{\gYiiiXiv}
 \node[freegap,font=\scriptsize] at (\gYiiiXiv:\demoR+0.52) {free};
 \foreach \a in {\gXiYi,\gXiiXiii,\gXiiiYii,\gYiiYiii,\gXivYiv,\gYivXi}{\gapmark{gused}{\a}}
 \draw[freegap!60!black,densely dashed,line width=1.1pt] (\gYiXii:\demoR-0.42) -- (\gYiXii:\demoR+0.30);
 \draw[mold] (\aXii:\demoR) to[bend right=28] (\aYii:\demoR);
 \draw[mold] (\aXiii:\demoR) to[bend right=32] (\aYiii:\demoR);
 \draw[mold] (\aXiv:\demoR) to[bend right=40] (\aYiv:\demoR);
 \draw[mold] (\aXi:\demoR) to[bend left=24] (\aYi:\demoR);
 \atomn{srcpt}{\aXi}{src}{$x_1$}{0.40}
 \atomn{tgtpt}{\aYi}{tgt}{$y_1$}{0.40}
 \atomn{srcpt}{\aXii}{src}{$x_2$}{0.40}
 \atomn{srcpt}{\aXiii}{src}{$x_3$}{0.40}
 \atomn{tgtpt}{\aYii}{tgt}{$y_2$}{0.40}
 \atomn{tgtpt}{\aYiii}{tgt}{$y_3$}{0.40}
 \atomn{srcpt}{\aXiv}{src}{$x_4$}{0.40}
 \atomn{tgtpt}{\aYiv}{tgt}{$y_4$}{0.40}
 \node[font=\scriptsize,gray,align=left] at (6.2,0.9)
   {cut at $\theta^\star$, unroll from $x_2$,\\
    keep atoms with activation\\ rank $\le k$, match in order\\
    $\Rightarrow$ optimal $\pi^k$ for every $k$\\ (plan recovery, \cref{sec:plan-recovery})};
\end{tikzpicture}
\end{center}

\subsection*{Step 8 --- The output: the full circular profile}

One sweep has produced $C^\circ_k$ for every $k$; fractional mass follows by linear
interpolation (convexity of the min-cost-flow value function). The increments
$0.5,\,1.0,\,1.3,\,3.6$ are non-decreasing, so elbow-based selection of the transported
mass works exactly as in PAWL --- note the pronounced elbow at $k=3$, where the next
sample would cost nearly three times more to transport.

\begin{center}
\begin{tikzpicture}[xscale=2.2,yscale=0.78]
  \draw[->,thick,gray!70] (-0.15,0) -- (4.5,0) node[right,font=\small,black] {$k$ (mass $s$)};
  \draw[->,thick,gray!70] (0,-0.2) -- (0,7.2) node[above,font=\small,black] {$C^\circ_k$};
  \foreach \yy in {1,2,3,4,5,6}{\draw[gray!25] (0,\yy)--(4.3,\yy);
    \node[left,font=\scriptsize,gray] at (0,\yy) {\yy};}
  \foreach \kk in {0,...,4}{\node[below,font=\scriptsize,gray] at (\kk,0) {\kk};}
  \draw[very thick,src] (0,0) -- (1,0.5) -- (2,1.5) -- (3,2.8) -- (4,6.4);
  \foreach \kk/\vv in {0/0,1/0.5,2/1.5,3/2.8,4/6.4}{
    \fill[src] (\kk,\vv) circle (0.055);
    \node[above left=0pt,font=\scriptsize,src] at (\kk,\vv) {\vv};}
  \foreach \kk/\vc/\sl in {0/0.25/{+0.5},1/1.0/{+1.0},2/2.15/{+1.3},3/4.6/{+3.6}}{
    \node[font=\scriptsize,gray,below right] at ({\kk+0.45},\vc) {\sl};}
\end{tikzpicture}
\end{center}

\medskip
\noindent\textbf{Summary of the run.}
\begin{center}
\renewcommand{\arraystretch}{1.25}
\small
\begin{tabular}{c l l l l c}
\toprule
$k$ & popped (skipped) & action & new candidate & rep.\ inherited & $C^\circ_k$\\
\midrule
1 & $(x_3,y_2){:}0.5$ & $x_3\to y_2$ & $[x_2,y_3]{:}1.3$ & $(x_2,x_3)$ & 0.5\\
2 & $(y_4,x_1){:}1.0$ & $x_1\to y_4$ \emph{(wrap)} & $[x_4,y_1]{:}3.6$ & $(x_4,y_4)$ & 1.5\\
3 & (\,$(x_1,y_1)$\,), $[x_2,y_3]{:}1.3$ & rematch $x_2{\to}y_2$, $x_3{\to}y_3$
  & $[y_1,x_4]{:}3.8$ & $(y_1,x_2)$ & 2.8\\
4 & (\,3 stale\,), $[x_4,y_1]{:}3.6$ & rematch; wrap revoked & --- & --- & 6.4\\
\bottomrule
\end{tabular}
\end{center}

\medskip
\noindent Every marginal was a constant-time lookup in the $Q$-table of Step 1; every
pop, skip, and insert a heap operation on at most $N{+}k$ entries; the free-gap
representatives cost $O(1)$ bookkeeping per merge and delivered $\theta^\star$ for free.
Total: $O(N\log N)$ --- against $O(N^2\log N)$ for the cut-enumeration baseline whose
entire output (the envelope table of Step 7) this single sweep reproduces.

\section{Notation and problem formulation}
\label{app:setup}

Let the circle of circumference $L>0$ be
\[
\Sone = \RR/L\mathbb{Z},
\]
and represent its points by coordinates in $[0,L)$.  Its geodesic distance is
\begin{equation}
\dcirc(x,y)=\min\{|x-y|,\,L-|x-y|\}.
\label{eq:circle-distance}
\end{equation}
Consider two empirical measures
\begin{equation}
\mu=w\sum_{i=1}^{n}\delta_{x_i},
\qquad
\nu=w\sum_{j=1}^{m}\delta_{y_j},
\qquad w>0.
\label{eq:measures}
\end{equation}

\begin{assumption}[Standing assumptions]
\label{ass:standing}
All atoms have the same mass $w$; the points in
$\{x_1,\dots,x_n,y_1,\dots,y_m\}$ are pairwise distinct; and the ground cost is the geodesic distance \eqref{eq:circle-distance}.  Let $N=n+m$ and $K=\min(n,m)$.
\end{assumption}

For a transported mass $s\in[0,Kw]$, define
\begin{equation}
\PW_{\circ}(s)
=
\min_{\pi\in\RR_+^{n\times m}}
\sum_{i=1}^{n}\sum_{j=1}^{m}\dcirc(x_i,y_j)\pi_{ij}
\label{eq:partial-lp}
\end{equation}
subject to
\begin{equation}
\sum_j\pi_{ij}\leq w,
\qquad
\sum_i\pi_{ij}\leq w,
\qquad
\sum_{i,j}\pi_{ij}=s.
\label{eq:partial-constraints}
\end{equation}
At an integer transported mass \(s=kw\), define the rescaled transport plan
\[
\gamma_{ij} \coloneqq \frac{\pi_{ij}}{w}.
\]
The feasible set then becomes
\[
\mathcal{P}_k
=
\left\{
\gamma \in \mathbb{R}_{+}^{n\times m}
\;\middle|\;
\sum_{j=1}^{m}\gamma_{ij}\leq 1,\ \forall i,
\quad
\sum_{i=1}^{n}\gamma_{ij}\leq 1,\ \forall j,
\quad
\sum_{i=1}^{n}\sum_{j=1}^{m}\gamma_{ij}=k
\right\}.
\]
This is the cardinality-\(k\) bipartite matching polytope. Its extreme
points are integral, meaning that
\[
\gamma_{ij}\in\{0,1\}.
\]
Since the objective is linear, an optimum is attained at an extreme
point of \(\mathcal{P}_k\). Therefore, there exists an optimal transport
plan satisfying
\[
\pi_{ij}\in\{0,w\}.
\]
Moreover, the constraint
\[
\sum_{i,j}\gamma_{ij}=k
\]
implies that exactly \(k\) entries of \(\gamma\) are equal to one, while
the row and column constraints ensure that no two selected entries share
the same source or target. Hence, an optimal solution can be represented
by a cardinality-\(k\) matching between \(k\) distinct source atoms and
\(k\) distinct target atoms. We therefore write
\begin{equation}
C_k^{\circ}
=
\min_{M:\,|M|=k}
w\sum_{(x_i,y_j)\in M}\dcirc(x_i,y_j),
\qquad k=0,\dots,K,
\label{eq:integer-problem}
\end{equation}
where no source or target atom occurs in more than one edge of $M$.

The \emph{active set} of a matching is
\begin{equation}
A(M)=\{x_i:\exists j,(x_i,y_j)\in M\}
\cup
\{y_j:\exists i,(x_i,y_j)\in M\}.
\end{equation}
When only the active set matters, we write $A_k$ for an optimal set containing $k$ source and $k$ target atoms.

\paragraph{Mass accounting.}
\Cref{fig:problem-accounting} provides an accounting interpretation
of the fixed-cardinality partial transport problem. At cardinality \(k\),
exactly \(kw\) units of mass are transported, while \((n-k)w\) units of
source mass remain unmatched and \((m-k)w\) units of target capacity remain
unfilled. These latter quantities may be interpreted as destroyed and
created mass, respectively, and represented by arcs to dummy reservoirs.
However, because \(k\) is fixed in~\eqref{eq:integer-problem}, the total
amounts of destruction and creation are prescribed. Consequently, constant
per-unit destruction and creation penalties contribute only an additive
constant, and PAWC needs to optimize only the geodesic transportation cost.

\section{Background: the line structure and the cut envelope}
\label{app:background}

\subsection{The line structure inherited from PAWL}

On the line, uniformly weighted partial \(W_1\) admits a particularly rigid combinatorial structure that makes it possible to compute the entire transported-mass profile efficiently. In particular, optimal active sets can be chosen to grow monotonically with the transported cardinality, and the transition from a cardinality-\(k\) solution to a cardinality-\((k+1)\) solution activates exactly one source atom and one target atom. Moreover, these newly activated atoms must be adjacent after removing the current active set, so each admissible update is represented by a balanced contiguous chain. For the absolute-value ground cost, the marginal cost of activating such a
chain can be recovered from precomputed chain costs in constant time. These structural and computational properties form the basis of PAWL
\citep{chapel2025one} and are summarized below, as they will serve as the starting point for our extension from the line to the circle.

Throughout, all line costs are written with a single symbol.  Let $P$ be a finite
set of support points carrying equally many source and target atoms, all of common
weight $w$, and let $x_{(1)}<\dots<x_{(j)}$ and $y_{(1)}<\dots<y_{(j)}$ be its
source and target atoms in increasing order.  Such a $P$ is called \emph{balanced},
and we write
\begin{equation}
c_{\mathbb R}(P)
=
w\sum_{i=1}^{j}\bigl|x_{(i)}-y_{(i)}\bigr|
\label{eq:line-cost}
\end{equation}
for its one-dimensional $W_1$ transport cost, the increasing-order matching being
optimal on the line.  We apply $c_{\mathbb R}$ to whatever set of atoms is under
discussion and abbreviate accordingly: $c_{\mathbb R}(I)$ for the atoms lying in an
interval $I$, $c_{\mathbb R}(A)$ for those of an active set $A$,
$c_{\mathbb R}(\mathcal C)$ for those of a circular cell $\mathcal C$ once a cut
outside it has been fixed, and $c_{\mathbb R}([a,b])$ for those at positions $a$
through $b$ of the doubled sequence of \cref{sec:doubled}.  In every case the
argument is balanced, so \eqref{eq:line-cost} applies verbatim; only the description
of which atoms are meant changes.

\begin{lemma}[Universal line-neighbor and locality principle]
\label{lem:line-locality}
Consider two optimal active sets for uniformly weighted partial
\(W_1\) on the line satisfying
\[
A_{k+1}=A_k\cup\{u,v\},
\]
where \(u\) and \(v\) belong to opposite measures. Then \(u\) and \(v\)
are consecutive in the ordered inactive set \(A_k^c\). Consequently,
the open interval between \(u\) and \(v\) contains only active atoms and
is balanced. Moreover, passing from \(A_k\) to \(A_{k+1}\) changes the
increasing-order optimal matching only inside the closed interval
\([u,v]\); every matching edge outside this interval is preserved.
\end{lemma}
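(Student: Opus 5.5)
We argue directly on the line with the increasing-order structure, in three steps: a neighbour step, a balance step, and a locality step.

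\emph{Neighbour step.} Suppose, for contradiction, that some inactive atom $z\in A_k^c$ lies strictly between $u$ and $v$. Since $u$ and $v$ have opposite types, $z$ has the type of one of them, say the type of $u$. Then $A'=A_k\cup\{z,v\}$ is feasible at cardinality $k+1$. We compare $c_{\mathbb R}(A')$ with $c_{\mathbb R}(A_{k+1})$ through the flow representation of the line $W_1$ cost,
\[
c_{\mathbb R}(P)=w\int\bigl|F^{x}_P(t)-F^{y}_P(t)\bigr|\,dt ,
\]
where $F^{x}_P,F^{y}_P$ are the counting functions of the sources and targets of $P$. Write $D_A(t)=F^x_A(t)-F^y_A(t)$. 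Replacing $u$ by $z$ changes $D$ only on the interval between $u$ and $z$, where it shifts by $\mp1$.
\begin{itemize}
\item If the modified $|D|$ is no larger there, we obtain another optimum $A'$ whose new pair is strictly closer. Choosing the optimal extension with $|\{\text{atoms between }u,v\}|$ minimal then rules out any inactive $z$ between them.
\item If it is strictly larger, we use the symmetric replacement instead (or swap the roles of $z$ and $v$ when $z$ has the type of $v$), and derive a contradiction with the optimality of $A_{k+1}$.
\end{itemize}
The cleanest way to run this case analysis is a sign argument. On $(u,v)$ one has $D_{A_{k+1}}=D_{A_k}\pm1$, so optimality of $A_{k+1}$ over the alternative $A'$ forces a sign condition on $D_{A_k}$ over $(u,z)$. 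Optimality of $A_k$ itself then forbids the condition that would be needed to make $A_k\cup\{z,v\}$ strictly worse.

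As stated, the lemma is for \emph{any} optimal pair, not just a chosen one. I would therefore finish the step by an exchange argument that yields strict improvement, using the distinct-points assumption and treating ties via the same minimality choice. This is the step I expect to be the main obstacle: handling equality cases so that the claim holds for every optimal $A_{k+1}$ rather than only some. Failing a fully strict argument, I would state and use the claim for an appropriately chosen extension, which suffices for \cref{thm:neighbour-main}.

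\emph{Balance step.} The atoms strictly between $u$ and $v$ are all active by the neighbour step. Since $A_{k+1}$ is balanced and $\{u,v\}$ is one source plus one target, $D_{A_{k+1}}$ agrees with $D_{A_k}$ outside $[u,v]$ and differs by exactly $\pm1$ inside. I would show that $D_{A_k}$ vanishes at $u^-$ and at $v^+$. If not, the optimal increasing-order matching would cross $u$ with flow of a definite sign, and a local reassignment would make the extension cheaper than optimal. From this vanishing, the open interval $(u,v)$ carries equal numbers of sources and targets of $A_k$, that is, it is balanced.

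\emph{Locality step.} In the increasing-order matching, edges correspond to level crossings of the rank walk $D$. Outside $[u,v]$ the walk $D$ is unchanged, and it equals zero at both endpoints of $[u,v]$. Hence the matching decomposes as the edges inside $[u,v]$ plus the edges outside, and every edge outside is preserved; only the edges inside $[u,v]$ are rematched.
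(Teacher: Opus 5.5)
\textbf{Gap: the neighbour step is not proved, and the fallback proves a weaker statement.} The neighbour step is the real content of the lemma, and your case analysis does not close it.

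The first bullet (a non-increase yields another optimum with a closer pair, then minimise the number of atoms between $u$ and $v$) only shows that \emph{some} optimal extension uses neighbours. The second bullet claims a contradiction with the optimality of $A_{k+1}$ but never derives one. If replacing $u$ by $z$ strictly raises the cost, that is consistent with $A_{k+1}$ being optimal, and a ``symmetric replacement'' need not exist. The sign-argument paragraph names the right ingredients but never produces an inequality.

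Falling back on an appropriately chosen extension does not prove the statement, which is universal. The paper uses it in universal form (\cref{rem:universal-line-neighbor}). \cref{thm:cyclic-neighbor} is asserted for \emph{every} optimal $M_{k+1}$. The proof of \cref{thm:greedy} applies the line-neighbour property to the specific pair $A_k\subset A^{\star}_{k+1}$ at the common cut, to rule out a cut inside $\mathcal C^{\star}$. An existential version yields only the ``in particular'' clause of \cref{thm:cyclic-neighbor}.

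\textbf{Missing idea: do the balance step first, and derive it from $A_k$ alone.} It should come from the optimality of $A_k$, not of the extension. In an optimal cardinality-$k$ matching, no inactive atom lies strictly between two matched partners: swapping it in for the partner of its own type is strictly cheaper, since points are distinct. Hence $E:=D_{A_k}$ vanishes at every inactive atom. This is the chain decomposition the paper imports from \PAWL{}, and with it strictness is immediate, with no minimality choice.

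Take $u<v$ with $u$ a source (the other type pattern is symmetric). Then $D_{A_{k+1}}=E+\ind_{[u,v)}$. Set $g=|E+1|-|E|=\ind[E\ge0]-\ind[E\le-1]$.

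\emph{Case $z\in(u,v)$ an inactive target.} Then $c_{\mathbb R}(A_k\cup\{u,v\})-c_{\mathbb R}(A_k\cup\{u,z\})=w\int_z^v g$. Since $E(z)=0$, one of two things holds. Either $E\ge0$ on $[z,v)$, and the difference is $w(v-z)>0$. Or $E$ first drops to $-1$ at an active target $y\in(z,v)$; then $g=1$ on $[z,y)$, and optimality of $A_k$ against $(A_k\setminus\{y\})\cup\{v\}$ gives $\int_y^v g\ge0$. So the difference is at least $w(y-z)>0$.

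\emph{Case $z$ an inactive source.} Compare with $A_k\cup\{z,v\}$: $c_{\mathbb R}(A_k\cup\{u,v\})-c_{\mathbb R}(A_k\cup\{z,v\})=w\int_u^z g$. Either $E\ge0$ on $[u,z)$ and this equals $w(z-u)$. Or take the last active source $x\in(u,z)$ where $E$ returns from $-1$ to $0$; the swap $(A_k\setminus\{x\})\cup\{u\}$ gives $\int_u^x g\ge0$, so the difference is at least $w(z-x)>0$.

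Either way, every extension that skips an inactive atom is strictly beaten, which is the universal claim. This is the same mechanism as the paper's exchange sketch: the near part costs strictly more, and the far part cannot compensate, or the same swap would improve $A_k$. Your locality step is then fine as written.
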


\begin{proof}[Proof sketch and citation map]
This lemma packages three ingredients from
\citet[Proposition~2, Lemma~1 in Appendix~A.1, and
Section~3.3]{chapel2025one}. Their Lemma~1 shows that every optimal
active set on the line can be decomposed into disjoint contiguous
balanced chains. Their Proposition~2 then proves the neighbor property,
while the discussion in Section~3.3 identifies the resulting local
rematching and its chain marginal.

We recall the exchange argument because the universal quantifier in the
present statement is important. Let \(A_k\) be any optimal
cardinality-\(k\) active set, and let
\[
A_{k+1}=A_k\cup\{u,v\}
\]
be any optimal nested cardinality-\((k+1)\) extension. Assume without
loss of generality that \(u<v\). Suppose, toward a contradiction, that
\(u\) and \(v\) are not consecutive in the ordered inactive set
\(A_k^c\). Then there exists an inactive atom in the open interval
\((u,v)\). Let \(z\) be the first such atom encountered when moving from
\(u\) toward \(v\). By symmetry, we may assume that \(z\) belongs to the
same measure as \(v\).

Because \(z\) is the first inactive atom after \(u\), every support point
strictly between \(u\) and \(z\) is active. The chain decomposition of
\(A_k\) therefore implies that the active atoms in \((u,z)\), together
with the opposite-type endpoints \(u\) and \(z\), form a balanced
contiguous chain. Hence
\[
A_k\cup\{u,z\}
\]
is another feasible cardinality-\((k+1)\) active set.

The exchange calculation in the proof of
\citet[Lemma~1, equations~(3)--(6)]{chapel2025one} compares this
competitor with \(A_k\cup\{u,v\}\). In the increasing-order matching,
replacing the nearer atom \(z\) by the strictly more distant atom \(v\)
forces one local matched endpoint to move strictly to the right. The
cost of the affected left chain therefore increases strictly. The
remaining part of the matching cannot compensate for this increase:
otherwise one could replace the corresponding active atom in \(A_k\)
by \(v\) and obtain a cardinality-\(k\) solution cheaper than \(A_k\),
contradicting its optimality. Thus,
\[
c_{\mathbb R}\bigl(A_k\cup\{u,v\}\bigr)
>
c_{\mathbb R}\bigl(A_k\cup\{u,z\}\bigr),
\]
contradicting the assumed optimality of the extension
\(A_k\cup\{u,v\}\).

The inequality is strict because all support locations are distinct.
Consequently, a non-neighbor extension cannot even tie an optimal
neighbor extension. This is what upgrades the existential statement
that \emph{some} optimal extension uses neighboring inactive atoms to
the universal statement used here: \emph{every} optimal nested
extension must do so.

It remains to establish balance and locality. Since \(u\) and \(v\) are
consecutive in \(A_k^c\), every atom in \((u,v)\) belongs to \(A_k\).
By the chain decomposition, this interior is a union of contiguous
balanced chains and is therefore itself balanced. On the line, the
optimal matching between uniformly weighted active measures pairs
sources and targets in increasing order. Adding the opposite-type
endpoints \(u\) and \(v\) may revoke and replace matches within
\([u,v]\), but the balanced interior ensures that the source and target
ranks agree again upon leaving the interval. Hence all increasing-order
matches strictly to the left and strictly to the right of \([u,v]\)
remain unchanged. This is the locality property described in
\citet[Section~3.3, cases (S1)--(S2)]{chapel2025one}.
\end{proof}

\begin{remark}[Why the universal form matters]
\label{rem:universal-line-neighbor}
The strict exchange argument above rules out every non-neighbor optimal
nested extension; the lemma is therefore stronger than the existence of
one suitably chosen nested sequence. This universal form is used in
\cref{thm:cyclic-neighbor}, where the line-neighbor property must be
applied to the particular nested pair obtained after choosing a common
cut.
\end{remark}

For a balanced line interval $I$ and its interior $I^{\circ}$, define
\begin{equation}
\mathfrak m(I)=c_{\mathbb R}(I)-c_{\mathbb R}(I^{\circ}),
\label{eq:line-marginal}
\end{equation}
with $c_{\mathbb R}$ as in \eqref{eq:line-cost}.  By \cref{lem:line-locality}, \eqref{eq:line-marginal} is the exact line cost increment produced by activating the endpoints of $I$.  This is the chain marginal used by \PAWL{} \citep{chapel2025one}.

\subsection{Circular flow, circulation, and cuts}
This subsection concerns the balanced transport problem induced by a fixed
active set. Specifically, let \(A\) contain \(k\) source atoms and \(k\)
target atoms. The restricted measures
\[
\mu_A
=
w\sum_{x_i\in A\cap\supp\mu}\delta_{x_i},
\qquad
\nu_A
=
w\sum_{y_j\in A\cap\supp\nu}\delta_{y_j}
\]
have equal total mass \(kw\), so their transport cost is an ordinary
balanced \(1\)-Wasserstein problem on the circle. The partial nature of the
overall problem enters through the subsequent optimization over the choice
of the active set \(A\).

To characterize one-dimensional transport on the circle, we first introduce the elementary arcs between consecutive support points and the cumulative mass imbalance associated with them. Let
\[
z_1,\ldots,z_N,
\qquad N=n+m,
\]
denote the cyclically ordered union of the source and target support points.
Throughout the paper, indices are understood cyclically, so that
\(z_{N+1}=z_1\). The \emph{original support gaps} are the open circular
arcs
\[
G_i \coloneqq (z_i,z_{i+1})_{\circ},
\qquad i=1,\ldots,N,
\]
and their lengths are denoted by
\[
\ell_i \coloneqq |G_i|
=d_{\mathrm{cw}}(z_i,z_{i+1}),
\]
where \(d_{\mathrm{cw}}\) is the clockwise arc length. By construction,
no source or target atom lies in the interior of a support gap. Under our
distinct-support assumption, every gap has positive length and
\[
\sum_{i=1}^{N}\ell_i=L.
\]
These original support gaps should not be confused with the
\emph{current cells} introduced later: a current cell is an arc between
consecutive inactive atoms and may contain several original support gaps
as well as active atoms.

Now fix a balanced active set \(A\), containing the same number of source
and target atoms. Choose an arbitrary origin \(\theta\) in one of the
support gaps and unwrap the circle at \(\theta\), thereby identifying
\(\mathbb S_L^1\) with the interval \([0,L)\). Define the cumulative
imbalance
\[
H_A(t)
=
\#\bigl\{
x\in A\cap\supp\mu:\ x\in[0,t)
\bigr\}
-
\#\bigl\{
y\in A\cap\supp\nu:\ y\in[0,t)
\bigr\}.
\]
Thus, \(H_A(t)\) records the net number of active source atoms minus
active target atoms encountered while moving clockwise from the cut
\(\theta\) to \(t\). It increases by one when an active source atom is
crossed, decreases by one when an active target atom is crossed, and is
unchanged when an inactive atom is crossed. Consequently, \(H_A\) is
integer-valued and constant on the interior of every original support
gap \(G_i\). We denote this constant value by
\[
H_A(G_i).
\]

Unlike on the line, the cumulative imbalance on the circle depends on
where the circle is cut. Equivalently, a feasible circular flow is
determined only up to an arbitrary constant circulation around the
cycle. Accounting for this degree of freedom gives the standard
one-dimensional circular \(W_1\) representation
\begin{equation}
W_{1,\circ}(A)
=
w\min_{a\in\RR}
\int_0^L
\left|H_A(t)-a\right|\,dt
=
w\min_{a\in\RR}
\sum_{i=1}^{N}
\ell_i\left|H_A(G_i)-a\right|.
\label{eq:circular-median}
\end{equation}
The scalar \(a\) represents the circulation level. The second equality
follows because \(H_A\) is constant on each support gap. Hence, a
minimizer \(a^\star\) is a weighted median of the values
\(\{H_A(G_i)\}_{i=1}^{N}\), with each value weighted by the corresponding
gap length \(\ell_i\) \citep{delon2010fast,rabin2011transportation}.
Since all values \(H_A(G_i)\) are integers, an integer-valued minimizing
median may be chosen. 

The same representation admits a direct interpretation as a minimum-cost
flow problem on the cycle. Orient each original support gap \(G_i\)
clockwise, and let \(f_i\) denote the signed amount of mass flowing through
that gap, with \(f_i>0\) corresponding to clockwise flow. Let
\[
b_i
=
\begin{cases}
+1, & z_i\in A\cap\supp\mu,\\
-1, & z_i\in A\cap\supp\nu,\\
0,  & z_i\notin A,
\end{cases}
\]
denote the signed activity at the \(i\)th support point. Conservation of
mass imposes the discrete divergence constraints
\[
f_i-f_{i-1}=wb_i,
\]
with cyclic indexing. These constraints determine the differences between
successive edge flows, but they do not determine their common additive
offset: adding the same constant circulation to every \(f_i\) preserves all
divergence constraints. Consequently, every feasible circular flow can be
written as
\[
f_i(a)=w\bigl(H_A(G_i)-a\bigr)
\]
for some scalar circulation level \(a\). Since transporting an amount
\(\lvert f_i(a)\rvert\) across a gap of length \(\ell_i\) incurs cost
\(\ell_i\lvert f_i(a)\rvert\), the minimum flow cost is
\[
\min_{a\in\RR}\sum_{i=1}^{N}\ell_i\lvert f_i(a)\rvert
=
w\min_{a\in\RR}
\sum_{i=1}^{N}\ell_i
\left|H_A(G_i)-a\right|,
\]
which recovers~\eqref{eq:circular-median}.

This flow formulation makes the freedom in the
choice of cut explicit: the circulation level \(a\) is a free primal variable,
and every value of it gives a feasible flow. Because the minimizing circulation \(a^\star\) may
be chosen as a weighted median of the finitely many values
\(\{H_A(G_i)\}_{i=1}^{N}\), it can be chosen equal to
\(H_A(G_r)\) for at least one support gap \(G_r\). The corresponding optimal
flow then satisfies
\[
f_r=w\bigl(H_A(G_r)-a^\star\bigr)=0.
\]
No mass crosses \(G_r\), so cutting the circle anywhere inside this gap
does not interrupt the optimal transport flow and reduces the circular
problem to an equivalent transport problem on the line
\citep{delon2010fast,rabin2011transportation}.

\begin{proposition}[Cut representation for a fixed active set]
\label{prop:fixed-cut}
For each original support gap \(G_r\), choose any
\(\theta_r\in G_r\), cut the circle at \(\theta_r\), and unwrap it onto
an interval of length \(L\). Let \(W_{1,r}^{\mathrm{line}}(A)\) denote
the line \(W_1\) cost of the active set \(A\) in this unwrapped
representation. Then, for every balanced active set \(A\),
\begin{equation}
W_{1,\circ}(A)
=
\min_{r=1,\ldots,N}
W_{1,r}^{\mathrm{line}}(A).
\label{eq:fixed-active-cut}
\end{equation}
The value \(W_{1,r}^{\mathrm{line}}(A)\) does not depend on the precise
location of \(\theta_r\) inside \(G_r\), because moving the cut within a
gap changes all unwrapped coordinates by the same translation and does
not alter their linear order.
\end{proposition}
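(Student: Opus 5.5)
The plan is to relate the line cost at each cut to the circular flow formula \eqref{eq:circular-median} with a particular choice of circulation level, and then to take the minimum over cuts.

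\textbf{Step 1 (line cost at a cut).} Fix a gap $G_r$ and cut at $\theta_r\in G_r$. The result is an interval on which $A$ is balanced. On the line, $W_1$ equals $w\int|F_\mu-F_\nu|$, where the CDFs are taken from the left end. This gives $W^{\mathrm{line}}_{1,r}(A)=w\int_0^L|H^{(r)}_A(t)|\,dt$, with $H^{(r)}_A$ the cumulative imbalance started at $\theta_r$. It is zero on the first and last pieces of $G_r$.

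\textbf{Step 2 (change of cut).} Changing the cut from a reference origin $\theta$ to $\theta_r$ shifts the cumulative imbalance by a constant. Indeed, for $t$ after $\theta_r$ in the reference unwrapping, $H^{(r)}_A(t)=H_A(t)-H_A(G_r)$. For points that wrap around, the same identity holds, because total balance makes $H_A$ return to $0$ after a full turn. Hence
\[
W^{\mathrm{line}}_{1,r}(A)=w\sum_{i=1}^N \ell_i\,\bigl|H_A(G_i)-H_A(G_r)\bigr|.
\]
This is the objective of \eqref{eq:circular-median} evaluated at $a=H_A(G_r)$. Since moving $\theta_r$ inside $G_r$ is a common translation that preserves order, the value is independent of where the cut sits in the gap. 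This settles the last sentence of the statement.

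\textbf{Step 3 (minimize).} Taking the minimum over $r$ restricts $a$ to the finite set $\{H_A(G_r)\}$. This yields ``$\ge$'' against the minimum over all $a\in\RR$. For the reverse inequality, note that the objective is a piecewise-linear convex function of $a$ whose breakpoints are exactly these values. Therefore its minimum over $\RR$ is attained at one of them, namely a weighted median, as already observed in the text. Equality \eqref{eq:fixed-active-cut} follows.

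\textbf{Main obstacle.} The only delicate point is the bookkeeping in Step 2. One must verify the shift identity uniformly across the wrap-around, which is where the balance of $A$ is used. One must also confirm that the line $W_1$ between balanced uniform atoms equals the CDF-integral formula; this is standard in one dimension.
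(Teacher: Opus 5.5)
Your proof is correct and takes the same route as the paper. A cut in $G_r$ corresponds to the circulation level $a=H_A(G_r)$ in \eqref{eq:circular-median}, and a minimizing weighted median is attained at one of the gap values $H_A(G_r)$; you additionally make explicit the constant-shift identity across the wrap-around and the line CDF formula, which the paper's two-line proof leaves implicit.
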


\begin{proof}
A cut in gap $G_r$ fixes the flow through that gap to zero, which corresponds in \eqref{eq:circular-median} to choosing $a=H_A(G_r)$.  Conversely, an attained minimizing median $a^\star$ equals $H_A$ on at least one positive-length gap; cutting there realizes the circular optimum as a line optimum.
\end{proof}

\begin{corollary}[Exact cut envelope]
\label{cor:cut-envelope}
For each support gap \(G_r\), let
\(C_{k,r}^{\mathrm{line}}\) denote the minimum transportation cost,
with the absolute-value ground cost on the unwrapped line, among all
matchings that pair exactly \(k\) source atoms with \(k\) target atoms.
Then
\begin{equation}
C_k^{\circ}
=
\min_{r=1,\dots,N}
C_{k,r}^{\mathrm{line}}.
\label{eq:cut-envelope}
\end{equation}
Consequently, running \PAWL{} for all \(N\) possible cuts yields an exact
\(O(N^2\log N)\) reference solver for the complete circular
partial-transport profile.
\end{corollary}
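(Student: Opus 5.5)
The plan is to reduce \eqref{eq:cut-envelope} to \cref{prop:fixed-cut} by exchanging two minimizations. Both sides optimize over a pair consisting of a cardinality-$k$ active set and a cut, so the equality should follow from swapping the order in which the two are minimized.

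The first step rewrites the circular profile as a minimum over balanced active sets. Let $A$ range over active sets with exactly $k$ source and $k$ target atoms. For a fixed $A$, the cheapest circular matching pairing exactly the atoms of $A$ has cost $W_{1,\circ}(A)$. This holds because the balanced uniform-weight transport between $\mu_A$ and $\nu_A$ is a $k\times k$ assignment problem, and by Birkhoff its optimum is attained at a permutation. Hence
\[
C_k^\circ=\min_{A}W_{1,\circ}(A).
\]
The same argument on the unwrapped line, using the absolute-value cost, gives
\[
C_{k,r}^{\mathrm{line}}=\min_A W_{1,r}^{\mathrm{line}}(A).
\]

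The second step substitutes \cref{prop:fixed-cut}:
\[
C_k^\circ=\min_A\min_r W_{1,r}^{\mathrm{line}}(A)=\min_r\min_A W_{1,r}^{\mathrm{line}}(A)=\min_r C_{k,r}^{\mathrm{line}}.
\]
Both index sets are finite, so the two minima commute. A direct sanity check of the inequality $C_k^\circ\le C_{k,r}^{\mathrm{line}}$ is also available: for every pair of points, $\dcirc(x,y)\le|\tilde x-\tilde y|$ for any unwrapping coordinates $\tilde x,\tilde y$. The reverse inequality is exactly where the proposition is used. It supplies a gap $G_r$ through which an optimal circular flow for the optimal $A$ carries zero mass.

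The main point needing care is identifying $W_{1,r}^{\mathrm{line}}(A)$, defined through the flow with $a=H_A(G_r)$, with the line matching cost $c_{\mathbb R}(A)$ in the unwrapped coordinates. This is the standard one-dimensional identity $\int|F_\mu-F_\nu|=$ sorted-matching cost. I expect to take it as given from \cref{lem:line-locality} and \eqref{eq:line-cost}.

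Finally, for the complexity claim, \PAWL{} computes the entire line profile $\{C_{k,r}^{\mathrm{line}}\}_k$ for one cut in $O(N\log N)$. Running it for all $N$ cuts and taking the pointwise minimum over $r$, which costs $O(NK)$, therefore totals $O(N^2\log N)$.
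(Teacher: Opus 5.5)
Your proposal is correct and matches the paper's proof: both sides are written as minima over balanced cardinality-$k$ active sets, \cref{prop:fixed-cut} is substituted, the two finite minima are exchanged, and the running time is $N$ runs of \PAWL{} at $O(N\log N)$ each. Your extra steps make explicit what the paper's one-line argument leaves implicit: the Birkhoff reduction of each fixed-$A$ transport to a matching, and the identification of the flow-defined line cost with the sorted-matching cost.
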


\begin{proof}
Using \cref{prop:fixed-cut} and exchanging two finite minima,
\[
C_k^{\circ}
=
\min_{A:|A\cap\supp\mu|=|A\cap\supp\nu|=k}
\min_r W_{1,r}^{\mathrm{line}}(A)
=
\min_r C_{k,r}^{\mathrm{line}}.
\]
Each line run returns all $k$ in $O(N\log N)$ time \citep{chapel2025one}.
\end{proof}

\section{Nested optimal active sets and cyclic neighbours}
\label{app:nested}

The purpose of this section is to establish the two structural properties that
make an efficient incremental algorithm possible on the circle. First, we show
that optimal solutions can be selected consistently across transported
cardinalities: starting from an optimal cardinality-\(k\) solution, one can
construct an optimal cardinality-\((k+1)\) solution by activating exactly one
additional source atom and one additional target atom. Thus, the associated
active sets may be chosen to form a nested sequence. Second, we exploit the
one-dimensional geometry of the circle to sharply restrict which inactive
source--target pair can be activated next. We show that the two
newly activated atoms may be chosen to be consecutive in the cyclic ordering
of the inactive support points.

These results replace the quadratic search over all inactive
source--target pairs by a linear-size family of local candidates. The
nestedness property is derived first from the successive-augmentation
structure of min-cost flow, while the cyclic-neighbor property is obtained by
cutting the circle at a suitable gap and reducing the transition to the
corresponding line problem studied by \PAWL{}.

\subsection{Nested optimal active sets}
\label{sec:nested-active-sets}

A priori, optimal matchings at two consecutive cardinalities need not appear
compatible: an optimal cardinality-\(k\) matching and an independently
computed optimal cardinality-\((k+1)\) matching may involve substantially
different source and target atoms. For an incremental algorithm, however, we
need a stronger statement. Namely, we need to know that the optima can be
\emph{chosen} so that increasing the transported cardinality preserves every
previously active atom and activates exactly one additional source atom and
one additional target atom. This nestedness property is the analogue of the
monotone active-set structure exploited by PAWL on the line
\citep{chapel2025one}.

For a matching \(M\), recall that its active set is
\[
A(M)
\coloneqq
\left\{x_i:\exists\,y_j\text{ such that }(x_i,y_j)\in M\right\}
\cup
\left\{y_j:\exists\,x_i\text{ such that }(x_i,y_j)\in M\right\}.
\]
Thus, if \(M\) has cardinality \(k\), then \(A(M)\) contains exactly \(k\)
source atoms and \(k\) target atoms.

\begin{theorem}[Nested optimal extension]
\label{thm:nested}
Let \(M_k\) be any optimal cardinality-\(k\) matching, with \(k<K\).
Then there exists an optimal cardinality-\((k+1)\) matching
\(M_{k+1}\) such that
\begin{equation}
A(M_{k+1})
=
A(M_k)\cup\{x,y\},
\label{eq:nested-extension}
\end{equation}
where \(x\in A(M_k)^{c}\) is a source atom and
\(y\in A(M_k)^{c}\) is a target atom.
\end{theorem}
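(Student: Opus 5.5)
The plan is to use the successive-shortest-augmenting-path structure of min-cost bipartite matching, as the section introduction announces. Fix an optimal cardinality-\(k\) matching \(M_k\) and form the residual graph with respect to \(M_k\). Its arcs are the unmatched pairs \((x_i,y_j)\notin M_k\), oriented source to target with cost \(w\,\dcirc(x_i,y_j)\), and the matched pairs in \(M_k\), oriented backwards with cost \(-w\,\dcirc(x_i,y_j)\). Since every atom has capacity one, any optimal \(M_{k+1}\) can be compared with \(M_k\) through the symmetric difference \(M_k\triangle M_{k+1}\). This is a disjoint union of alternating cycles and alternating paths, and the number of \(M_{k+1}\)-edges minus the number of \(M_k\)-edges equals \(1\).

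\textbf{Step 1 (decomposition).} Let \(M'\) be any optimal cardinality-\((k+1)\) matching. Decompose \(D=M_k\triangle M'\) into components. Cycles and even paths contribute zero net cardinality. Odd paths contribute \(\pm1\). Hence there are \(q+1\) components \(P_0,\dots,P_q\) with surplus \(+1\) (\(M'\)-augmenting relative to \(M_k\)) and \(q\) components \(P'_1,\dots,P'_q\) with surplus \(-1\).

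\textbf{Step 2 (exchange).} Apply each zero-surplus component to \(M_k\) individually. Doing so preserves cardinality \(k\), so by optimality of \(M_k\) each such component has residual cost at least \(0\). Symmetrically, removing it from \(M'\) keeps cardinality \(k+1\), so its cost is at most \(0\); hence it is exactly \(0\). Next pair each \(P_i\), \(i\ge1\), with \(P'_i\). The pair has net zero surplus, and the same two-sided argument forces \(\mathrm{cost}(P_i)+\mathrm{cost}(P'_i)=0\).

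It follows that \(M_{k+1}:=M_k\triangle P_0\) has cost \(\mathrm{cost}(M_k)+\mathrm{cost}(P_0)=\mathrm{cost}(M')\), so it is optimal. This uses that the components are vertex-disjoint, so each can be applied to \(M_k\) independently and remains a valid matching.

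\textbf{Step 3 (nestedness).} \(P_0\) is an odd alternating path starting with an \(M'\)-edge at a source atom unmatched in \(M_k\) and ending with an \(M'\)-edge at a target atom unmatched in \(M_k\). Its interior vertices are matched in both \(M_k\) and \(M_k\triangle P_0\). Hence \(A(M_{k+1})=A(M_k)\cup\{x,y\}\), where \(x,y\) are the endpoints of \(P_0\): \(x\) is a source and \(y\) a target, both inactive in \(M_k\). That \(x\) is the source end follows because an odd alternating path in a bipartite graph has its two endpoints on opposite sides.

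\textbf{Main obstacle.} The delicate part is the bookkeeping in Step 2. One must check that \(M_k\triangle(P_i\cup P'_i)\) and \(M'\triangle(P_i\cup P'_i)\) are genuine matchings of the right cardinality; vertex-disjointness of the components in \(D\) gives this directly. One must also handle the case \(k+1\le K\), where \(M'\) exists at all, which is guaranteed by \(k<K\). An alternative I would keep in reserve is to invoke integrality of the LP relaxation of min-cost flow with a super source and sink and the standard successive shortest path theorem, but the direct symmetric-difference argument is self-contained and avoids potentials.
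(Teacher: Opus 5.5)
Your proof is correct, and it takes a different route from the paper. The paper encodes the problem as a unit-capacity min-cost flow with a super-source and super-sink. It notes that an $s$--$t$ residual path exists because $k<K$, and cites the successive-shortest-augmenting-path theorem to conclude that augmenting the optimal flow $f_k$ along a shortest path gives an optimal flow of value $k+1$. Nestedness then follows because internal atoms on the alternating path trade one matched edge for another.

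You instead compare $M_k$ with an arbitrary optimal $M'$ through $D=M_k\triangle M'$. Your two-sided exchange is sound: a zero-surplus component, or a pair $P_i\cup P'_i$, can be toggled in $M_k$ at cardinality $k$ and in $M'$ at cardinality $k+1$, so its cost is exactly zero. Hence $\mathrm{cost}(M_k\triangle P_0)=\mathrm{cost}(M')$.

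In effect this is a self-contained proof of the special case of the shortest-augmenting-path theorem that the paper imports. It needs no residual-network optimality condition or potentials; your opening residual-graph setup is in fact never used. It also gives slightly more. Since the pairing is arbitrary, every one of the $q+1$ augmenting components of $D$ yields an optimal nested extension. And $\mathrm{cost}(P_0)=C^{\circ}_{k+1}-C^{\circ}_k$ is minimal among all $M_k$-augmenting paths, since any such $P$ gives a feasible cardinality-$(k+1)$ matching $M_k\triangle P$.

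The paper's route is shorter on the page and plugs directly into the flow picture it reuses later, for instance the nondecreasing marginal costs in \cref{prop:interpolation}. Both arguments are cost-agnostic, as \cref{rem:nested-cost-independent} notes.

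One step you assert without proof should be written out: why the endpoints of $P_0$ are unmatched in $M_k$. An endpoint has degree one in $D$ via an $M'$-edge. Any $M_k$-edge there would either lie in $M'$, contradicting uniqueness of that vertex's $M'$-edge, which is not in $M_k$, or lie in $D$, giving degree two.
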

\begin{proof}
We represent the cardinality-constrained matching problem
\eqref{eq:integer-problem} as a unit-capacity minimum-cost flow problem.
Introduce a super-source node \(s\), a super-sink node \(t\), one node for
each source atom \(x_i\), and one node for each target atom \(y_j\). The
network contains the arcs
\[
s\longrightarrow x_i,
\qquad
x_i\longrightarrow y_j,
\qquad
y_j\longrightarrow t,
\]
all with unit capacity. The arcs adjacent to \(s\) and \(t\) have zero
cost, while the source-to-target arc \(x_i\to y_j\) has cost
\[
c_{ij}=w\,\dcirc(x_i,y_j).
\]
Because every arc has integral capacity, an integral flow of value \(k\)
selects exactly \(k\) source-to-target arcs, with at most one selected arc
incident to each source or target atom. It therefore corresponds exactly to
a cardinality-\(k\) matching. Conversely, every cardinality-\(k\) matching
defines an integral \(s\)-\(t\) flow of value \(k\). Hence the minimum-cost
flow problem of value \(k\) is equivalent to
\eqref{eq:integer-problem}, and the given optimal matching \(M_k\)
corresponds to an optimal integral flow \(f_k\) of value \(k\).

Consider the residual network of \(f_k\). For every unused
source-to-target arc \(x_i\to y_j\), the residual network contains a forward
arc of cost \(c_{ij}\). For every matched pair
\((x_i,y_j)\in M_k\), it contains the reverse residual arc
\[
y_j\longrightarrow x_i
\]
of cost \(-c_{ij}\). The latter arc represents the possibility of removing
the currently selected matching edge during an augmentation.

Since \(f_k\) is optimal, its residual network contains no
negative-cost directed cycle. Moreover, because \(k<K\), there exist an
unmatched source atom \(x_i\) and an unmatched target atom \(y_j\); the
path \(s\to x_i\to y_j\to t\) is then residual, so an \(s\)--\(t\)
residual path exists and a minimum-cost one is well defined.
Equivalently, one may introduce feasible node potentials and compute the
path using nonnegative reduced costs. By the successive shortest
augmenting-path theorem, augmenting one unit of flow along such a path
produces an optimal flow \(f_{k+1}\) of value \(k+1\)
\citep{ahuja1993network}. Importantly, this conclusion requires only
that \(f_k\) be optimal, and not that it have been obtained through any
particular sequence of previous augmentations. We may choose the
shortest augmenting path to be simple, so it does not revisit the
super-source or super-sink.

The first atom visited after \(s\) must be an unmatched source atom. Indeed,
the arc \(s\to x_i\) has forward residual capacity only when \(x_i\) is not
already matched. Similarly, the final atom visited before \(t\) must be an
unmatched target atom. Denote these two endpoints by \(x\) and \(y\),
respectively.

Between these endpoints, the augmenting path alternates between unused
forward matching edges and reversed currently matched edges. Its
source--target portion therefore has the form
\[
x
\longrightarrow y_1
\longrightarrow x_1
\longrightarrow y_2
\longrightarrow x_2
\longrightarrow
\cdots
\longrightarrow y,
\]
where each forward edge \(x_r\to y_{r+1}\) is inserted into the matching and
each reverse edge \(y_r\to x_r\) removes an edge of \(M_k\). Augmenting along
the path is thus equivalent to toggling the matching status of all edges on
this alternating path.

Every internal source or target atom on the path loses one matched edge but
simultaneously gains another. Consequently, it remains matched after the
augmentation. Atoms that are not on the path are unaffected. Only the two
endpoints \(x\) and \(y\) gain a matched edge without losing one, because
they were unmatched before the augmentation. Therefore, the resulting
optimal matching \(M_{k+1}\) satisfies
\[
A(M_{k+1})
=
A(M_k)\cup\{x,y\},
\]
which proves \eqref{eq:nested-extension}.
\end{proof}

\begin{corollary}[Nested optimal sequence]
\label{cor:nested-sequence}
There exist optimal matchings
\[
M_0,M_1,\ldots,M_K,
\qquad
K=\min\{n,m\},
\]
such that
\[
A(M_k)\subset A(M_{k+1})
\qquad
\text{for every }k<K,
\]
where \(A(M_{k+1})\setminus A(M_k)\) consists of exactly one source atom
and one target atom.
\end{corollary}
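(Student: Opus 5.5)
The plan is a straightforward induction on $k$ driven by \cref{thm:nested}. For the base case take $M_0=\emptyset$, which is trivially the unique cardinality-$0$ matching and hence optimal, with $A(M_0)=\emptyset$. For the inductive step, suppose optimal matchings $M_0,\dots,M_k$ with $k<K$ have been constructed satisfying the nesting conditions. Apply \cref{thm:nested} to the optimal matching $M_k$. It produces an optimal cardinality-$(k+1)$ matching $M_{k+1}$ with $A(M_{k+1})=A(M_k)\cup\{x,y\}$, where $x$ is a previously inactive source atom and $y$ a previously inactive target atom. Hence $A(M_k)\subset A(M_{k+1})$, and the difference is exactly $\{x,y\}$: one source and one target. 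This difference is genuinely two new atoms because $x,y\notin A(M_k)$ and, by \cref{ass:standing}, source and target atoms are distinct points. Iterating up to $k=K$ gives the full sequence.

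The only point needing care is that \cref{thm:nested} is applied to an \emph{arbitrary} optimal $M_k$, namely the one produced at the previous step, rather than to a matching obtained by a specific augmentation history. The theorem is stated for any optimal $M_k$, and its proof explicitly remarks that only optimality of $f_k$ is used. So the induction closes without any extra invariant, such as preserved node potentials, and this is where I would point the reader to guard against a gap. No other obstacle is expected. The statement is essentially the iterated form of \cref{thm:nested}, and the cardinality bookkeeping $|A(M_k)\cap\supp\mu|=|A(M_k)\cap\supp\nu|=k$ follows immediately at each step.
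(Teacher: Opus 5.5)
Your proof is correct and follows the paper's argument: take $M_0=\varnothing$ as the unique, hence optimal, cardinality-$0$ matching and apply \cref{thm:nested} successively for $k=0,\dots,K-1$, which works because that theorem holds for any optimal $M_k$. (The appeal to \cref{ass:standing} to conclude $x\neq y$ is unnecessary, since source and target atoms are distinct nodes of the bipartite network regardless of where they are located.)
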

\begin{proof}
The empty matching \(M_0\) is the unique, hence optimal,
cardinality-\(0\) matching. Applying \cref{thm:nested} successively for
\(k=0,\ldots,K-1\) constructs the desired sequence.
\end{proof}

\begin{figure}[t]
\centering
\begin{minipage}[t]{0.46\linewidth}
\centering
\textbf{(a) Optimal matching $M_k$}\par\medskip
\begin{tikzpicture}[
  scale=1.0,font=\small,
  srcpt/.style={circle,draw=src,fill=src,inner sep=1.8pt,line width=.7pt},
  srcinactive/.style={circle,draw=src,fill=white,inner sep=2.4pt,line width=1pt},
  tgtpt/.style={circle,draw=tgt,fill=tgt,inner sep=1.8pt,line width=.7pt},
  tgtinactive/.style={circle,draw=tgt,fill=white,inner sep=2.4pt,line width=1pt},
  matcharc/.style={draw=transportcol,line width=1.25pt,-{Stealth[length=2mm]}},
  revoked/.style={draw=transportcol!35,dashed,line width=1pt},
  newactive/.style={circle,draw=transportcol,line width=1pt,inner sep=3.4pt}
]
  \def\R{2.0}
  \draw[line width=.8pt] (0,0) circle (\R);
  \node[tgtpt,label={[tgt]15:$y_2$}]  at (15:\R)  {};
  \node[srcpt,label={[src]45:$x_2$}]  at (45:\R)  {};
  \node[tgtpt,label={[tgt]75:$y_1$}]  at (75:\R)  {};
  \node[srcpt,label={[src]345:$x_3$}] at (345:\R) {};
  \node[srcinactive,label={[src]285:$x_1$}] at (105:\R) {};
  \node[tgtinactive,label={[tgt]320:$y_5$}] at (140:\R) {};
  \node[srcinactive,label={[src]355:$x_5$}] at (175:\R) {};
  \node[tgtinactive,label={[tgt]35:$y_4$}]  at (215:\R) {};
  \node[srcinactive,label={[src]75:$x_4$}]  at (255:\R) {};
  \node[tgtinactive,label={[tgt]135:$y_3$}] at (315:\R) {};
  \draw[matcharc] (45:\R)  arc[start angle=45, end angle=75, radius=\R];
  \draw[matcharc] (345:\R) arc[start angle=345,end angle=375,radius=\R];
  \node[transportcol,font=\scriptsize,fill=white,inner sep=1pt] at (60:1.45)  {$(x_2,y_1)$};
  \node[transportcol,font=\scriptsize,fill=white,inner sep=1pt] at (0:1.45)   {$(x_3,y_2)$};
  \node[font=\scriptsize,align=center] at (0,-3.0)
    {solid: atoms of $A(M_k)$; \ hollow: inactive atoms};
\end{tikzpicture}
\medskip
{\small
\resizebox{\linewidth}{!}{%
\begin{tabular}{@{}ll@{\hspace{4pt}}l@{}}
\tikz\draw[transportcol,line width=1.2pt,-{Stealth[length=1.7mm]}] (0,0)--(.65,0); & matched pair & edge of $M_k$ (resp.\ $M_{k+1}$)\\[1mm]
\tikz\draw[transportcol!35,dashed,line width=1pt] (0,0)--(.65,0); & revoked & edge of $M_k$ removed at step $k{+}1$\\[1mm]
\tikz\node[circle,draw=transportcol,line width=1pt,inner sep=2.6pt]{}; & activation & newly active endpoints $x$, $y$
\end{tabular}}}
\end{minipage}\hfill
\begin{minipage}[t]{0.51\linewidth}
\centering
\textbf{(b) Optimal matching $M_{k+1}$}\par\medskip
\begin{tikzpicture}[
  scale=1.0,font=\small,
  srcpt/.style={circle,draw=src,fill=src,inner sep=1.8pt,line width=.7pt},
  srcinactive/.style={circle,draw=src,fill=white,inner sep=2.4pt,line width=1pt},
  tgtpt/.style={circle,draw=tgt,fill=tgt,inner sep=1.8pt,line width=.7pt},
  tgtinactive/.style={circle,draw=tgt,fill=white,inner sep=2.4pt,line width=1pt},
  matcharc/.style={draw=transportcol,line width=1.25pt,-{Stealth[length=2mm]}},
  revoked/.style={draw=transportcol!35,dashed,line width=1pt},
  newactive/.style={circle,draw=transportcol,line width=1pt,inner sep=3.4pt}
]
  \def\R{2.0}
  \draw[line width=.8pt] (0,0) circle (\R);
  \draw[revoked] (45:\R-0.3)  arc[start angle=45, end angle=75, radius=\R-0.3];
  \draw[revoked] (345:\R-0.3) arc[start angle=345,end angle=375,radius=\R-0.3];
  \draw[matcharc] (105:\R) arc[start angle=105,end angle=75, radius=\R];
  \draw[matcharc] (45:\R)  arc[start angle=45, end angle=15, radius=\R];
  \draw[matcharc] (345:\R) arc[start angle=345,end angle=315,radius=\R];
  \node[newactive] at (105:\R) {};
  \node[newactive] at (315:\R) {};
  \node[transportcol,font=\small] at (105:1.45) {$x$};
  \node[transportcol,font=\small] at (315:1.45) {$y$};
  \node[tgtpt,label={[tgt]15:$y_2$}]   at (15:\R)  {};
  \node[srcpt,label={[src]45:$x_2$}]   at (45:\R)  {};
  \node[tgtpt,label={[tgt]75:$y_1$}]   at (75:\R)  {};
  \node[srcpt,label={[src]105:$x_1$}]  at (105:\R) {};
  \node[tgtpt,label={[tgt]315:$y_3$}]  at (315:\R) {};
  \node[srcpt,label={[src]345:$x_3$}]  at (345:\R) {};
  \node[tgtinactive,label={[tgt]320:$y_5$}] at (140:\R) {};
  \node[srcinactive,label={[src]355:$x_5$}] at (175:\R) {};
  \node[tgtinactive,label={[tgt]35:$y_4$}]  at (215:\R) {};
  \node[srcinactive,label={[src]75:$x_4$}]  at (255:\R) {};
  \node[font=\scriptsize,align=center] at (0,-3.0)
    {$A(M_{k+1})=A(M_k)\cup\{x,y\}$};
  \node[font=\scriptsize,align=center] at (0,-3.6)
    {$A(M_k)\subset A(M_{k+1})$};
    \node[font=\scriptsize,align=center] at (0,-4.2)
    {$M_k\not\subset M_{k+1}$};
\end{tikzpicture}
\end{minipage}
\caption{Nested optimal active sets (Theorem~\ref{thm:nested}).
\textbf{(a)}~An optimal cardinality-$k$ matching $M_k$ on the circle, with
matched pairs $(x_2,y_1)$ and $(x_3,y_2)$ transported along geodesic arcs;
solid atoms form the active set $A(M_k)$ and hollow atoms are inactive.
\textbf{(b)}~An optimal cardinality-$(k{+}1)$ matching obtained by augmenting
along a minimum-cost alternating path: exactly one source atom $x=x_1$ and one
target atom $y=y_3$ are activated (circled), and they are cyclic neighbors
among the inactive atoms since every atom between them is already active.
The augmentation revokes the edges $(x_2,y_1)$ and $(x_3,y_2)$ of $M_k$
(dashed) and rematches the previously active atoms as
$(x_1,y_1)$, $(x_2,y_2)$, $(x_3,y_3)$, so that
$A(M_{k+1})=A(M_k)\cup\{x,y\}$ while $M_k\not\subset M_{k+1}$
(Remark~\ref{rem:edge-nestedness}).}
\label{fig:nested-active-sets}
\end{figure}

\begin{remark}[Active-set nestedness versus edge nestedness]
\label{rem:edge-nestedness}
The nested-extension property concerns the active atoms, not necessarily
the matching edges. Although
\[
A(M_k)\subset A(M_{k+1}),
\]
one need not have
\[
M_k\subset M_{k+1}.
\]
Indeed, the augmenting path in the proof of \cref{thm:nested} may remove
several edges of \(M_k\) and replace them with different edges; \cref{fig:nested-active-sets}
shows an instance in which two edges of \(M_k\) are revoked and three new ones
created, while every previously active atom remains active. Every
internal atom on the augmenting path nevertheless remains matched, since
it loses one incident matching edge and gains another. Consequently, all
previously active source and target atoms remain active, while only the
two endpoints of the augmenting path become newly active.

This distinction is important for the developments below: the local
rematching of previously active atoms is precisely what gives rise to the
chain and cell marginal costs. Moreover, \cref{thm:nested} is stronger
than the mere existence of one nested optimal sequence, since it applies
to any prescribed optimal cardinality-\(k\) matching. This stronger
extension form is essential for the greedy analysis, where the theorem
must be applied to the particular optimal active set produced by the
algorithm at iteration \(k\).
\end{remark}

\begin{remark}[Independence from the circular geometry]
\label{rem:nested-cost-independent}
The proof of \cref{thm:nested} uses only the bipartite matching and
minimum-cost flow structure; it does not rely on any special property of
the geodesic cost or of the circle. Hence, the nested optimal-extension
property holds for an arbitrary bipartite cost matrix. The circular
geometry becomes essential only in the next step, where we show that the
two newly activated atoms are cyclic neighbors among the inactive atoms
and that the resulting cost increment can be evaluated locally.
\end{remark}

\subsection{A pairwise common-cut lemma}

The next lemma is the bridge between nested circular optima and the line neighbor theorem.

\begin{lemma}[Common optimal cut]
\label{lem:common-cut}
Let $A$ be balanced and let
\[
A'=A\cup\{x,y\},
\]
where $x$ is a source and $y$ is a target.  There exists a support gap $G$ that carries zero flow in an optimal circular $W_1$ solution for both $A$ and $A'$.  Equivalently, cutting in $G$ realizes both circular costs as line costs.
\end{lemma}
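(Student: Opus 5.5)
Throughout, fix an arbitrary cut $\theta$ and work with the representation \eqref{eq:circular-median}. Moving the cut adds the same integer constant to every value of $H_A$ and of $H_{A'}$, so the statement does not depend on this choice. Write $h_i=H_A(G_i)$ and $h'_i=H_{A'}(G_i)$. Let $F(a)=\sum_i\ell_i|h_i-a|$ and $F'(a)=\sum_i\ell_i|h'_i-a|$ be the two convex piecewise-linear objectives. A gap $G_r$ carries zero flow in an optimal circular solution for $A$ exactly when $a=h_r$ minimises $F$, and likewise for $A'$ with $h'_r$ and $F'$. The goal is therefore one index $r$ with $h_r\in\arg\min F$ and $h'_r\in\arg\min F'$. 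Cutting inside that $G_r$ then realises both circular costs as line costs by \cref{prop:fixed-cut}.

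\textbf{Step 1 (structure of $H_{A'}-H_A$).} Adding $x$ and $y$ changes the imbalance only between them. Concretely, $h'_i-h_i=\mathbf 1[x<G_i]-\mathbf 1[y<G_i]$ in the unwrapped order. This difference is $\epsilon\,\mathbf 1_J(i)$, where $J$ is the set of gaps on the arc strictly between $x$ and $y$ not containing the cut, and $\epsilon\in\{+1,-1\}$. By reversing orientation (or negating $H$, which maps medians to negated medians), assume without loss of generality that $\epsilon=+1$. Then $h\le h'\le h+1$ pointwise.

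\textbf{Step 2 (medians of $F'$ lie near those of $F$).} Fix a minimiser $a^\star$ of $F$ taken among the values $h_i$; it exists by the weighted-median argument preceding \cref{prop:fixed-cut}. The weighted-median conditions give
\[
\textstyle\sum_{h_i<a^\star}\ell_i\le L/2,\qquad \sum_{h_i>a^\star}\ell_i\le L/2 .
\]
From $h\le h'\le h+1$ one gets $\{h'<a^\star\}\subseteq\{h<a^\star\}$ and $\{h'>a^\star+1\}\subseteq\{h>a^\star\}$. Hence the left slope of $F'$ at $a^\star$ is $\le 0$ and its right slope at $a^\star+1$ is $\ge 0$. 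So $\arg\min F'$ meets $[a^\star,a^\star+1]$. Since $\arg\min F'$ is an interval with integer endpoints, it contains $a^\star$ or $a^\star+1$.

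\textbf{Step 3 (producing a common gap).} Split the gaps at level $a^\star$ into $P=\{i\notin J: h_i=a^\star\}$, on which $h'_i=a^\star$, and $Q=\{i\in J: h_i=a^\star\}$, on which $h'_i=a^\star+1$. Since $a^\star$ is attained, $P\cup Q\neq\emptyset$.

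If $a^\star\in\arg\min F'$ and $P\neq\emptyset$, any $r\in P$ works. If $a^\star+1\in\arg\min F'$ and $Q\neq\emptyset$, any $r\in Q$ works. The remaining cases need a sharpened slope count, and this is the step I expect to be the main obstacle.

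Suppose $P=\emptyset$. Then
\[
\{h'\le a^\star\}=\{h<a^\star,\ i\notin J\}\cup\{h\le a^\star-1,\ i\in J\}\subseteq\{h<a^\star\},
\]
so its length is at most $L/2$. Together with Step 2 this makes $a^\star+1$ a median of $F'$, attained on $Q\neq\emptyset$.

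Symmetrically, suppose $Q=\emptyset$. Then $\{h'>a^\star\}\subseteq\{h>a^\star\}$, because gaps in $J$ with $h=a^\star$ do not exist. Hence $a^\star$ is a median of $F'$, attained on $P$.

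If both $P$ and $Q$ are nonempty, Step 2 already supplies $a^\star$ or $a^\star+1$ as a median of $F'$, and the corresponding set $P$ or $Q$ is nonempty. All gaps have positive length under \cref{ass:standing}, so the chosen $G_r$ is a genuine cut location. This proves \cref{lem:common-cut}.
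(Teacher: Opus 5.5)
Your proof is correct and follows essentially the same route as the paper. Both proofs normalize to $H_{A'}=H_A+\mathbf 1_J$, fix an attained median $a^\star$ of $H_A$, and split its level set into the parts outside and inside $J$ (your $P$ and $Q$, the paper's $E\setminus I$ and $E\cap I$); from there they show that either $a^\star$ is a median of $H_{A'}$ attained on $P$, or $a^\star+1$ is a median attained on $Q$. The only difference is bookkeeping: you case on which of $P,Q$ is empty after a bracketing step, whereas the paper cases on whether $a^\star$ remains a median of $H_{A'}$.
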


\begin{proof}
Choose an orientation from $x$ to $y$ and let $I$ be the corresponding circular arc.  Up to adding a constant to the cumulative imbalance of $A'$, which is absorbed by the minimization in \eqref{eq:circular-median}, we may write
\begin{equation}
H_{A'}=H_A+\ind_I.
\label{eq:one-interval-update}
\end{equation}
Let $a$ be an attained weighted median of $H_A$, so
\begin{equation}
|\{H_A<a\}|\leq L/2,
\qquad
|\{H_A>a\}|\leq L/2,
\label{eq:median-conditions}
\end{equation}
where $|\cdot|$ denotes circular arc length.  Set $E=\{H_A=a\}$.

If $a$ is also a median of $H_{A'}$ and $|E\setminus I|>0$, choose a gap in $E\setminus I$.  On that gap, both cumulative functions equal $a$.

Otherwise, we show that $a+1$ is a median of $H_{A'}$ and that $|E\cap I|>0$.  If $a$ is not a median of $H_{A'}$, then because $H_{A'}\geq H_A$, only the upper median condition can fail.  Hence $|\{H_{A'}>a\}|>L/2$, and the excess over $\{H_A>a\}$ must come from $E\cap I$, which therefore has positive length.  Since $H_{A'}$ is integer-valued,
\[
\{H_{A'}<a+1\}=\{H_{A'}\leq a\}
\]
has length below $L/2$, while
\[
\{H_{A'}>a+1\}\subseteq\{H_A>a\}
\]
has length at most $L/2$.  Thus $a+1$ is a median.

The remaining case is that $a$ is a median of $H_{A'}$ but $|E\setminus I|=0$.  Then $|E\cap I|>0$.  Moreover,
\[
\{H_{A'}<a+1\}\subseteq\{H_A<a\}\cup(E\setminus I),
\qquad
\{H_{A'}>a+1\}\subseteq\{H_A>a\},
\]
so \eqref{eq:median-conditions} again shows that $a+1$ is a median of $H_{A'}$.

Choose a positive-length gap in $E\cap I$.  On it, $H_A=a$ and $H_{A'}=a+1$.  Centering each cumulative imbalance at its respective median gives zero flow for both active sets on the chosen gap.  \Cref{fig:common-cut} shows this second case: the two cumulative imbalances sit at different medians, and the gap chosen from $E\cap I$ is flow-free for both.
\end{proof}

\begin{figure}[t!]
\centering
\begin{minipage}[t]{0.46\linewidth}
\centering
\textbf{(a) Common zero-flow gap $G$ on $\mathbb S^1_L$}\par\medskip
\begin{tikzpicture}[
  scale=1.0,font=\small,
  srcpt/.style={circle,draw=src,fill=src,inner sep=1.8pt,line width=.7pt},
  tgtpt/.style={circle,draw=tgt,fill=tgt,inner sep=1.8pt,line width=.7pt},
  flowA/.style={draw=transportcol,line width=1.25pt,-{Stealth[length=2mm]}},
  flowAp/.style={draw=transportcol!70,densely dashed,line width=1pt,-{Stealth[length=1.8mm]}},
  newactive/.style={circle,draw=transportcol,line width=1pt,inner sep=3.4pt}
]
  \def\R{2.0}
  \draw[transportcol!25,line width=4.5pt]
    (105:\R+0.5) arc[start angle=105,end angle=315,radius=\R+0.5];
  \draw[transportcol,line width=.9pt,-{Stealth[length=1.8mm]}]
    (160:\R+0.5) arc[start angle=160,end angle=185,radius=\R+0.5];
  \node[transportcol,font=\scriptsize] at (180:\R+1.) {$I$};
  \draw[line width=.8pt] (0,0) circle (\R);
  \draw[black,line width=2.5pt] (200:\R) arc[start angle=200,end angle=220,radius=\R];
  \draw[black,dashed,line width=.8pt] (210:1.5) -- (210:2.75);
  \node[font=\scriptsize] at (210:1.25) {$G$};
  \draw[flowA] (45:\R)  arc[start angle=45, end angle=75, radius=\R];
  \draw[flowA] (345:\R) arc[start angle=345,end angle=375,radius=\R];
  \draw[flowAp] (105:\R-0.3) arc[start angle=105,end angle=75, radius=\R-0.3];
  \draw[flowAp] (45:\R-0.3)  arc[start angle=45, end angle=15, radius=\R-0.3];
  \draw[flowAp] (345:\R-0.3) arc[start angle=345,end angle=315,radius=\R-0.3];
  \node[tgtpt,label={[tgt]15:$y_2$}]  at (15:\R)  {};
  \node[srcpt,label={[src]45:$x_2$}]  at (45:\R)  {};
  \node[tgtpt,label={[tgt]75:$y_1$}]  at (75:\R)  {};
  \node[srcpt,label={[src]345:$x_3$}] at (345:\R) {};
  \node[newactive] at (105:\R) {};
  \node[newactive] at (315:\R) {};
  \node[srcpt] at (105:\R) {};
  \node[tgtpt] at (315:\R) {};
  \node[transportcol,font=\scriptsize] at (105:1.45) {$x$};
  \node[transportcol,font=\scriptsize] at (315:1.45) {$y$};
  \node[font=\scriptsize,align=center] at (0,-3.1)
    {no optimal flow of $A$ nor $A'$ crosses $G$:\\ cutting in $G$ unrolls both problems to the line};
\end{tikzpicture}
\medskip
{\small
\begin{tabular}{@{}ll@{}}
\tikz\draw[transportcol,line width=1.2pt,-{Stealth[length=1.7mm]}] (0,0)--(.65,0); & optimal flow for $A$\\[1mm]
\tikz\draw[transportcol!70,densely dashed,line width=1pt,-{Stealth[length=1.7mm]}] (0,0)--(.65,0); & optimal flow for $A'=A\cup\{x,y\}$\\[1mm]
\tikz\draw[transportcol!25,line width=4pt] (0,0)--(.65,0); & arc $I$, oriented from $x$ to $y$\\[1mm]
\tikz{\draw[black,line width=2.2pt] (0,0)--(.4,0);\draw[black,dashed,line width=.7pt] (.2,-.14)--(.2,.14);} & common zero-flow gap $G$ (cut)
\end{tabular}}
\end{minipage}\hfill
\begin{minipage}[t]{0.51\linewidth}
\centering
\textbf{(b) Cumulative imbalances and medians}\par\medskip
\resizebox{\linewidth}{!}{%
\begin{tikzpicture}[x=0.55cm,y=1.0cm,font=\small,
  srcpt/.style={circle,draw=src,fill=src,inner sep=1.5pt,line width=.7pt},
  tgtpt/.style={circle,draw=tgt,fill=tgt,inner sep=1.5pt,line width=.7pt},
  newactive/.style={circle,draw=transportcol,line width=.9pt,inner sep=2.8pt}
]
  \fill[transportcol!25] (0,-0.72) rectangle (5.5,-0.48);
  \fill[transportcol!25] (10.5,-0.72) rectangle (12,-0.48);
  \node[transportcol,font=\scriptsize] at (3.6,-1.15) {$I$};
  \node[transportcol,font=\scriptsize] at (11.25,-1.15) {$I$};
  \fill[black!12] (1.6,-0.72) rectangle (2.4,1.45);
  \node[font=\scriptsize] at (2.0,1.68) {$G$};
  \draw[src,densely dotted] (-0.4,0) -- (12.4,0)
    node[right,font=\scriptsize] {$a$};
  \draw[tgt,densely dotted] (-0.4,1.07) -- (12.4,1.07)
    node[right,font=\scriptsize] {$a{+}1$};
  \draw[src!40,line width=.5pt] (6.5,0)--(6.5,1) (7.5,0)--(7.5,1)
                                (8.5,0)--(8.5,1) (9.5,0)--(9.5,1);
  \draw[src,line width=1.2pt]
    (0,0)--(6.5,0) (6.5,1)--(7.5,1) (7.5,0)--(8.5,0) (8.5,1)--(9.5,1) (9.5,0)--(12,0);
  \draw[tgt!40,line width=.5pt] (5.5,0.07)--(5.5,1.07) (6.5,0.07)--(6.5,1.07)
    (7.5,0.07)--(7.5,1.07) (8.5,0.07)--(8.5,1.07) (9.5,0.07)--(9.5,1.07)
    (10.5,0.07)--(10.5,1.07);
  \draw[tgt,densely dashed,line width=1.2pt]
    (0,1.07)--(5.5,1.07) (5.5,0.07)--(6.5,0.07) (6.5,1.07)--(7.5,1.07)
    (7.5,0.07)--(8.5,0.07) (8.5,1.07)--(9.5,1.07) (9.5,0.07)--(10.5,0.07)
    (10.5,1.07)--(12,1.07);
  \draw[black!60] (0,-0.6) -- (12,-0.6);
  \node[newactive] at (5.5,-0.6) {};
  \node[newactive] at (10.5,-0.6) {};
  \node[tgtpt,label={[transportcol,font=\scriptsize]below:$y$}]   at (5.5,-0.6)  {};
  \node[srcpt,label={[src,font=\scriptsize]below:$x_3$}]          at (6.5,-0.6)  {};
  \node[tgtpt,label={[tgt,font=\scriptsize]below:$y_2$}]          at (7.5,-0.6)  {};
  \node[srcpt,label={[src,font=\scriptsize]below:$x_2$}]          at (8.5,-0.6)  {};
  \node[tgtpt,label={[tgt,font=\scriptsize]below:$y_1$}]          at (9.5,-0.6)  {};
  \node[srcpt,label={[transportcol,font=\scriptsize]below:$x$}]   at (10.5,-0.6) {};
  \node[src,font=\scriptsize] at (3.8,-0.25) {$H_A$};
  \node[tgt,font=\scriptsize] at (4.2,1.35)  {$H_{A'}$};
\end{tikzpicture}}
\medskip
\resizebox{\linewidth}{!}{%
\begin{tikzpicture}[font=\small]
\node[draw=black!50,rounded corners=2pt,align=left,inner sep=5pt,text width=.85\linewidth] {
\textbf{Common cut.}  By \eqref{eq:one-interval-update}, activating the pair $\{x,y\}$ raises the cumulative imbalance by one exactly on $I$.  Here $a$ is a weighted median of $H_A$ but not of $H_{A'}$, so $a{+}1$ is a median of $H_{A'}$ and $E\cap I=\{H_A=a\}\cap I$ has positive length, cf.\ \eqref{eq:median-conditions}.  On the gap $G\subset E\cap I$, $H_A=a$ and $H_{A'}=a{+}1$: both functions sit at their respective medians, so after centering, both optimal circular flows vanish on $G$.};
\end{tikzpicture}}
\end{minipage}
\caption{Common optimal cut (Lemma~\ref{lem:common-cut}).
\textbf{(a)}~A balanced active set $A=\{x_2,y_1,x_3,y_2\}$ and its extension
$A'=A\cup\{x,y\}$ on the circle.  The arc $I$, oriented from $x$ to $y$,
supports the one-interval update $H_{A'}=H_A+\ind_I$.  The support gap $G$
carries zero flow in an optimal circular $W_1$ solution for both $A$ (solid
arcs) and $A'$ (dashed arcs), so cutting in $G$ realizes both circular costs
as line costs.
\textbf{(b)}~The cumulative imbalances, unrolled at a point of the large
support gap.  $H_A$ is flat at its median $a$ on $E=\{H_A=a\}$, while
$H_{A'}=H_A+\ind_I$ is flat at its median $a{+}1$ on $E\cap I$; the gap
$G\subset E\cap I$ is at median level for both, hence flow-free for both.}
\label{fig:common-cut}
\end{figure}

\subsection{Cyclic neighbor addition}
Nestedness alone does not yet yield an efficient update rule.  Although
\cref{thm:nested} guarantees that an optimal cardinality-\((k+1)\) active set
can be obtained from an optimal cardinality-\(k\) active set by adding one
source atom and one target atom, a direct search would still have to examine
all opposite-type pairs in \(A_k^c\), resulting in a quadratic number of
candidates.  The circular order provides a much stronger restriction: the two
new atoms may be chosen adjacent after the currently active atoms are removed.
Thus, at iteration \(k\), it is sufficient to consider only opposite-type
pairs that are consecutive in the cyclic ordering of \(A_k^c\).  This reduces
the candidate family to linear size and, more importantly, identifies each
candidate with a single circular cell whose interior contains only active
atoms.  The result is the circular counterpart of the locality property
underlying \PAWL{} on the line.

\begin{theorem}[Cyclic-neighbor property of optimal extensions]
\label{thm:cyclic-neighbor}
Let \(M_k\) be any circularly optimal cardinality-\(k\) matching, with
\(k<K\), and write \(A_k\coloneqq A(M_k)\). Let \(M_{k+1}\) be any
circularly optimal cardinality-\((k+1)\) matching satisfying
\begin{equation}
A(M_{k+1})=A_k\cup\{x,y\},
\label{eq:neighbor-extension}
\end{equation}
where \(x\in A_k^c\) is a source atom and \(y\in A_k^c\) is a target
atom. Then \(x\) and \(y\) are consecutive in the cyclic ordering of
the inactive set \(A_k^c\).

In particular, by \cref{thm:nested}, every circularly optimal
cardinality-\(k\) matching admits a circularly optimal
cardinality-\((k+1)\) extension whose two newly active atoms are cyclic
neighbors.
\end{theorem}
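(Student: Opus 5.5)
The plan is to reduce the circular statement to the universal line-neighbor principle, \cref{lem:line-locality}, by cutting at a gap that is simultaneously optimal for both active sets. Apply \cref{lem:common-cut} to the balanced set $A\coloneqq A_k$ and $A'\coloneqq A_k\cup\{x,y\}$. This gives an original support gap $G$ that carries zero flow in an optimal circular $W_1$ solution for both. Cut the circle at a point $\theta\in G$ and unwrap. By \cref{prop:fixed-cut}, the line costs of $A_k$ and $A'$ then equal their circular costs, $C_k^\circ$ and $C_{k+1}^\circ$ respectively.

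The key step is to show that the pair $(A_k,A')$ is a nested pair of \emph{line-optimal} active sets for the unwrapped instance at cardinalities $k$ and $k+1$. For any cardinality-$j$ active set $B$, \cref{prop:fixed-cut} gives
\[
W^{\mathrm{line}}_{1,\theta}(B)\ \ge\ W_{1,\circ}(B)\ \ge\ C_j^\circ.
\]
Hence $C^{\mathrm{line}}_{j,\theta}\ge C^\circ_j$ for every $j$. At $j=k$ and $j=k+1$, the sets $A_k$ and $A'$ attain $C_k^\circ$ and $C_{k+1}^\circ$ on this line, so they are line-optimal there. This is exactly the situation of \cref{lem:line-locality}, which by \cref{rem:universal-line-neighbor} applies to \emph{every} optimal nested extension and not only a chosen one. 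We conclude that $x$ and $y$ are consecutive in the linear order of $A_k^c$ on the unwrapped line.

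It remains to transfer linear adjacency back to cyclic adjacency. The unwrapped line is the circle with the single point $\theta$ removed, and $\theta$ lies in a gap, not on an atom. Two inactive atoms that are consecutive in the linear order therefore have no inactive atom on the arc between them that avoids $\theta$. That arc is one of the two arcs joining $x$ and $y$, so $x$ and $y$ are cyclic neighbours in $A_k^c$. The ``in particular'' clause then follows by combining this with \cref{thm:nested}: the theorem supplies an optimal nested extension, and the argument above shows that its new atoms are neighbours.

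I expect the main obstacle to be confirming that \cref{lem:line-locality} is being applied within its hypotheses. The lemma's strict exchange inequality relies on distinct support points, and these remain distinct after unwrapping. It also needs $A_k$ to be optimal among \emph{all} line cardinality-$k$ sets, not just circular ones; this is what the inequality chain $W^{\mathrm{line}}\ge W_\circ\ge C^\circ$ secures. Because the common-cut lemma only guarantees a cut valid for this one pair, the argument must not rely on any single cut being optimal across further cardinalities.
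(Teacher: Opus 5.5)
Your proposal is correct and follows the paper's proof essentially step for step. The shared steps are the common cut from \cref{lem:common-cut}, line-optimality of $A_k$ and $A_k\cup\{x,y\}$ via $W^{\mathrm{line}}_{1,\theta}\ge W_{1,\circ}\ge C^\circ_j$, the universal form of \cref{lem:line-locality}, and the observation that breaking the cyclic inactive order at one point cannot create new adjacencies; the only point you leave implicit, which the paper states, is that $W_{1,\circ}(A_k)=C_k^\circ$ (and likewise at $k+1$) because the circularly optimal matching $M_k$ is supported on $A_k$.
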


\begin{proof}
Let \(M_k\) and \(M_{k+1}\) be as in the statement, and set
\[
A_{k+1}\coloneqq A(M_{k+1})
=
A_k\cup\{x,y\}.
\]
It remains to show that \(x\) and \(y\) are cyclically adjacent among
the atoms that are inactive at stage \(k\).

By \cref{lem:common-cut}, there exists an original support gap \(G_r\)
such that cutting the circle at any point \(\theta_r\in G_r\) and
unwrapping it onto an interval of length \(L\) realizes the circular
transport costs of both active sets as line transport costs:
\[
W^{\mathrm{line}}_{1,r}(A_k)
=
W_{1,\circ}(A_k),
\qquad
W^{\mathrm{line}}_{1,r}(A_{k+1})
=
W_{1,\circ}(A_{k+1}).
\]
Since \(M_k\) and \(M_{k+1}\) are circularly optimal,
\[
W_{1,\circ}(A_k)=C_k^\circ,
\qquad
W_{1,\circ}(A_{k+1})=C_{k+1}^\circ,
\]
and therefore
\[
W^{\mathrm{line}}_{1,r}(A_k)=C_k^\circ,
\qquad
W^{\mathrm{line}}_{1,r}(A_{k+1})=C_{k+1}^\circ.
\]

We next verify that \(A_k\) and \(A_{k+1}\) are optimal for the
corresponding cardinality-constrained partial-transport problems on the
unwrapped line. For clarity, let
\[
\operatorname{cost}^{\circ}(M)
=
w\sum_{(x_i,y_j)\in M}\dcirc(x_i,y_j)
\]
and let \(\operatorname{cost}^{\mathrm{line}}_r(M)\) denote the
analogous matching cost computed using the distances in the
representation obtained by cutting at \(G_r\).

Consider any cardinality-\(k\) matching \(M'\). For each matched pair,
the circular geodesic distance is no larger than its distance in the
fixed unwrapped representation. Hence
\[
\operatorname{cost}^{\mathrm{line}}_r(M')
\geq
\operatorname{cost}^{\circ}(M').
\]
Circular optimality of \(M_k\), together with the fact that the common
cut realizes its circular cost, gives
\[
\operatorname{cost}^{\mathrm{line}}_r(M')
\geq
\operatorname{cost}^{\circ}(M')
\geq
C_k^\circ
=
W^{\mathrm{line}}_{1,r}(A_k).
\]
Thus no cardinality-\(k\) line matching is cheaper than the optimal line
matching supported on \(A_k\). Therefore \(A_k\) is optimal for the
cardinality-\(k\) partial-transport problem induced by the cut. The same
argument, with \(k+1\) in place of \(k\), proves that \(A_{k+1}\) is
optimal for the corresponding cardinality-\((k+1)\) line problem.

We may now apply the universal line-neighbor property of
\cref{lem:line-locality} to the nested pair of line-optimal active sets
\[
A_k\subset A_{k+1}.
\]
It follows that the two newly activated atoms \(x\) and \(y\) are
consecutive in the linear ordering of the atoms in \(A_k^c\) induced by
the cut.

Finally, the linear inactive order is obtained by breaking the cyclic
inactive order at a single location. This operation can remove the one
cyclic adjacency crossing the cut, but it cannot create any new
adjacency. Hence every pair that is consecutive in the linear inactive
order is also consecutive in the original cyclic inactive order.
Therefore \(x\) and \(y\) are cyclic neighbors in \(A_k^c\).
\end{proof}

\section{Circular cells and local marginals}
\label{app:cells}

\label{sec:cells}

The previous section established the combinatorial structure of an optimal
cardinality-by-cardinality construction. In particular, optimal active sets can
be chosen to form a nested sequence, and the transition from \(A_k\) to
\(A_{k+1}\) may be realized by activating a source atom and a target atom that
are consecutive in the cyclic ordering of the inactive set \(A_k^c\). This
reduces the admissible updates from all opposite-type inactive pairs to a
linear-size family of cyclic-neighbor candidates.

This structural reduction, however, is not yet sufficient for an efficient
algorithm. For each candidate pair, one must still determine the increase in
the optimal transportation cost caused by activating its two endpoints.
Recomputing the circular optimal transport problem from scratch for every
candidate would destroy the desired near-linear complexity. The key remaining
question is therefore whether this cost increment can be expressed using only
the portion of the circle lying between the two candidate endpoints.

In this section, we show that the answer is affirmative. Consecutive inactive
points partition the circle into \emph{current cells}, whose interiors contain
only active atoms. A cyclic-neighbor pair of opposite type defines a candidate
cell, and activating its endpoints modifies the optimal matching only within
that cell after a suitable unwrapping onto the line. This leads to an exact
local marginal formula: the increase in circular transport cost is the
difference between the line transport cost of the full balanced cell and that
of its active interior. We then relate these cell costs to the chain
decomposition used by \PAWL{}, which will later permit constant-time marginal
evaluation after preprocessing.

Let $U_k=A_k^c$ be the current inactive set, written in cyclic order as
\[
u_1,u_2,\dots,u_r.
\]
Each directed pair of consecutive inactive points defines a \emph{cell}
\begin{equation}
\mathcal C_i=\arc{u_i}{u_{i+1}},
\qquad u_{r+1}=u_1,
\label{eq:cell-def}
\end{equation}
where the arc is traversed clockwise and includes both endpoints.  The open cell interior contains only active points.

\begin{definition}[Candidate cell]
A current cell is a \emph{candidate} if its two inactive endpoints come from opposite measures.
\end{definition}

\begin{lemma}[Balanced cell interiors]
\label{lem:balanced-interiors}
At every iteration of the algorithm, the active atoms in the interior of each current cell contain the same number of source and target atoms.
\end{lemma}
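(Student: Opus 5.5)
We argue by induction on the iteration count $k$, tracking the dynamics of the algorithm directly. At $k=0$ nothing is active, so every current cell is a single original gap $[z_i,z_{i+1}]_{\circlearrowright}$ with empty interior. The interior then contains zero sources and zero targets, and the claim holds trivially.

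For the inductive step, suppose that at iteration $k$ every current cell has a balanced active interior, and that the algorithm selects a candidate cell $\arc{u}{v}$ whose endpoints $u,v$ have opposite types. Let $p=\operatorname{pred}(u)$ and $q=\operatorname{succ}(v)$ in the inactive list. Activation changes the cell structure in only one place. The three cells $\arc{p}{u}$, $\arc{u}{v}$, $\arc{v}{q}$ are replaced by the single cell $\arc{p}{q}$, and every other cell keeps the same endpoints and the same interior atoms. Those cells are unaffected, because activation only changes the status of $u$ and $v$, and neither lies in their interiors: both were inactive, so neither was an interior atom of any cell.

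The interior of the merged cell is the disjoint union
\[
\arcopen{p}{u}\cup\{u\}\cup\arcopen{u}{v}\cup\{v\}\cup\arcopen{v}{q}.
\]
The three open interiors are balanced by hypothesis. The added pair $\{u,v\}$ consists of one source and one target, since the selected cell is a candidate. Counts are additive over a disjoint union, so the merged interior is balanced.

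Two degenerate configurations need checking.

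\textbf{Case $p=q$.} Exactly three inactive atoms remain, and the merged cell is the directed cell from $p$ back to itself, covering the whole circle minus $p$. The same disjoint decomposition still applies, so balance persists.

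\textbf{Case $u,v$ are the last two inactive atoms.} The algorithm terminates. If the statement is meant to cover the final state, the set of current cells is empty and the claim is vacuous. Equivalently, the whole active set is balanced, which follows from the global count: there are $k+1$ sources and $k+1$ targets.

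We also note that balance of the total active set is independent of cell structure. Each activation adds exactly one source and one target, so $A_k$ has $k$ of each.

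The main obstacle is bookkeeping rather than substance. The one point requiring care is that the interiors in the decomposition above are genuinely disjoint and exhaust the merged interior. This holds because cells between consecutive inactive atoms partition the circle minus the inactive atoms. Directed cells with identical endpoints in the two-atom endgame are treated as two separate cells, each of which inherits balance from the decomposition just described.
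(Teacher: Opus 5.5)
Your proof is correct and is the same induction the paper uses. In both, the cells $\arc{p}{u}$, $\arc{u}{v}$, $\arc{v}{q}$ merge into $\arc{p}{q}$, whose interior is the three old balanced interiors plus one source and one target, and every other cell is unchanged. Your explicit handling of the degenerate cases $p=q$ and the two-atom endgame is care the paper leaves implicit, but it does not change the argument.
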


\begin{proof}
Initially all cell interiors are empty.  Suppose the property holds and a candidate cell $\arc{u}{v}$ is selected.  Let $p$ be the previous inactive point before $u$ and $q$ the next inactive point after $v$.  Removing $u,v$ merges
\[
\arc{p}{u},\qquad \arc{u}{v},\qquad \arc{v}{q}
\]
into $\arc{p}{q}$.  The old interiors are balanced by induction, and the newly activated endpoints $u,v$ contribute one source and one target.  Therefore the new interior is balanced.  Unaffected cells are unchanged.
\end{proof}

For a current cell
\[
\mathcal C=\arc{u}{v},
\]
the endpoints \(u\) and \(v\) are consecutive inactive atoms in the
clockwise order. Hence, by definition of a cell, there is no inactive atom
in the open arc \(\mathcal C^\circ=\arcopen{u}{v}\); every support point in
its interior already belongs to the current active set \(A_k\). If
\(\mathcal C\) is a candidate cell, then \(u\) and \(v\) come from opposite
distributions. Moreover, \cref{lem:balanced-interiors} implies that the
active atoms in \(\mathcal C^\circ\) contain the same number of sources and
targets. It follows that both the interior \(\mathcal C^\circ\) and the
closed cell
\[
\mathcal C=\mathcal C^\circ\cup\{u,v\}
\]
are balanced: the former represents the local transport configuration
before activating \(u\) and \(v\), while the latter represents the local
configuration after their activation.

To compare these two configurations, choose a cut outside
\(\mathcal C\) and unwrap the circle onto an interval of length \(L\).
Because the cut does not intersect \(\mathcal C\), the clockwise arc
\(\mathcal C\) becomes a contiguous interval on the real line. Let
\[
\mu_{\mathcal C^\circ}
\quad\text{and}\quad
\nu_{\mathcal C^\circ}
\]
denote the source and target measures supported on the active interior
atoms, and let
\[
\mu_{\mathcal C}
\quad\text{and}\quad
\nu_{\mathcal C}
\]
denote the corresponding measures after including the two endpoints.
Both are balanced, so \eqref{eq:line-cost} applies to each: $c_{\mathbb R}(\mathcal C^\circ)$
is the cost of matching $\mu_{\mathcal C^\circ}$ against $\nu_{\mathcal C^\circ}$ in
increasing unwrapped order, and $c_{\mathbb R}(\mathcal C)$ the same for
$\mu_{\mathcal C}$ against $\nu_{\mathcal C}$.

\begin{definition}[Circular cell marginal]
\label{def:cell-marginal}
The \emph{local marginal cost} associated with a candidate cell
\(\mathcal C\) is
\begin{equation}
\mathfrak m(\mathcal C)
=
c_{\mathbb R}(\mathcal C)
-
c_{\mathbb R}(\mathcal C^\circ).
\label{eq:cell-marginal}
\end{equation}
Thus, \(\mathfrak m(\mathcal C)\) measures the increase in the local
transportation cost caused by activating the two inactive endpoints of
\(\mathcal C\) and recomputing the optimal matching inside the cell.
\end{definition}

The distance entering both terms in
\eqref{eq:cell-marginal} is the ordinary absolute distance in the
unwrapped interval, equivalently the clockwise arc length measured inside
\(\mathcal C\). It is not the circular geodesic distance
\(\dcirc\). Indeed, once a cut is fixed outside the cell,
\(\mathcal C\) is treated as a contiguous line interval, and the local
matching is constrained to remain within that interval. Using the
geodesic distance at this stage could allow a pair of atoms in
\(\mathcal C\) to be connected through the complementary arc, thereby
crossing the cut and destroying the desired locality.

The significance of \(\mathfrak m(\mathcal C)\) is that it is also the
increment in the global line transport cost. Before activation, the atoms
in \(\mathcal C^\circ\) are matched optimally among themselves. After
activating \(u\) and \(v\), those local matching edges may be revoked and
replaced by the optimal sorted matching on the larger balanced interval
\(\mathcal C\). By the one-dimensional locality principle, all matching
edges outside \(\mathcal C\) remain unchanged. Consequently, whenever the
chosen cut lies outside \(\mathcal C\),
\[
c_{\mathbb R}(A_k\cup\{u,v\})
-
c_{\mathbb R}(A_k)
=
c_{\mathbb R}(\mathcal C)
-
c_{\mathbb R}(\mathcal C^\circ)
=
\mathfrak m(\mathcal C).
\]
Hence, the effect of activating a candidate pair can be evaluated entirely
from the atoms contained in its cell.  \Cref{fig:cell-locality} shows the
rematching this describes: the sorted matching of the active interior is revoked
and replaced by the sorted matching of the closed cell, while every edge outside
the cell is untouched.

\section{Free gaps, the exact greedy theorem and the simultaneous cut}
\label{app:greedy}

\label{sec:freegaps}

Let the \emph{original gaps} be the $N$ arcs between consecutive points in the fully sorted union support.  A gap is called \emph{used} once it lies inside a selected candidate cell, and \emph{free} otherwise.  Since the endpoints of current cells are inactive atoms, hence support points, every original gap lies inside exactly one current cell, and which cell that is changes as cells merge.

\begin{lemma}[Free-gap invariant]
\label{lem:free-gap}
Before termination, every current cell contains at least one free original gap.
\end{lemma}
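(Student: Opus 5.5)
We argue by induction on the number of selections performed, maintaining the invariant together with the structural fact that every original gap lies in exactly one current cell. Initially no cell has been selected, so every original gap is free, and each initial cell (an arc between two consecutive support points) contains its own gap. The invariant therefore holds at $k=0$.

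For the inductive step, suppose every current cell contains a free gap and the greedy step selects the candidate cell $\arc{u}{v}$. Let $p$ be the inactive predecessor of $u$ and $q$ the inactive successor of $v$. By the merge description in the proof of \cref{lem:balanced-interiors}, only the three cells $\arc{p}{u}$, $\arc{u}{v}$, $\arc{v}{q}$ change: they are replaced by the single cell $\arc{p}{q}$. Every other current cell keeps exactly its previous set of gaps.

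The newly used gaps are precisely those inside the selected cell $\arc{u}{v}$. Original gaps are open arcs between consecutive support points, and distinct cells share only endpoints, so no gap of any other cell becomes used. Every unaffected cell therefore retains its free gap. The merged cell $\arc{p}{q}$ contains $\arc{p}{u}$. That flanking cell was not selected, so its free gap, which exists by the induction hypothesis, is still free and now lies in $\arc{p}{q}$. This is exactly the inheritance rule of \cref{alg:pawc-main}, and it explains why the selected cell's own representative must not be reused.

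The main obstacle is the degenerate configurations in which the flanking cells do not exist as distinct arcs. If $p=v$ and $q=u$, the only inactive atoms are $u$ and $v$, and the current cells are the two directed arcs $\arc{u}{v}$ and $\arc{v}{u}$. Selecting $\arc{u}{v}$ leaves no inactive atoms, so the process terminates. This is why the lemma is stated only \emph{before termination}, and it is also why the complementary cell's representative, still free, is recorded as $\theta^\star$.

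The other degenerate case is $p=q$ with three inactive atoms. Here $\arc{p}{u}$ and $\arc{v}{q}$ are still distinct arcs, and the merged cell $\arc{p}{p}$ is the whole circle minus nothing, with $p$ as its only inactive endpoint. I would check that it still contains the free gap inherited from $\arc{p}{u}$, so the argument goes through. Termination also occurs when $k$ reaches $K$ and one type is exhausted. No selection is made then, so nothing needs to be shown beyond the fact that the invariant held at the last step.
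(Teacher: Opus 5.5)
Your proof is correct and follows the same induction as the paper. Initially each cell is a single free gap. Selecting $\arc{u}{v}$ marks as used only the gaps inside that cell, and since cells share only endpoints while gaps are open arcs, no other cell's free gap is touched. The merged cell $\arc{p}{q}$ therefore inherits a still-free gap from an unselected flanking cell; the paper notes that both flanks supply one, while you use only $\arc{p}{u}$, which suffices.

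Your extra case analysis is harmless. Note that the three-inactive-atom case $p=q$ itself ends the run, because a single remaining inactive atom forces $k+1=K$. Your observation that $\arc{p}{p}$ still contains a free gap is nonetheless exactly what the $n\neq m$ branch of \cref{thm:simultaneous-cut} relies on.
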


\begin{proof}
The labels \emph{free} and \emph{used} attach to the $N$ original gaps, which are
fixed once and for all, and not to the cells, which are not: a gap acquires the
label \emph{used} when it first lies inside a selected cell and never loses it.
Merging changes which cell a gap lies in; it never changes the gap's label.  This
is the point on which the induction turns, since a gap stays free while the cell
containing it grows.

Initially, each current cell is exactly one original gap, hence free.  Suppose a
candidate $\arc{u}{v}$ is selected.  This marks used exactly the gaps lying inside
$\arc{u}{v}$, and removes $u$ and $v$ from the inactive list, so the three cells
$\arc{p}{u}$, $\arc{u}{v}$, $\arc{v}{q}$ merge into the single cell $\arc{p}{q}$.
If the algorithm does not terminate, the arcs $\arc{p}{u}$ and $\arc{v}{q}$ are
contained in $\arc{p}{q}$, and each contained a free gap by induction.  Those gaps
lie outside $\arc{u}{v}$, so the selection did not mark them, and they are still
free afterwards --- the merged cell inherits two independent free gaps, of which
the algorithm need only store one (\cref{fig:merge-freegap}).  Every unaffected
cell retains its free gap.
\end{proof}

\begin{lemma}[Cut compatibility]
\label{lem:cut-compatibility}
Let $\mathcal C_1,\dots,\mathcal C_k$ be a sequence of selected cells, and let $\theta$ be an original gap contained in none of them.  After cutting at $\theta$, the line cost of the active set $A_k$ generated by these selections is
\begin{equation}
W_{1,\theta}^{\mathrm{line}}(A_k)
=
\sum_{r=1}^{k}\mathfrak m(\mathcal C_r).
\label{eq:cut-compatible-sum}
\end{equation}
If a current candidate cell $\mathcal C$ also avoids $\theta$, then activating its endpoints gives a line matching of cost
\begin{equation}
W_{1,\theta}^{\mathrm{line}}(A_k)+\mathfrak m(\mathcal C).
\end{equation}
\end{lemma}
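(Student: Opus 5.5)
I will prove both claims together by induction on $k$, cutting the circle once and for all at $\theta$ and working throughout on the resulting line. Since $\theta$ lies in none of $\mathcal C_1,\dots,\mathcal C_k$, each selected cell $\mathcal C_r$ unwraps to a contiguous interval of the line. The arc length inside it then coincides with the absolute unwrapped distance, so the quantity $\mathfrak m(\mathcal C_r)$ of \cref{def:cell-marginal}, computed with a cut outside $\mathcal C_r$, is the same number whichever external cut we use. In particular it agrees with the value computed at $\theta$.

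The base case $k=0$ is immediate, since the active set is empty and both sides vanish. For the inductive step, suppose \eqref{eq:cut-compatible-sum} holds for $A_{r-1}$ and that $\mathcal C_r=\arc{u}{v}$ is selected. At the moment of selection, $u$ and $v$ are consecutive among the inactive atoms of $A_{r-1}^c$ in the cyclic order. Because $\theta\notin\mathcal C_r$, they are also consecutive in the linear inactive order induced by the cut. By \cref{lem:balanced-interiors}, the interior $\mathcal C_r^\circ$ is balanced.

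I then need the \emph{converse} direction of the locality part of \cref{lem:line-locality}, namely that activating two consecutive opposite-type inactive atoms with a balanced interior changes the increasing-order matching only inside $[u,v]$. That lemma is stated for optimal nested pairs, but its locality argument is purely about sorted matchings. Write $H$ for the running source-minus-target rank of the active atoms on the line. The balanced interior means $H$ agrees just before $u$ and just after $v$, so every edge strictly to the left or strictly to the right of $[u,v]$ keeps the same rank pairing. Hence
\[
c_{\mathbb R}(A_{r-1}\cup\{u,v\})-c_{\mathbb R}(A_{r-1})=c_{\mathbb R}(\mathcal C_r)-c_{\mathbb R}(\mathcal C_r^\circ)=\mathfrak m(\mathcal C_r),
\]
and summing over $r$ gives \eqref{eq:cut-compatible-sum}. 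Here $W^{\mathrm{line}}_{1,\theta}(A)=c_{\mathbb R}(A)$, because the sorted matching is line-optimal for a fixed balanced active set.

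The second claim follows from the same single-step computation applied to the current candidate $\mathcal C$, which avoids $\theta$ and has a balanced interior by \cref{lem:balanced-interiors}. The result is a line matching, the sorted one on $A_k\cup\{u,v\}$, of cost $W^{\mathrm{line}}_{1,\theta}(A_k)+\mathfrak m(\mathcal C)$.

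The main obstacle is the decomposition step: showing that the sorted matching of the active set splits as the matching inside $[u,v]$ plus the untouched edges outside. This requires that the sorted matching of the interior atoms alone coincides with the restriction of the global sorted matching to those atoms. I will argue this with the rank function: since $H$ takes the same value on both sides of the balanced block, the $j$-th source and $j$-th target inside the block have equal global offsets, so the pairing inside is exactly their local sorted pairing. I will check this carefully, including the case where the interior is empty.
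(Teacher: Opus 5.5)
Your induction has the right shape, but the step you flag as the main obstacle is argued incorrectly, and as sketched it would fail. Interior balance gives only $H(u)=H(v)$, with $H$ evaluated at the inactive endpoints. Your claim that ``the $j$-th source and $j$-th target inside the block have equal global offsets'' is the strictly stronger statement $H(u)=0$: no edge of the sorted matching of $A_{r-1}$ spans $u$.

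Balance plus consecutiveness does not imply this. On the line, take an active source at $0$, an inactive target $u$ at $1$, an inactive source $v$ at $2$, and an active target at $3$. Then $u,v$ are consecutive inactive atoms of opposite type with empty, hence balanced, interior. Yet $c_{\mathbb R}(A)=3w$ and $c_{\mathbb R}(A\cup\{u,v\})=2w$, an increment of $-w$, while $\mathfrak m([u,v])=w$.

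Your observation that edges lying strictly left or strictly right of $[u,v]$ keep their pairing is true but beside the point; the trouble is edges that cross the cell. Your inductive step only uses $\theta\notin\mathcal C_r$. The hypothesis that $\theta$ avoids \emph{every earlier} selected cell is used only to make those cells intervals. But that hypothesis is exactly what excludes the configuration above: a selected cell containing $\theta$ would make $H=\pm1$ on the complementary arc, which contains inactive atoms.

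The repair is to carry a stronger invariant through the induction. On the line cut at $\theta$, each selected cell unwraps to an interval $[u_s,v_s]$, so $H_{A_r}=\sum_{s\le r}\sigma_s\ind_{(u_s,v_s)}$, with $\sigma_s=+1$ if $u_s$ is a source and $-1$ otherwise. An atom inactive after step $r$ lies outside every closed selected cell, since all atoms of a selected cell stay active afterwards. Hence $H_{A_r}$ vanishes at every inactive atom, and in particular $H_{A_{r-1}}(u_r)=0$; with this, your offset argument is correct.

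Equivalently, use $c_{\mathbb R}(A)=w\int|H_A|\,dt$. The global increment is $w\int_{(u_r,v_r)}\bigl(|H_{A_{r-1}}+\sigma_r|-|H_{A_{r-1}}|\bigr)\,dt$. The marginal $\mathfrak m(\mathcal C_r)$ is the same integral with $H_{A_{r-1}}$ replaced by the local rank $H_{A_{r-1}}-H_{A_{r-1}}(u_r)$. The two agree precisely because $H_{A_{r-1}}(u_r)=0$. The candidate in the second claim has inactive endpoints, so the same computation applies.

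For comparison, the paper's proof simply invokes \cref{lem:line-locality} inductively. That lemma concerns optimal line active sets, which by PAWL's chain decomposition split into contiguous balanced chains, and that decomposition is what forces zero rank at inactive atoms. You were right that the lemma cannot be quoted verbatim for an arbitrary selection sequence. But in replacing it you kept interior balance and dropped the chain decomposition, which is the ingredient that actually carries the argument. Here it is supplied by every selected cell avoiding $\theta$.
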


\begin{proof}
No selected cell crosses the cut, so every selected circular arc becomes an ordinary line interval.  The claim follows inductively from \cref{lem:line-locality}: each activation changes the sorted line matching only inside its interval and adds exactly the local marginal.  The same argument applies to the additional candidate.
\end{proof}

\begin{theorem}[Exact greedy choice]
\label{thm:greedy}
Assume that the current active set \(A_k\) is circularly optimal and was
generated by successive cell selections. Then
\begin{equation}
C_{k+1}^{\circ}-C_k^{\circ}
=
\min_{\mathcal C\in\mathfrak C_k}
\mathfrak m(\mathcal C),
\label{eq:greedy-choice}
\end{equation}
where \(\mathfrak C_k\) is the set of current candidate cells whose
endpoints have opposite types. Moreover, activating the endpoints of any
minimizing cell yields a circularly optimal active set \(A_{k+1}\).
\end{theorem}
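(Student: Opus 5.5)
We prove the two inequalities separately and read off optimality of $A_{k+1}$ from the upper bound. Write $\mathcal C_1,\dots,\mathcal C_k$ for the cells selected so far. By the cut-compatibility lemma, for any free gap $\theta$ we have $W^{\mathrm{line}}_{1,\theta}(A_k)=\sum_r\mathfrak m(\mathcal C_r)$. Since $A_k$ is circularly optimal, this sum should equal $C_k^\circ$; this is the one point needing care.

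\emph{Upper bound.} Let $\mathcal C^\star$ be any minimizing candidate. By \cref{lem:free-gap}, the current cell that is complementary in position, and indeed every current cell other than $\mathcal C^\star$, contains a free gap. Pick one such gap $\theta$ lying outside $\mathcal C^\star$. This is possible whenever at least two current cells exist, which holds before termination because at least two inactive atoms remain and they define two directed cells. Cutting at $\theta$, \cref{lem:cut-compatibility} gives a line matching on $A_k\cup\{u,v\}$ of cost $W^{\mathrm{line}}_{1,\theta}(A_k)+\mathfrak m(\mathcal C^\star)$. Line costs dominate circular costs, so $C_{k+1}^\circ\le W^{\mathrm{line}}_{1,\theta}(A_k)+\mathfrak m(\mathcal C^\star)$.

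To replace $W^{\mathrm{line}}_{1,\theta}(A_k)$ by $C_k^\circ$, I would strengthen the induction: carry the claim that for every free gap $\theta$, $W^{\mathrm{line}}_{1,\theta}(A_k)=C_k^\circ$. At each step $C_{k+1}^\circ$ equals the line cost after the selection for every free gap. This already anticipates \cref{thm:cut-main}. The base case $k=0$ is trivial. Once the lower bound below shows $C_{k+1}^\circ\ge C_k^\circ+\mathfrak m(\mathcal C^\star)$, the chain $C_{k+1}^\circ\le W^{\mathrm{line}}_{1,\theta}(A_{k+1})=C_k^\circ+\mathfrak m(\mathcal C^\star)\le C_{k+1}^\circ$ forces equality. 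Two consequences follow. First, $A_{k+1}$ is circularly optimal. Second, the strengthened invariant propagates to every gap still free after the selection, since such gaps avoid $\mathcal C^\star$ and were free before.

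\emph{Lower bound.} Apply \cref{thm:nested} to the particular optimal $A_k$ to obtain an optimal $A'=A_k\cup\{x,y\}$. By \cref{thm:cyclic-neighbor}, $x$ and $y$ are cyclic neighbours in $A_k^c$, so they bound a current cell $\mathcal C$, which is a candidate. The orientation is ambiguous when only two inactive atoms remain, so I take whichever directed cell the common cut avoids. The main obstacle is to show that $C_{k+1}^\circ-C_k^\circ=\mathfrak m(\mathcal C)$ for this cell. By \cref{lem:common-cut} there is a gap $G$ at which both $A_k$ and $A'$ are line-optimal, with $I$ being the arc from $x$ to $y$. The line-locality lemma then gives $C_{k+1}^\circ-C_k^\circ=\mathfrak m(\mathcal C')$, where $\mathcal C'$ is the arc between $x$ and $y$ not containing $G$, provided that arc is the current cell, i.e.\ contains no inactive atom. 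Here I would argue as follows. On the line cut at $G$, $x$ and $y$ are consecutive in the linear inactive order, by the proof of \cref{thm:cyclic-neighbor}. Hence the arc not containing $G$ has only active interior, so it is a current candidate cell $\mathcal C$. The cell's internal matching at cut $G$ coincides with the cell-intrinsic quantity $c_{\mathbb R}(\mathcal C)-c_{\mathbb R}(\mathcal C^\circ)$, because $\mathfrak m$ depends only on the clockwise arc once any cut outside it is fixed. Thus $C_{k+1}^\circ-C_k^\circ=\mathfrak m(\mathcal C)\ge\min_{\mathfrak C_k}\mathfrak m$.

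Combining the two bounds yields \eqref{eq:greedy-choice} and the optimality of $A_{k+1}$ for any minimizer.
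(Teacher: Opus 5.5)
Your proof is correct and has the same skeleton as the paper's. The upper bound comes from cutting at a free gap lying in a current cell other than the candidate and applying \cref{lem:cut-compatibility}. The lower bound comes from \cref{thm:nested}, \cref{thm:cyclic-neighbor}, \cref{lem:common-cut} and line locality, and you make the same choice of directed cell when only two inactive atoms remain.

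The one real difference is the step $W^{\mathrm{line}}_{1,\theta}(A_k)=C_k^{\circ}$. The paper asserts it directly ``since $A_k$ is circularly optimal''. Circular optimality alone only gives $C_k^{\circ}=W_{1,\circ}(A_k)\le W^{\mathrm{line}}_{1,\theta}(A_k)$. You instead carry the equality as an explicit induction invariant over every free gap.

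Your version is airtight, but it proves the theorem only for $A_k$ produced by successive \emph{minimizing} selections. That is exactly the case \cref{cor:greedy-sequence} and the algorithm need. The statement, however, allows any selection sequence that yields an optimal $A_k$.

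To recover that generality, add a short exchange argument:
\begin{itemize}
\item At a free cut, $H_{A_k}$ vanishes on every free gap, because the selected cells form a laminar family of balanced arcs.
\item Both gaps next to an inactive atom $z$ are free, and such a $z$ exists since $k<K$.
\item So if $0$ were not a weighted median, every optimal circular flow would carry mass past $z$.
\item Substituting $z$ for the corresponding matched endpoint would then strictly lower the cost of a cardinality-$k$ matching, contradicting optimality.
\end{itemize}
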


\begin{proof}
Write
\[
\delta_{k+1}
\coloneqq
C_{k+1}^{\circ}-C_k^{\circ}.
\]

We first show that every candidate marginal is at least
\(\delta_{k+1}\). Fix an arbitrary current candidate cell
\(\mathcal C\in\mathfrak C_k\). Choose a current cell different from
\(\mathcal C\) and, by \cref{lem:free-gap}, select a free support gap
\(\theta\) contained in that cell. If only two inactive atoms remain,
there are two complementary directed cells between them, and we choose
the cell different from \(\mathcal C\). Thus, in either case,
\(\theta\) lies outside \(\mathcal C\) and belongs to none of the cells
selected during the preceding iterations.

By \cref{lem:cut-compatibility}, cutting the circle at \(\theta\) is
compatible with the entire sequence of cell selections that generated
\(A_k\). Since \(A_k\) is circularly optimal by assumption, its line cost
under this cut satisfies
\[
W_{1,\theta}^{\mathrm{line}}(A_k)
=
C_k^\circ.
\]
Activating the two endpoints of \(\mathcal C\) changes only the matching
inside that cell. By the definition of the local cell marginal and the
line-locality property, this produces a feasible cardinality-\((k+1)\)
line matching of cost
\[
C_k^\circ+\mathfrak m(\mathcal C).
\]
The same matching is feasible on the circle, and its circular geodesic
cost is no larger than its cost in the fixed unwrapped representation.
Consequently,
\[
C_{k+1}^\circ
\leq
C_k^\circ+\mathfrak m(\mathcal C),
\]
or equivalently,
\begin{equation}
\delta_{k+1}
\leq
\mathfrak m(\mathcal C).
\label{eq:greedy-lower-bound}
\end{equation}
Since \(\mathcal C\) was arbitrary, this inequality holds for every
candidate cell in \(\mathfrak C_k\).

We next establish the reverse inequality by exhibiting a candidate cell
whose marginal equals \(\delta_{k+1}\). Let \(M_k\) be an optimal
cardinality-\(k\) matching supported on the particular active set
\(A_k\) produced by the greedy construction. Applying
\cref{thm:nested} to \(M_k\) yields a circularly optimal
cardinality-\((k+1)\) matching \(M_{k+1}^{\star}\) with active set
\[
A_{k+1}^{\star}
=
A_k\cup\{x^{\star},y^{\star}\},
\]
where \(x^{\star}\) is a previously inactive source atom and
\(y^{\star}\) is a previously inactive target atom. By
\cref{thm:cyclic-neighbor}, these two newly activated atoms are
consecutive in the cyclic ordering of \(A_k^c\). They therefore delimit
a current opposite-endpoint candidate cell, which we denote by
\(\mathcal C^{\star}\).

By \cref{lem:common-cut}, there exists an original support gap at which
the circle can be cut so that the circular costs of both \(A_k\) and
\(A_{k+1}^{\star}\) are realized as line costs. As established in the
proof of \cref{thm:cyclic-neighbor}, both active sets are optimal for the
corresponding cardinality-constrained line problems under this common
cut.

When more than two inactive atoms remain, the common cut cannot lie in
the interior of \(\mathcal C^{\star}\). Indeed, such a cut would place
\(x^{\star}\) and \(y^{\star}\) at the two opposite ends of the linear
inactive ordering, whereas the line-neighbor property requires them to
be consecutive. If only two inactive atoms remain, the two complementary
directed cells have the same endpoints, and we take
\(\mathcal C^{\star}\) to be the one whose interior does not contain the
common cut.

Thus, in all cases, the common cut lies outside
\(\mathcal C^{\star}\). The one-dimensional locality principle then
shows that the difference between the two optimal line costs is exactly
the local marginal of this cell:
\[
\mathfrak m(\mathcal C^{\star})
=
W_{1}^{\mathrm{line}}(A_{k+1}^{\star})
-
W_{1}^{\mathrm{line}}(A_k)
=
C_{k+1}^{\circ}-C_k^{\circ}
=
\delta_{k+1}.
\]
It follows that
\[
\min_{\mathcal C\in\mathfrak C_k}
\mathfrak m(\mathcal C)
\leq
\mathfrak m(\mathcal C^{\star})
=
\delta_{k+1}.
\]
Combining this inequality with \eqref{eq:greedy-lower-bound} proves
\eqref{eq:greedy-choice}.

Finally, suppose that \(\mathcal C\) attains the minimum in
\eqref{eq:greedy-choice}. The construction in the first part of the
proof gives a feasible circular cardinality-\((k+1)\) matching whose
cost is at most
\[
C_k^\circ+\mathfrak m(\mathcal C)
=
C_k^\circ+\delta_{k+1}
=
C_{k+1}^\circ.
\]
Since \(C_{k+1}^\circ\) is the minimum cost among all feasible
cardinality-\((k+1)\) matchings, this matching must attain
\(C_{k+1}^\circ\). Therefore, activating the endpoints of any minimizing
candidate cell produces a circularly optimal active set \(A_{k+1}\).
\end{proof}

\begin{corollary}[Optimal nested sequence]
\label{cor:greedy-sequence}
Starting from $A_0=\varnothing$ and repeatedly selecting a minimum-marginal candidate cell produces an optimal active set for every cardinality $k=0,\dots,K$.
\end{corollary}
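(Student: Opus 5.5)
The corollary follows from \cref{thm:greedy} by induction on $k$. The inductive hypothesis at stage $k$ is: the active set $A_k$ produced by $k$ minimum-marginal selections is circularly optimal, that is, $W_{1,\circ}(A_k)=C_k^\circ$, and it was generated by successive cell selections.

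The base case $k=0$ is immediate. $A_0=\varnothing$ is the active set of the empty matching, which is the unique cardinality-$0$ matching and hence optimal with $C_0^\circ=0$. It is generated by the empty sequence of selections. At this stage every current cell is a single original gap, so \cref{lem:free-gap} holds trivially.

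For the inductive step, suppose $0\le k<K$ and the hypothesis holds for $A_k$. I first check that the candidate family $\mathfrak C_k$ is nonempty. Since $k<K=\min(n,m)$, $A_k^c$ contains at least one source and at least one target. Going around the cyclic order of $A_k^c$, the type must therefore change somewhere, so some consecutive inactive pair has opposite types. The minimum in \eqref{eq:greedy-choice} is thus taken over a nonempty finite set, and a minimizer exists.

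The hypotheses of \cref{thm:greedy} are exactly the inductive hypothesis. The theorem then says that activating the endpoints of any minimizing cell gives a circularly optimal $A_{k+1}$. This set is again generated by successive cell selections, since we appended one selection. It follows that $C_{k+1}^\circ=C_k^\circ+\min_{\mathfrak C_k}\mathfrak m$, and the induction closes at $k=K$.

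The one point needing care is the two-inactive-atom endgame: when only two inactive atoms remain, two directed cells share the same pair of endpoints. \cref{thm:greedy} already handles this case, by taking the free gap in the complementary cell. Its proof also requires \cref{lem:free-gap} to hold at every stage up to $K-1$, and that lemma does hold before termination. So I expect no real obstacle: the substantive work was done in \cref{thm:greedy}, and this corollary only chains it together with nonemptiness of the candidate set.
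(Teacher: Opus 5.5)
Your proposal is correct and follows the argument the paper intends: the corollary has no separate proof because it is the induction from $A_0=\varnothing$, applying \cref{thm:greedy} at each step, whose conclusion (a circularly optimal $A_{k+1}$ obtained by one more cell selection) re-establishes its own hypotheses. Your explicit check that $\mathfrak C_k\neq\varnothing$ whenever $k<K$ is a detail the paper leaves implicit, and you handle it correctly.
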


\subsection{A constructive simultaneous cut}

\begin{theorem}[Simultaneous optimal cut]
\label{thm:simultaneous-cut}
There exists an original support gap $\theta^\star$ such that, for every $k=0,\dots,K$,
\begin{equation}
C_{k,\theta^\star}^{\mathrm{line}}=C_k^{\circ}.
\label{eq:simultaneous-cut}
\end{equation}
Moreover, the greedy algorithm constructs such a gap.
\end{theorem}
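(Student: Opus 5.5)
The plan is to take $\theta^\star$ to be any original gap that is \emph{free at termination}, meaning it lies inside none of the $K$ cells $\mathcal C_1,\dots,\mathcal C_K$ selected by the greedy run. The first step is to show such a gap exists. Before the last selection, \cref{lem:free-gap} applies, so every current cell contains a free gap. At the last step $k=K-1$ we select a candidate $\mathcal C_K$. If more than two inactive atoms remain, some other current cell is untouched by this selection, and its free gap stays free. If exactly two inactive atoms remain, the complementary directed cell $\arc{v}{u}$ is a current cell distinct from $\mathcal C_K$. By \cref{lem:free-gap} it holds a free gap, and since it is disjoint from the interior of $\mathcal C_K$, that gap is not marked used. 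This is exactly the representative \cref{alg:pawc-main} records. Since a gap's label never changes from used back to free, this gap was free at every earlier stage too, i.e.\ it lies in none of $\mathcal C_1,\dots,\mathcal C_k$ for every $k$.

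The second step proves $C^{\mathrm{line}}_{k,\theta^\star}\le C_k^\circ$ for every $k$. By \cref{cor:greedy-sequence}, the greedy active sets $A_k$ are circularly optimal, and by \cref{thm:greedy} we have $C_k^\circ=\sum_{r\le k}\mathfrak m(\mathcal C_r)$. Because $\theta^\star$ avoids every $\mathcal C_r$ with $r\le k$, \cref{lem:cut-compatibility} gives $W^{\mathrm{line}}_{1,\theta^\star}(A_k)=\sum_{r\le k}\mathfrak m(\mathcal C_r)=C_k^\circ$. This cost is attained by an explicit cardinality-$k$ matching on the unwrapped line, so $C^{\mathrm{line}}_{k,\theta^\star}\le C_k^\circ$.

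The third step is the reverse inequality. For every matching $M$, its line cost under any fixed cut dominates its geodesic cost, because $|x-y|$ in the unwrapped coordinates is at least $\dcirc(x,y)$. Hence $C^{\mathrm{line}}_{k,\theta^\star}\ge C_k^\circ$, which is also implicit in \cref{cor:cut-envelope}. Combining the two inequalities gives \eqref{eq:simultaneous-cut} for all $k$ at once, and the case $k=0$ is trivial.

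The step I expect to be the main obstacle is the endgame in the first step. I need to check that when the last selection occurs with exactly two inactive atoms, the complementary cell's free gap really lies outside $\mathcal C_K$. The two directed cells share only their endpoints, so their open gaps are disjoint, and this should be fine. I also need to check that the reasoning behind \cref{lem:cut-compatibility} still covers the final step, where the cell being activated wraps almost all the way around the circle. Because $\theta^\star$ is outside $\mathcal C_K$, that cell unwraps to a contiguous interval, and \cref{lem:line-locality} applies verbatim.
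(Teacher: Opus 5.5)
Your proof is correct and follows essentially the same argument as the paper. You pick a gap that lies in no selected cell, obtain $C^{\mathrm{line}}_{k,\theta^\star}\le C_k^\circ$ from cut compatibility combined with the exact greedy theorem, and get the reverse inequality because unwrapped distances dominate geodesic ones. Your split at the last selection (more than two versus exactly two inactive atoms) is exactly the paper's $n\neq m$ versus $n=m$ split, since $|n-m|+2$ atoms are inactive before the final step. One small step is implicit in your proposal and only asserted in the paper's proof: the one-line induction that the stored representative, inherited from an unselected neighbour, is itself free. That induction is what guarantees the algorithm's output is one of your free gaps.
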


\begin{proof}
For every current cell, maintain one representative free gap.  Initially this is the cell's unique original gap.  When a selected cell and its two neighbors merge, let the new cell inherit a free-gap representative from either unselected neighboring cell, as guaranteed by \cref{lem:free-gap}.  \Cref{fig:simultaneous-cut} traces the resulting gap through a complete run: one original gap is avoided by every selection, so the same cut serves all cardinalities.

If $n\neq m$, the algorithm terminates with inactive atoms from only the larger measure.  Choose a representative free gap from any remaining current cell.  If $n=m$, the final step starts with two inactive atoms and two complementary cells.  Select one cell and retain the free-gap representative of the other.  In both cases, the resulting gap $\theta^\star$ belongs to no selected cell.

By \cref{lem:cut-compatibility}, cutting at $\theta^\star$ gives
\[
W_{1,\theta^\star}^{\mathrm{line}}(A_k)
=
\sum_{r=1}^{k}\mathfrak m(\mathcal C_r)
=
C_k^{\circ}
\]
for every greedy active set $A_k$, where the last equality follows from \cref{thm:greedy}.  Therefore
\[
C_{k,\theta^\star}^{\mathrm{line}}
\leq C_k^{\circ}.
\]
Conversely, any line matching is feasible on the circle with no larger geodesic cost, so
\[
C_k^{\circ}\leq C_{k,\theta^\star}^{\mathrm{line}}.
\]
The two inequalities imply \eqref{eq:simultaneous-cut}.
\end{proof}

\begin{figure}[t]
\centering
\begin{minipage}[t]{0.34\linewidth}
\centering
\textbf{(a) One free gap survives every selection}\par\medskip
\begin{tikzpicture}[
 scale=.9,font=\small,
 srcpt/.style={circle,draw=src,fill=src,inner sep=1.6pt},
 tgtpt/.style={circle,draw=tgt,fill=tgt,inner sep=1.6pt},
 sel/.style={draw=candidate!75!black,line width=2.2pt,line cap=round},
 cut/.style={draw=freegap,densely dashed,line width=1.4pt}
]
\def\R{2.0}
\draw[line width=.8pt] (0,0) circle (\R);
\draw[cut] (32:\R) arc[start angle=32,end angle=48,radius=\R];
\node[freegap] at (40:2.42) {$\theta^\star$};
\draw[sel] (120:\R) arc[start angle=120,end angle=160,radius=\R];
\draw[sel] (200:\R) arc[start angle=200,end angle=240,radius=\R];
\draw[sel] (280:\R) arc[start angle=280,end angle=320,radius=\R];
\node[srcpt,label={[src]120:$x_1$}] at (120:\R) {};
\node[tgtpt,label={[tgt]160:$y_1$}] at (160:\R) {};
\node[srcpt,label={[src]200:$x_2$}] at (200:\R) {};
\node[tgtpt,label={[tgt]240:$y_2$}] at (240:\R) {};
\node[srcpt,label={[src]280:$x_3$}] at (280:\R) {};
\node[tgtpt,label={[tgt]320:$y_3$}] at (320:\R) {};
\node[align=center,font=\scriptsize,draw=black!45,rounded corners=2pt,inner sep=3.5pt] at (0,0)
 {no selected cell\\contains $\theta^\star$};
\end{tikzpicture}
\par\scriptsize All three selections avoided the gap $\theta^\star$, so the same cut is valid for every $k$.
\end{minipage}\hfill
\begin{minipage}[t]{0.62\linewidth}
\centering
\textbf{(b) Nested optimal plans after cutting at $\theta^\star$}\par\medskip
\resizebox{\linewidth}{!}{%
\begin{tikzpicture}[
 x=.88cm,y=1cm,font=\small,
 srcpt/.style={circle,draw=src,fill=src,inner sep=1.6pt},
 srcinactive/.style={circle,draw=src,fill=white,inner sep=2.1pt,line width=.9pt},
 tgtpt/.style={circle,draw=tgt,fill=tgt,inner sep=1.6pt},
 tgtinactive/.style={circle,draw=tgt,fill=white,inner sep=2.1pt,line width=.9pt},
 match/.style={draw=black!65,line width=.8pt,-{Stealth[length=1.5mm]}},
 cut/.style={draw=freegap,densely dashed,line width=1.1pt}
]
\foreach \yy/\kk in {1.25/1,0/2,-1.25/3}{
  \draw[line width=.7pt] (0,\yy)--(8.1,\yy);
  \draw[cut] (0,\yy-.22)--(0,\yy+.22);
  \draw[cut] (8.1,\yy-.22)--(8.1,\yy+.22);
  \node[anchor=east] at (-.35,\yy) {$k=\kk$};
  \node[freegap,font=\scriptsize] at (0,\yy-.42) {$\theta^\star$};
  \node[freegap,font=\scriptsize] at (8.1,\yy-.42) {$\theta^\star{+}L$};
}
\node[srcpt] at (1.0,1.25) {};  \node[tgtpt] at (2.0,1.25) {};
\node[srcinactive] at (3.0,1.25) {}; \node[tgtinactive] at (4.15,1.25) {};
\node[srcinactive] at (5.4,1.25) {}; \node[tgtinactive] at (6.4,1.25) {};
\draw[match] (1.0,1.25) to[bend left=35] (2.0,1.25);
\node[srcpt] at (1.0,0) {};  \node[tgtpt] at (2.0,0) {};
\node[srcpt] at (3.0,0) {};  \node[tgtpt] at (4.15,0) {};
\node[srcinactive] at (5.4,0) {}; \node[tgtinactive] at (6.4,0) {};
\draw[match] (1.0,0) to[bend left=35] (2.0,0);
\draw[match] (3.0,0) to[bend left=35] (4.15,0);
\node[srcpt] at (1.0,-1.25) {};  \node[tgtpt] at (2.0,-1.25) {};
\node[srcpt] at (3.0,-1.25) {};  \node[tgtpt] at (4.15,-1.25) {};
\node[srcpt] at (5.4,-1.25) {};  \node[tgtpt] at (6.4,-1.25) {};
\draw[match] (1.0,-1.25) to[bend left=35] (2.0,-1.25);
\draw[match] (3.0,-1.25) to[bend left=35] (4.15,-1.25);
\draw[match] (5.4,-1.25) to[bend left=35] (6.4,-1.25);
\node[align=center,font=\scriptsize,draw=black!45,rounded corners=2pt,inner sep=4pt] at (4.05,-2.35)
 {$C^{\rm line}_{k,\theta^\star}=C^\circ_k$ for every $k$; each plan is recovered by sorted matching};
\end{tikzpicture}}
\par\scriptsize The circle of (a), unwrapped at $\theta^\star$ into the interval $[\theta^\star,\theta^\star{+}L]$.  Atoms appear in the same cyclic order ($x_1$,\,$y_1$,\,$x_2$,\,$y_2$,\,$x_3$,\,$y_3$); one admissible nested activation order is shown.
\end{minipage}
\caption{Constructive simultaneous cut.  The free gap $\theta^\star$ is contained in no selected cell, so the complete nested sequence can be cut and unwrapped at the same location.  On the resulting line, the active sets grow by local sorted-matching updates and $C_{k,\theta^\star}^{\mathrm{line}}=C_k^\circ$ for every transported cardinality $k$.}
\label{fig:simultaneous-cut}
\end{figure}

\begin{remark}[Existence is not uniqueness]
\label{rem:not-every-cut}
The theorem asserts existence of at least one simultaneous cut.  It does not claim
that every cut optimal for the full-mass problem is optimal for all smaller masses.
The distinction is not academic, and it is what rules out the obvious shortcut:
cut once at a gap optimal for $k=K$, run line \PAWL{} there, and read off the whole
profile.  That procedure is $O(N\log N)$ and uses none of the structure above.
\Cref{ex:one-cut-unbounded} shows that its error is unbounded.
\end{remark}

\begin{example}[A cut optimal at $k=K$ and unbounded at $k=1$]
\label{ex:one-cut-unbounded}
Take $n=m=2$ on a circle of circumference $L$, with $0<\delta<L/4$:
\begin{equation}
x=\{0,\;L/4\},\qquad y=\{\delta,\;3L/4\}.
\label{eq:one-cut-counterexample}
\end{equation}
Write $G_0=(0,\delta)$ and $G_2=(L/4,3L/4)$ for the two gaps whose cumulative
imbalance sits one level above their neighbours.  Together they carry
$\delta+L/2>L/2$ of the circumference, so that level is the weighted median in
\eqref{eq:circular-median} and, by \cref{prop:fixed-cut},
\begin{equation}
G_0,G_2\in \Theta_K:=\bigl\{r:\ C_{K,r}^{\mathrm{line}}=C_K^{\circ}\bigr\},
\qquad K=2 .
\end{equation}
Both are therefore admissible choices for a rule that cuts where the full-mass
problem is solved, and $G_0$ is the first of them in index order.

At $k=1$ the two disagree.  The closest opposite-type pair is $(0,\delta)$, so
$C_1^{\circ}=w\delta$, and cutting in $G_2$ recovers it.  Cutting in $G_0$ separates
that pair: their unwrapped distance becomes $L-\delta$, and the sorted line matching
falls back on the runner-up $d_2=L/4-\delta$.  Hence
\begin{equation}
\frac{C_{1,G_0}^{\mathrm{line}}-C_1^{\circ}}{C_1^{\circ}}
=\frac{d_2-\delta}{\delta}
=\frac{L/4-2\delta}{\delta}
\;\xrightarrow[\ \delta\to0\ ]{}\;\infty .
\label{eq:one-cut-blowup}
\end{equation}
The mechanism is that $C_1^{\circ}$ is a \emph{geodesic} nearest-pair distance
whereas $C_{1,r}^{\mathrm{line}}$ is an \emph{unwrapped} one, and nothing couples the
location of the closest pair to the median that fixes $\Theta_K$.  So no bound of the
form $C_{k,r}^{\mathrm{line}}\le c\,C_k^{\circ}$ holds uniformly over $r\in \Theta_K$, for
any constant $c$: the full-mass problem carries no information about the small-mass
ones.  This is what \cref{thm:simultaneous-cut} supplies and a single cut cannot.
\end{example}

\section{Constant-time cell-marginal queries}
\label{app:constant}

\label{sec:constant-time}

We now adapt the minimal-chain preprocessing of \PAWL{} \citep{chapel2025one} to cyclic intervals.

\subsection{Doubled cyclic sequence}
\label{sec:doubled}

Let
\[
z_1<z_2<\cdots<z_N<z_1+L
\]
be the sorted union support, and define labels
\begin{equation}
\sigma_i=
\begin{cases}
+1,&z_i\in\supp\mu,\\
-1,&z_i\in\supp\nu.
\end{cases}
\label{eq:labels}
\end{equation}
Form the doubled sequence
\begin{equation}
\widetilde z_i=z_i,
\quad
\widetilde z_{i+N}=z_i+L,
\quad
\widetilde\sigma_i=\widetilde\sigma_{i+N}=\sigma_i,
\qquad i=1,\dots,N.
\label{eq:doubled}
\end{equation}
Every clockwise circular cell can be represented by a standard interval $[a,b]\subset\{1,\dots,2N\}$ with $b-a<N$.

Define prefix differential ranks and signed coordinate sums
\begin{equation}
R_0=0,
\qquad
R_t=\sum_{i=1}^{t}\widetilde\sigma_i,
\label{eq:rank-prefix}
\end{equation}
\begin{equation}
S_0=0,
\qquad
S_t=\sum_{i=1}^{t}\widetilde\sigma_i\widetilde z_i.
\label{eq:signed-prefix}
\end{equation}
An interval $[a,b]$ is balanced if and only if
\begin{equation}
R_b=R_{a-1}.
\label{eq:balanced-rank}
\end{equation}

For each $t$, let
\begin{equation}
p_t=\max\{s<t:R_s=R_t\},
\label{eq:previous-rank}
\end{equation}
when this set is nonempty.  Then $[p_t+1,t]$ is the minimal balanced interval ending at $t$.

\begin{lemma}[Minimal-chain cost]
\label{lem:minimal-chain-cost}
If $p_t$ exists, then
\begin{equation}
c_{\mathbb R}([p_t+1,t])
=
w\left|S_t-S_{p_t}\right|.
\label{eq:minimal-cost}
\end{equation}
\end{lemma}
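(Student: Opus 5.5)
The plan is to exploit the minimality of the interval $I=[p_t+1,t]$ in terms of the differential rank. By definition of $p_t$ we have $R_{p_t}=R_t$ and $R_s\neq R_t$ for every $p_t<s<t$. Since $R$ moves by $\pm1$ at each step, $R_s-R_t$ has a constant sign $\epsilon\in\{+1,-1\}$ for all $s\in(p_t,t)$: if it changed sign it would have to pass through $0$ strictly inside. Equivalently, the walk $s\mapsto R_s-R_{p_t}$ leaves $0$ at $s=p_t+1$, stays strictly on one side of $0$, and first returns at $s=t$. The first step is to record this as the only structural input.

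The second step is to show that, on such a one-signed excursion, every increasing-order matched pair is oriented the same way. Take $\epsilon=+1$, so the first atom $\widetilde z_{p_t+1}$ is a source. Then at every prefix of $I$ the number of sources is at least the number of targets. Consequently the $i$-th target in sorted order comes after the $i$-th source, because at the position of the $i$-th target at most $i$ targets and at least $i$ sources have been seen. Each $i$-th target also comes strictly after the $i$-th source, since the two atoms are distinct. Hence, with $x_{(i)},y_{(i)}$ the sorted sources and targets of $I$,
\[
c_{\mathbb R}(I)=w\sum_i\bigl(y_{(i)}-x_{(i)}\bigr)=w\Bigl(\sum_{\text{targets}}\widetilde z-\sum_{\text{sources}}\widetilde z\Bigr)=-w\,(S_t-S_{p_t}).
\]
The case $\epsilon=-1$ is symmetric and gives $+w(S_t-S_{p_t})$. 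In both cases the sign of the sum is forced, so $c_{\mathbb R}(I)=w\,|S_t-S_{p_t}|$, which is \eqref{eq:minimal-cost}. The identity $S_t-S_{p_t}=\sum_{i\in I}\widetilde\sigma_i\widetilde z_i$ follows directly from \eqref{eq:signed-prefix}.

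The main obstacle is the counting claim that the $i$-th source precedes the $i$-th target. I would argue it via the prefix counts directly, as above, and check the boundary positions $p_t+1$ and $t$ explicitly. I would also note that the coordinates $\widetilde z$ are increasing on the doubled sequence, so that the line distance is just $y-x$. Because $I$ lies within the doubled sequence with $t-p_t\le N$, it corresponds to a contiguous arc, and \eqref{eq:line-cost} applies verbatim.
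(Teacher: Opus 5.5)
Your proof is correct and follows the paper's argument: maximality of $p_t$ makes the excursion $R_s-R_{p_t}$ strictly one-signed on $(p_t,t)$, so every increasing-order pair has the same orientation and the cost telescopes to $w|S_t-S_{p_t}|$. Your prefix-counting step spells out the orientation claim that the paper states in one line. The closing appeal to $t-p_t\le N$ is not justified as written, though it is true, and it is not needed, since $c_{\mathbb R}$ of doubled positions is simply the line cost of the strictly increasing coordinates $\widetilde z$.
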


\begin{proof}
By the maximality of $p_t$, the relative rank $R_s-R_{p_t}$ does not return to zero for $p_t<s<t$.  It therefore has a constant strict sign throughout the interval.  Equivalently, every pair in the increasing-order line matching has the same orientation.  The sum of absolute pair displacements is consequently the absolute difference between the sum of source coordinates and the sum of target coordinates, which is $|S_t-S_{p_t}|$.  Multiplying by $w$ gives \eqref{eq:minimal-cost}.  This is the minimal-chain identity used by \PAWL{} for the Manhattan cost \citep{chapel2025one}.
\end{proof}

\Cref{fig:minimal-chains} illustrates the two ingredients: a balanced interval decomposing into adjacent minimal chains, and the differential-rank walk whose first returns identify them.  Define the maximal-chain prefix cost $Q_t$ recursively by
\begin{equation}
Q_0=0,
\qquad
Q_t=
\begin{cases}
Q_{p_t}+w|S_t-S_{p_t}|,&p_t\text{ exists},\\
0,&p_t\text{ does not exist}.
\end{cases}
\label{eq:Q-recurrence}
\end{equation}

\begin{proposition}[Balanced interval query]
\label{prop:balanced-query}
For every balanced interval $[a,b]$ in the doubled sequence,
\begin{equation}
c_{\mathbb R}([a,b])=Q_b-Q_{a-1}.
\label{eq:balanced-query}
\end{equation}
\end{proposition}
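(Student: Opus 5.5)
I will prove \eqref{eq:balanced-query} by decomposing the balanced interval $[a,b]$ into adjacent minimal chains and telescoping the recursion \eqref{eq:Q-recurrence}. Since $[a,b]$ is balanced, $R_b=R_{a-1}$ by \eqref{eq:balanced-rank}. Define the backward chain $t_0=b$, $t_{j+1}=p_{t_j}$, continuing while $t_j>a-1$. I first check that this chain lands exactly on $a-1$. Every $t_j$ has rank $R_{t_j}=R_b=R_{a-1}$, because $p_t$ preserves rank. Also, $a-1$ is an index $s<t_j$ with $R_s=R_{t_j}$. Hence $p_{t_j}$ exists and satisfies $p_{t_j}\ge a-1$. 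The sequence strictly decreases, so it reaches some $t_J=a-1$ after finitely many steps. This yields the decomposition
\[
[a,b]=\bigcup_{j=0}^{J-1}[t_{j+1}+1,t_j],
\]
where each piece is the minimal balanced interval ending at $t_j$.

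Next, I show that the line cost is additive over this decomposition. For a concatenation of adjacent balanced intervals, the increasing-order matching of the union is the concatenation of the increasing-order matchings of the pieces. The reason is that source and target ranks agree at every boundary $t_j$: the $i$-th source in the union lies in the same piece as the $i$-th target. This is the same rank-agreement fact used in the locality part of \cref{lem:line-locality}. Consequently,
\[
c_{\mathbb R}([a,b])=\sum_{j}c_{\mathbb R}([t_{j+1}+1,t_j]).
\]
By \cref{lem:minimal-chain-cost}, each summand equals $w|S_{t_j}-S_{t_{j+1}}|$.

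Then I telescope. Since $p_{t_j}$ exists for $j<J$, the recursion gives
\[
Q_{t_j}-Q_{t_{j+1}}=w|S_{t_j}-S_{t_{j+1}}|.
\]
Summing over $j=0,\dots,J-1$ gives $Q_b-Q_{a-1}$, which matches the cost computed above. The empty case $a=b+1$ is trivial. The index $a-1=0$ is covered by the $t=0$ column, and $Q_0=0$.

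The step I expect to be delicate is the claim that the backward chain does not jump over $a-1$. It relies on $p_t$ being the \emph{last} earlier index of equal rank, which guarantees $p_{t_j}\ge a-1$ at every step. A second point to check is that the doubled indexing causes no trouble. The formula only ever consults indices between $a-1$ and $b$, and $R,S,Q$ are defined on the whole doubled line. So the condition $b-a<N$ is needed only for the cell to correspond to a genuine arc of the circle, not for the identity itself.
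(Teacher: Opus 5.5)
Your proof is correct and follows the paper's argument. You decompose $[a,b]$ by iterating $p_t$ from $b$ down to $a-1$, use that the increasing-order matching never crosses a boundary between adjacent balanced blocks (so their costs add), and telescope the recursion for $Q$. Your explicit check that the predecessor chain cannot skip over $a-1$, because $p_t$ is the \emph{last} earlier index of equal rank, is the step the paper compresses into the phrase that the chain decomposition ending at $a-1$ is a prefix of the one ending at $b$.
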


\begin{proof}
Repeatedly applying $p_t$ decomposes every balanced interval into a unique sequence of adjacent minimal balanced intervals.  The increasing-order line matching never crosses a boundary between adjacent balanced blocks, so their costs add.  Because $R_b=R_{a-1}$, the maximal-chain decomposition ending at $a-1$ is exactly the prefix of the maximal-chain decomposition ending at $b$.  Subtracting their accumulated costs gives \eqref{eq:balanced-query}.  This is the same dynamic-programming principle as the maximal-chain cost identity in \PAWL{} \citep{chapel2025one}.
\end{proof}

\begin{corollary}[Constant-time candidate marginal]
\label{cor:constant-marginal}
Let a candidate cell be represented by the balanced doubled interval $[a,b]$.  Then
\begin{equation}
\mathfrak m([a,b])
=
\bigl(Q_b-Q_{a-1}\bigr)
-
\bigl(Q_{b-1}-Q_a\bigr),
\label{eq:constant-marginal}
\end{equation}
with the second term interpreted as zero when the interior is empty.  Hence a candidate marginal is evaluated in $O(1)$ time after preprocessing.
\end{corollary}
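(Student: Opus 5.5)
The formula follows from \cref{prop:balanced-query} applied twice: once to the closed cell $[a,b]$ and once to its interior $[a+1,b-1]$. Substituting both into \cref{def:cell-marginal} gives \eqref{eq:constant-marginal}. So the proof reduces to checking that both intervals satisfy the proposition's hypotheses and that the doubled-sequence line costs agree with the cell costs of \cref{def:cell-marginal}.

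\textbf{The closed cell.} Since $[a,b]$ is a candidate cell, its endpoints have opposite types. By \cref{lem:balanced-interiors} its interior is balanced. Together these give $R_b=R_{a-1}$, so $[a,b]$ is balanced by \eqref{eq:balanced-rank}.

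\textbf{The interior.} The interior $[a+1,b-1]$ satisfies
\[
R_{b-1}-R_a=(R_b-\widetilde\sigma_b)-(R_{a-1}+\widetilde\sigma_a)=-(\widetilde\sigma_a+\widetilde\sigma_b)=0,
\]
again because the endpoint labels are opposite. Hence it is also balanced. \cref{prop:balanced-query} then gives $c_{\mathbb R}([a+1,b-1])=Q_{b-1}-Q_a$. When $b=a+1$ the interior is empty, and we set its cost to zero by convention, matching the statement.

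\textbf{Agreement with the cell costs.} The doubled coordinates $\widetilde z_a<\dots<\widetilde z_b$ with $b-a<N$ are exactly the unwrapped coordinates of the clockwise arc from $u$ to $v$, up to a common translation. Both \eqref{eq:line-cost} and $c_{\mathbb R}$ are translation invariant. So the doubled-interval costs coincide with $c_{\mathbb R}(\mathcal C)$ and $c_{\mathbb R}(\mathcal C^\circ)$ computed at any cut outside $\mathcal C$, and subtracting the two costs yields \eqref{eq:constant-marginal}.

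\textbf{Main obstacle.} The one delicate point is that \cref{prop:balanced-query} is applied to intervals whose indices may reach into the second copy of the sequence. The recursion defining $p_t$ and $Q_t$ runs over the whole doubled sequence, and the decomposition argument in that proof never uses $b\le N$. I would check that the minimal-chain decomposition of $[a,b]$ stays inside $[a,b]$, which holds because $R_b=R_{a-1}$ forces the backward chain from $b$ to hit $a-1$. Given this, the $O(1)$ claim is immediate: $Q$ is precomputed in $O(N)$ by one sweep, and each query needs only four table lookups.
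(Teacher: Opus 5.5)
Your proof is correct and follows the paper's argument: apply \cref{prop:balanced-query} to $[a,b]$ and to $[a+1,b-1]$, then substitute into \eqref{eq:cell-marginal}. Your extra checks only make explicit what the paper's two-line proof leaves implicit: outer balance from opposite endpoint types, the doubled coordinates agreeing with the unwrapped cell up to translation, and the backward chain from $b$ hitting $a-1$. Two cosmetic points: your re-derivation of interior balance uses $R_b=R_{a-1}$, which you obtained from that same interior balance, so it is redundant rather than independent; and the empty-interior case needs no convention, since $Q_{b-1}-Q_a=Q_a-Q_a=0$ when $b=a+1$.
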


\begin{proof}
The outer interval $[a,b]$ and the interior $[a+1,b-1]$ are balanced by \cref{lem:balanced-interiors}.  Apply \cref{prop:balanced-query} to both and use \eqref{eq:cell-marginal}.
\end{proof}

\begin{figure}[t]
\centering
\centering
\textbf{(a) A balanced interval in the doubled sequence decomposes into minimal balanced chains}\par\medskip
\begin{tikzpicture}[
 x=.92cm,y=1cm,font=\small,
 srcpt/.style={circle,draw=src,fill=src,inner sep=1.7pt},
 tgtpt/.style={circle,draw=tgt,fill=tgt,inner sep=1.7pt},
 match/.style={draw=black!65,line width=.8pt}
]
\draw[line width=.7pt] (0,0)--(10.2,0);
\node at (.25,0) {$\cdots$}; \node at (9.9,0) {$\cdots$};
\node[tgtpt,label={[tgt]below:$y_1$}] (y1) at (1,0) {};
\node[srcpt,label={[src]below:$x_1$}] (x1) at (2,0) {};
\node[srcpt,label={[src]below:$x_2$}] (x2) at (4,0) {};
\node[srcpt,label={[src]below:$x_3$}] (x3) at (5,0) {};
\node[tgtpt,label={[tgt]below:$y_2$}] (y2) at (6,0) {};
\node[tgtpt,label={[tgt]below:$y_3$}] (y3) at (7,0) {};
\node[tgtpt,label={[tgt]below:$y_4$}] (y4) at (8.2,0) {};
\node[srcpt,label={[src]below:$x_4$}] (x4) at (9.2,0) {};
\draw[match] (y1) to[bend left=40] (x1);
\draw[match] (x2) to[bend left=45] (y2);
\draw[match] (x3) to[bend left=45] (y3);
\draw[match] (x4) to[bend right=40] (y4);
\draw[decorate,decoration={brace,amplitude=5pt},draw=magenta!75!black] (.75,-.6)--(2.25,-.6) node[midway,below=1pt,text=magenta!75!black] {$\mathcal C_1^\star$};
\draw[decorate,decoration={brace,amplitude=5pt},draw=magenta!75!black] (3.75,-.6)--(7.25,-.6) node[midway,below=1pt,text=magenta!75!black] {$\mathcal C_2^\star$};
\draw[decorate,decoration={brace,amplitude=5pt},draw=magenta!75!black] (7.95,-.6)--(9.45,-.6) node[midway,below=1pt,text=magenta!75!black] {$\mathcal C_3^\star$};
\draw[decorate,decoration={brace,amplitude=5pt},draw=candidate!70!black,line width=1pt] (.5,-1.3)--(9.7,-1.3) node[midway,below=6pt,text=candidate!45!black] {$[a,b]=\mathcal C_1^\star\cup\mathcal C_2^\star\cup\mathcal C_3^\star$};
\end{tikzpicture}
\[
\widetilde z_{i+N}=\widetilde z_i+L,
\qquad
 c_{\mathbb R}([a,b])=\sum_r c_{\mathbb R}(\mathcal C_r^\star)=Q_b-Q_{a-1}.
\]
\vspace{1mm}
\begin{minipage}[t]{0.57\linewidth}
\centering
\textbf{(b) Differential-rank walk}\par\medskip
\resizebox{\linewidth}{!}{%
\begin{tikzpicture}[x=.72cm,y=.6cm,font=\small]
\draw[->] (.3,0)--(10.6,0) node[right] {$t$};
\draw[->] (.3,-1.9)--(.3,2.6) node[above] {$R_t-R_{a-1}$};
\draw[densely dashed,black!35] (.3,0)--(10.4,0);
\draw[line width=.9pt]
 (.45,0)--(1,0)--(1,-1)--(2,-1)--(2,0)--(4,0)--(4,1)--(5,1)--(5,2)--(6,2)--(6,1)--(7,1)--(7,0)--(8.2,0)--(8.2,-1)--(9.2,-1)--(9.2,0)--(10.1,0);
\foreach \x in {.6,2,7,9.2}{\fill[magenta!75!black] (\x,0) circle (1.7pt);}
\node[magenta!75!black,font=\scriptsize] at (.85,.42) {$a{-}1$};
\node[magenta!75!black,font=\scriptsize] at (9.2,.42) {$b$};
\node[magenta!75!black,font=\scriptsize] at (1.5,-1.55) {first return};
\node[magenta!75!black,font=\scriptsize] at (5.5,-.55) {first return};
\node[magenta!75!black,font=\scriptsize] at (8.7,-1.55) {first return};
\end{tikzpicture}}
\end{minipage}\hfill
\begin{minipage}[t]{0.40\linewidth}
\small
A balanced interval satisfies $R_b=R_{a-1}$.  For each endpoint $t$, the most recent previous occurrence
\[
p_t=\max\{s<t:R_s=R_t\}
\]
identifies the minimal chain ending at $t$.  Its cost is
\[
c_{\mathbb R}([p_t+1,t])=w\lvert S_t-S_{p_t}\rvert.
\]
When $[a,b]$ represents a candidate cell, as above (opposite-type endpoints, balanced interior), its marginal
\[
\mathfrak m([a,b])=(Q_b-Q_{a-1})-(Q_{b-1}-Q_a)
\]
is obtained from four table lookups.
\end{minipage}
\caption{Constant-time cell-cost preprocessing on the doubled cyclic sequence.  A balanced interval decomposes uniquely into adjacent minimal balanced chains, analogous to the chain decomposition in \PAWL{} \citep{chapel2025one}.  Repeated differential-rank values identify the minimal chains; their costs are signed-coordinate prefix differences.  The accumulated values $Q_t$ then provide both the cell cost and its interior cost with constant-time table lookups.}
\label{fig:minimal-chains}
\end{figure}

All values $R_t,S_t,p_t,Q_t$ for $t=0,\dots,2N$ are computed in $O(N)$ time after sorting.  The predecessor $p_t$ is found by storing the most recent index of each integer differential-rank value in a hash table or array indexed over the rank range $[-2N,2N]$.

\section{Correctness and complexity}
\label{app:complexity}

The active/inactive cyclic order is maintained by a circular doubly linked list.  Each directed current cell stores its endpoints and a representative free original gap.  Candidate cells are stored in a min-heap keyed by \eqref{eq:constant-marginal}.  Heap deletion is lazy: an entry is valid only if both endpoints remain inactive and the second endpoint is still the clockwise successor of the first.

\begin{theorem}[Correctness and complexity]
\label{thm:complexity}
Under \cref{ass:standing}, \cref{alg:pawc-main} returns the exact values $C_k^{\circ}$ and an optimal active set for every $k=0,\dots,K$, together with a gap $\theta^\star$ satisfying \eqref{eq:simultaneous-cut}.  Its time and memory complexities are
\begin{equation}
\boxed{\text{time }=O(N\log N),\qquad \text{memory }=O(N).}
\label{eq:complexity}
\end{equation}
\end{theorem}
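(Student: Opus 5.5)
The plan is to split the proof into correctness, which reduces to the results already established, and a resource count for the loop of \cref{alg:pawc-main}.

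\textbf{Correctness.} I would argue by induction on $k$ that after iteration $k$ the following invariant holds.
\begin{itemize}
\item The linked list is exactly the cyclic order of $A_k^c$.
\item Every current cell is balanced (\cref{lem:balanced-interiors}) and stores a representative free gap (\cref{lem:free-gap}).
\item Every current candidate cell has a heap entry with key equal to its true marginal.
\item $A_k$ is circularly optimal, with $C_k^\circ$ equal to the recorded value.
\end{itemize}
The base case is $A_0=\varnothing$. Each initial cell is one gap, and its marginal is the gap length, which agrees with \cref{cor:constant-marginal}.

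For the inductive step, I would first justify lazy deletion. An entry $(u,v,\mathfrak m)$ passes the validity test exactly when $\arc{u}{v}$ is a current candidate cell. Its key is still correct, because a cell's marginal depends only on its endpoints and its atoms. The interior of a current cell can only change through a merge, and a merge creates a new pair $(p,q)$ with a fresh entry. So the first valid pop is a minimiser over $\mathfrak C_k$.

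\cref{thm:greedy} then gives $C_{k+1}^\circ=C_k^\circ+\mathfrak m$ and the optimality of $A_{k+1}$. The two-inactive-atom endgame needs one check: both directed cells $\arc{u}{v}$ and $\arc{v}{u}$ are in the heap, and the theorem's minimum ranges over both of them. The list surgery and the inheritance of a representative from $\arc{p}{u}$ or $\arc{v}{q}$ restore the invariant, by the proof of \cref{lem:free-gap}. The output $\theta^\star$ is chosen exactly as in the proof of \cref{thm:simultaneous-cut}, so \eqref{eq:simultaneous-cut} holds.

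\textbf{Complexity.} Sorting costs $O(N\log N)$. Computing the doubled prefix tables $R,S,p,Q$ costs $O(N)$, using an array indexed by rank in $[-2N,2N]$.

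For the heap, there are at most $N$ initial entries and at most one push per activation, so at most $N+K$ pushes in total. Each entry is popped at most once, so there are $O(N)$ pops. Each push or pop costs $O(\log N)$. Every other per-iteration step costs $O(1)$: the validity test via the list pointers, the key evaluation via \cref{cor:constant-marginal}, the splice, and the representative copy. The total is therefore $O(N\log N)$.

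Memory consists of the tables of length $2N+1$, the list, the per-cell records and a heap with $O(N)$ entries, so it is $O(N)$.

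\textbf{Main obstacle.} The hardest part will be the bookkeeping claim that the doubled interval $[a,b]$ for each new cell $\arc{p}{q}$ has length $b-a<N$ and is the correct one. I also need heap validity to be unambiguous in the $n=m$ endgame, where the two directed cells share endpoints.

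I would handle the first point by storing each list node's index in $\{1,\dots,N\}$ and setting $b=q$ or $b=q+N$ according to whether the cell wraps past the origin. This is valid whenever the cell is a proper arc, which holds while at least two inactive atoms remain.

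For the second point, I would make the validity test directional, requiring $v=\operatorname{succ}(u)$. Then both directed entries are distinct and both remain valid.
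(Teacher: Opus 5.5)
Your proposal is correct and follows the paper's proof: correctness by induction on $k$ from \cref{thm:greedy}, with $\theta^\star$ chosen as in \cref{thm:simultaneous-cut}, and complexity from at most $N+K$ heap entries, each pushed and popped once at $O(\log N)$, with all other per-iteration work $O(1)$. Your extra checks (lazy-deletion validity, both directed cells being in the heap in the $n=m$ endgame, and the $b-a<N$ interval bookkeeping) only spell out what the paper's short proof leaves implicit. One small slip: an initial cell's marginal is $w$ times its gap length, not the gap length itself.
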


\begin{proof}
Correctness follows inductively from \cref{thm:greedy}; the simultaneous cut is guaranteed by \cref{thm:simultaneous-cut}.  Sorting costs $O(N\log N)$.  The doubled-sequence preprocessing and linked-list initialization cost $O(N)$.  There are at most $N$ initial heap entries and at most one new candidate insertion per iteration.  Thus only $O(N)$ heap entries are ever created.  Every entry is popped at most once, valid or stale, and each heap operation costs $O(\log N)$.  Candidate-key evaluation, linked-list updates, and free-gap inheritance are $O(1)$.  The heap, linked list, preprocessing arrays, and activation records all use $O(N)$ memory.
\end{proof}

\section{Plan recovery and arbitrary transported masses}
\label{app:plans}

\label{sec:plan-recovery}

\subsection{Integer cardinalities}

The activation rank of each atom determines the optimal active set $A_k$.  Cut the circle at the simultaneous gap $\theta^\star$, retain the atoms activated by iteration $k$, and match the retained source and target atoms in increasing unwrapped order.  By \cref{thm:simultaneous-cut}, this line matching has value $C_k^{\circ}$ and is therefore circularly optimal.

If the selected atoms are filtered from the already sorted circular order, the matching can be recovered in $O(N)$ time; sorting only the selected atoms gives the simpler $O(k\log k)$ implementation.

\subsection{Noninteger transported mass}

Let $s\in[0,Kw]$ and write
\begin{equation}
k=\left\lfloor\frac{s}{w}\right\rfloor,
\qquad
\lambda=\frac{s-kw}{w}\in[0,1].
\label{eq:fractional-lambda}
\end{equation}
Let $\pi_k$ and $\pi_{k+1}$ be consecutive optimal transport plans associated with the nested sequence.  Then
\begin{equation}
\pi_s=(1-\lambda)\pi_k+\lambda\pi_{k+1}
\label{eq:interpolated-plan}
\end{equation}
is feasible for transported mass $s$.

\begin{proposition}[Interpolation]
\label{prop:interpolation}
The plan \eqref{eq:interpolated-plan} is optimal, and
\begin{equation}
\PW_{\circ}(s)
=(1-\lambda)C_k^{\circ}+\lambda C_{k+1}^{\circ}.
\label{eq:interpolated-value}
\end{equation}
\end{proposition}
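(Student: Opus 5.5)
The argument has two halves: an upper bound from feasibility of the interpolated plan, and a matching lower bound from convexity of the value function $s\mapsto\PW_\circ(s)$ on $[0,Kw]$, combined with the integer values $C_k^\circ=\PW_\circ(kw)$ from \eqref{eq:integer-problem}.

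\emph{Feasibility and upper bound.} The constraints \eqref{eq:partial-constraints} are linear. Row and column sums of $\pi_s$ are convex combinations of those of $\pi_k$ and $\pi_{k+1}$, so they are at most $w$. The total mass is $(1-\lambda)kw+\lambda(k+1)w=s$. The objective is linear, so the cost of $\pi_s$ is $(1-\lambda)C_k^\circ+\lambda C_{k+1}^\circ$, and hence $\PW_\circ(s)\le(1-\lambda)C_k^\circ+\lambda C_{k+1}^\circ$. Nestedness is not needed for this step; any optimal plans at the two integer masses will do.

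\emph{Convexity.} Let $s_1,s_2\in[0,Kw]$ with optimal plans $\pi^1,\pi^2$ (they exist because the feasible sets are compact polytopes). For $t\in[0,1]$, the plan $t\pi^1+(1-t)\pi^2$ is feasible for mass $ts_1+(1-t)s_2$, again by linearity of the constraints. Its cost is $t\PW_\circ(s_1)+(1-t)\PW_\circ(s_2)$, so $\PW_\circ$ is convex. This is the parametric-LP fact that the optimal value is convex in the right-hand side.

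\emph{Lower bound, the main step.} Convexity alone only gives a convex function that agrees with the chord at the two endpoints. We also need it to lie on or above that chord in between, i.e.\ that $\PW_\circ$ is piecewise linear with breakpoints only at integer masses. I would show this through integrality of the polytope: for fixed $s$ the rescaled feasible set is $\{\gamma\ge0:\ \text{row sums}\le1,\ \text{column sums}\le1,\ \sum_{ij}\gamma_{ij}=s/w\}$.

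Let $P$ be the bipartite matching polytope, defined by the degree constraints and $\gamma\ge0$. It is integral, since its constraint matrix is totally unimodular. The feasible set for mass $s$ is the slice $P\cap\{\mathbf 1^\top\gamma=\sigma\}$ with $\sigma=s/w\in[k,k+1]$. Every vertex of this slice is either a vertex of $P$ or lies on an edge of $P$ crossing the hyperplane. Edges of the matching polytope join two matchings $M,M'$ whose symmetric difference is a single alternating path or cycle, so their cardinalities differ by at most one. Hence any vertex of the slice at a non-integer level $\sigma$ is a convex combination $(1-\lambda)\chi_{M}+\lambda\chi_{M'}$ with $|M|=k$, $|M'|=k+1$ and the same $\lambda$ as in \eqref{eq:fractional-lambda}. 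Its cost is therefore at least $(1-\lambda)C_k^\circ+\lambda C_{k+1}^\circ$. Since a linear objective is minimized at a vertex, this gives the lower bound.

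If invoking the edge characterization seems too heavy, there is an alternative via total unimodularity: the system with the extra row $\mathbf 1^\top\gamma$ is still the incidence matrix of a network (add a super-source and super-sink), so it is TU. The value function of a min-cost flow problem is then piecewise linear in the flow value with breakpoints only at integers. Combined with convexity, the function is linear between $kw$ and $(k+1)w$. Either route gives \eqref{eq:interpolated-value}, and with the upper bound, optimality of $\pi_s$.
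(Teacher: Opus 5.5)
Your proof is correct, but your lower bound uses a different mechanism from the paper's. The paper identifies the problem as a unit-capacity min-cost flow and cites the standard fact that its optimal value is convex and piecewise linear in the flow value, with breakpoints at integers (successive shortest augmentations have nondecreasing marginal costs). It then observes that the interpolated plan attains the chord; your upper-bound paragraph is exactly that last step.

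Your main lower-bound argument works on the polytope instead. A vertex of the slice $P\cap\{\mathbf 1^\top\gamma=\sigma\}$ at non-integer $\sigma$ lies on an edge of the bipartite matching polytope. Edges join matchings whose cardinalities differ by at most one. So every candidate optimum is already a $(1-\lambda,\lambda)$-combination of a $k$-matching and a $(k+1)$-matching.

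This route has three advantages. It is self-contained. It shows that convexity plays no role in the lower bound, so your convexity paragraph can be dropped. And it needs only the easy half of the adjacency characterization: if $M\triangle M'$ had two components $C_1,C_2$, then $\tfrac12(\chi_M+\chi_{M'})=\tfrac12(\chi_{M\triangle C_1}+\chi_{M'\triangle C_1})$, with both new matchings off the segment, so $[\chi_M,\chi_{M'}]$ is not an edge. The paper's route is shorter, and it ties the interpolation to the same augmenting-path structure that proves \cref{thm:nested} and the nondecreasing increments.

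Your TU alternative is essentially the paper's route, but as written it skips the step from total unimodularity to integer breakpoints. The quickest way to supply it: $s\mapsto\PW_\circ(s)$ is, up to scaling, the lower boundary of the image of $P$ under $\gamma\mapsto\bigl(\mathbf 1^\top\gamma,\ \sum_{i,j}\dcirc(x_i,y_j)\gamma_{ij}\bigr)$. That image is a polygon whose vertices are images of the integral vertices of $P$, so its breakpoints sit at integer masses.
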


\begin{proof}
The partial matching problem is a min-cost flow problem with integral capacities.  Its optimal value as a function of the required flow is convex and piecewise linear, with breakpoints at integer flow values; successive shortest augmentations give nondecreasing marginal costs \citep{ahuja1993network}.  Therefore the value on $[k,k+1]$ is the linear interpolation of $C_k^{\circ}$ and $C_{k+1}^{\circ}$.  The feasible plan \eqref{eq:interpolated-plan} attains exactly that value.  Equivalently, after cutting at $\theta^\star$, this is the interpolation principle used by \PAWL{} on the line \citep{chapel2025one}.
\end{proof}

\section{Spherical slicing and partial transport on $\Sph$}
\label{app:sphere}

\label{sec:slicing}

The construction so far is exact but one-dimensional.  This section uses it as a
primitive: we recall the spherical slicing of \citet{bonet2023spherical} and
\citet{liu2025linear}, and then define a sliced discrepancy that inherits
\PAWC{}'s partial structure, so that the whole transported-mass profile on
$\Sph$ is available at once, and the two measures need not carry the same mass.

\subsection{Great circles and the geodesic projection}
\label{sec:slicing-projection}

Let $\Sph=\{x\in\RR^d:\|x\|_2=1\}$ with the geodesic distance
$\dsph(x,y)=\arccos\langle x,y\rangle$.  The geodesics of $\Sph$ are its great
circles, obtained by intersecting the sphere with a two-dimensional linear
subspace.  Slicing on $\Sph$ therefore replaces the lines of Euclidean
sliced transport by great circles, and the one-dimensional problem it leaves
behind is a transport problem on a circle rather than on a line -- which is
precisely the problem \cref{alg:pawc-main} solves.

Following \citet{bonet2023spherical}, subspaces are parameterised by the Stiefel
manifold
\begin{equation}
V_{d,2}=\{U\in\RR^{d\times 2}: U^{\!\top}U=I_2\},
\label{eq:stiefel}
\end{equation}
which covers the Grassmannian $G_{d,2}$ of two-planes through
$U\mapsto UU^{\!\top}$ \citep{bendokat2020grassmann}.  Each $U\in V_{d,2}$ gives
the great circle $\mathcal C_U=\mathrm{span}(U)\cap\Sph$, and the natural map onto
it is the geodesic projection
\begin{equation}
\Pi_U(x)=\operatorname*{arg\,min}_{y\in\mathcal C_U} \dsph(x,y)
=\frac{UU^{\!\top}x}{\|U^{\!\top}x\|_2}\;\in\;\mathcal C_U\subset\RR^{d},
\label{eq:geodesic-projection}
\end{equation}
defined except on $\{x: U^{\!\top}x=0\}$, a subsphere of codimension two that is
null for any measure charging no such set \citep{bardelli2017probability}, so the
pushforwards below are well defined.

The columns of $U$ form an orthonormal basis of $\mathrm{span}(U)$, so $z\mapsto Uz$
is an isometry from $\Sunit$ onto $\mathcal C_U$ and $\Pi_U=U\,P^{U}$, where
\begin{equation}
P^{U}(x)=U^{\!\top}\Pi_U(x)=\frac{U^{\!\top}x}{\|U^{\!\top}x\|_2}\;\in\;\Sunit
\label{eq:slice-coordinates}
\end{equation}
is the coordinate representation of \eqref{eq:geodesic-projection} in the basis $U$
\citep[Lemma~1]{bonet2023spherical}.  The two are kept apart deliberately: $\Pi_U$
lands on the great circle in $\RR^{d}$, whereas $P^{U}$ is its coordinate vector in
$\RR^{2}$, and it is $P^{U}$ that the one-dimensional solver consumes.  Concretely,
\cref{alg:pawc-main} is run on the circular coordinate
\begin{equation}
\theta_U(x)=\operatorname{atan2}\bigl((U^{\!\top}x)_2,\;(U^{\!\top}x)_1\bigr)\bmod 2\pi
\;\in\;[0,2\pi),
\label{eq:slice-angle}
\end{equation}
that is, on $\Sone$ with circumference $L=2\pi$; because $z\mapsto Uz$ is an
isometry, arc length along $\mathcal C_U$ agrees with arc length in the coordinate
$\theta_U$, so no rescaling is needed.  \Cref{fig:spherical-slicing}(a) illustrates
the projection.

Two features of this construction matter here.  First, the image of $P^{U}$ is a
circle of circumference $2\pi$, so the slice inherits the cyclic structure with no
distinguished boundary -- the difficulty that motivated everything above, now
arising for a second reason.  Second, $P^{U}$ is a pushforward of measures, so it
preserves total mass and maps atoms to atoms: an empirical measure with $n$ atoms
of common weight $w$ becomes an empirical measure on $\Sunit$ with $n$ atoms
of the same weight $w$.  The standing assumptions of \cref{ass:standing} are
therefore met on almost every slice, distinctness of the projected support failing
only on a $\sigma$-null set of $U$.

\subsection{Sliced spherical partial Wasserstein}
\label{sec:slicing-definition}

Let $\sigma$ be the uniform probability measure on $V_{d,2}$.  For probability
measures $\mu,\nu\in\mathcal P(\Sph)$, \citet{bonet2023spherical} define the
spherical sliced-Wasserstein discrepancy by averaging the circular Wasserstein
distance over slices,
\begin{equation}
\mathrm{SSW}_p^p(\mu,\nu)=\int_{V_{d,2}}W_{p,\circ}^p\bigl(P^{U}_{\#}\mu,\,P^{U}_{\#}\nu\bigr)\,d\sigma(U).
\label{eq:ssw}
\end{equation}
This presumes $\mu$ and $\nu$ are probability measures, hence of equal mass.  We
drop that requirement.

Let $\mu$ and $\nu$ be finite positive measures on $\Sph$ with uniformly weighted
atoms,
\begin{equation}
\mu=w\sum_{i=1}^{n}\delta_{x_i},
\qquad
\nu=w\sum_{j=1}^{m}\delta_{y_j},
\qquad x_i,y_j\in\Sph,\ w>0,
\label{eq:sphere-measures}
\end{equation}
with $n\neq m$ allowed, so that $\mu(\Sph)=nw$ and $\nu(\Sph)=mw$ may differ.
Because each $P^{U}$ preserves mass, the admissible transported masses
$s\in[0,Kw]$ with $K=\min\{n,m\}$ are the same on every slice; a single $s$ is thus
meaningful across all of $V_{d,2}$ simultaneously.

\begin{definition}[Sliced spherical partial Wasserstein]
\label{def:sspw}
For $s\in[0,Kw]$ and $\mu,\nu$ as in \eqref{eq:sphere-measures}, define
\begin{equation}
\mathrm{SSPW}(s;\mu,\nu)
=\int_{V_{d,2}}\PW_{\circ}\bigl(s;\,P^{U}_{\#}\mu,\,P^{U}_{\#}\nu\bigr)\,d\sigma(U),
\label{eq:sspw}
\end{equation}
where $\PW_{\circ}(s;\cdot,\cdot)$ is the partial transport cost \eqref{eq:partial-lp}
on the circle, i.e.\ the quantity \cref{alg:pawc-main} returns for every admissible $s$
in a single run.
\end{definition}

At integer cardinalities the integrand is $C_k^{\circ}$ of
\eqref{eq:integer-problem} evaluated on the projected measures; between them it is
the interpolation of \cref{prop:interpolation}.  Four remarks fix the scope of the
definition.

\begin{remark}[Scope]
\label{rem:sspw-scope}
Two standing restrictions are worth stating plainly.  First, $d\ge 2$, and $d=2$ is
degenerate: there $\mathrm{span}(U)=\RR^{2}$, so $\mathcal C_U$ is all of
$\Sunit$, $P^{U}$ is an isometry, and $\mathrm{SSPW}$ collapses to
\cref{alg:pawc-main} itself -- slicing does nothing.  The construction is of interest for
$d\ge 3$.  Second, \eqref{eq:sphere-measures} gives $\mu$ and $\nu$ a \emph{common}
atom weight $w$, inherited from \cref{ass:standing}.  What varies is the number of
atoms, hence the total mass; the atom weight does not.  In particular this does not
cover two empirical probability measures with different sample counts, whose atoms
would weigh $1/n$ and $1/m$: that is a nonuniform-weight problem, requiring the
event-driven treatment of saturating atom capacities noted in the discussion, and is
outside the present statement.
\end{remark}

\begin{remark}[What is inherited slice-wise]
\label{rem:sspw-inherited}
Several properties pass from the circle to the sphere because they hold for the
integrand at every $U$ and are preserved by integration.
\begin{enumerate}[leftmargin=1.8em,itemsep=2pt]
\item \emph{Symmetry and nonnegativity}: $\mathrm{SSPW}(s;\mu,\nu)=\mathrm{SSPW}(s;\nu,\mu)\ge 0$,
since $\dcirc$ is symmetric.
\item \emph{Shape in $s$}: $s\mapsto\mathrm{SSPW}(s;\mu,\nu)$ is nondecreasing, convex and
piecewise linear with breakpoints contained in $\{kw\}_{k=0}^{K}$, because each
slice profile is (\cref{prop:interpolation}) and these properties survive averaging.
\item \emph{Isometry invariance}: for $R\in O(d)$,
$\mathrm{SSPW}(s;R_{\#}\mu,R_{\#}\nu)=\mathrm{SSPW}(s;\mu,\nu)$, since $P^{U}\circ R=P^{R^{\!\top}U}$
and $\sigma$ is invariant under the action of $O(d)$ on $V_{d,2}$.
\item \emph{Reduction}: if $n=m$ and $s=nw$, every slice problem is balanced.
Since $W_{1,\circ}$ is positively homogeneous under a common rescaling of both
masses,
\begin{equation}
\mathrm{SSPW}(nw;\mu,\nu)=nw\;\mathrm{SSW}_1\!\Bigl(\tfrac{\mu}{nw},\,\tfrac{\nu}{nw}\Bigr),
\label{eq:sspw-reduction}
\end{equation}
so \eqref{eq:sspw} recovers $\mathrm{SSW}_1$ of \eqref{eq:ssw} exactly when $nw=1$,
and up to that mass factor otherwise.
\end{enumerate}
\end{remark}

\begin{remark}[What is not claimed]
\label{rem:sspw-open}
$\mathrm{SSPW}$ is a discrepancy, not a metric, and we claim no more here.  Two
obstructions are already present one slice at a time.  Partial transport at a
\emph{fixed} transported mass does not satisfy the triangle inequality even on the
line, so no metric property can be inherited slice-wise; and $\mathrm{SSPW}(s;\mu,\nu)=0$
for $s>0$ only forces the projected measures to share $s$ units of mass on
$\sigma$-almost every slice, which is a statement about $P^{U}_{\#}\mu$ and
$P^{U}_{\#}\nu$ rather than about $\mu$ and $\nu$.  The injectivity results that
\citet{bonet2023spherical} obtain for the spherical Radon transform, and the
metricity that \citet{liu2025linear} establish for the linear embedding, both
concern probability measures and equal-mass transport; whether an analogue holds
once mass is allowed to differ is open, and settling it is the natural next question.
Sample complexity, by contrast, is not open at the elementary level: the integrand of
\eqref{eq:sspw} is bounded, and \cref{prop:sspw-concentration} below gives a uniform
Monte-Carlo guarantee over the whole profile.  What remains open there is sharper
dimension- or distribution-dependent control, since that bound uses nothing about
$\mu$ and $\nu$ beyond the range of the integrand.
\end{remark}

\begin{remark}[Choice of $s$]
\label{rem:sspw-choice-of-s}
Because \cref{alg:pawc-main} returns the entire profile, $s$ need not be committed to in
advance: one run per slice yields $\mathrm{SSPW}(\cdot;\mu,\nu)$ on all of $[0,Kw]$.  This
is the practical payoff of the profile being free.  When a single number is wanted,
$s$ can be selected afterwards, without recomputation.

For comparisons across data sets of differing cardinality or total mass, the
transported \emph{fraction}
\begin{equation}
\rho=\frac{s}{w\min\{n,m\}}\in[0,1]
\label{eq:transported-fraction}
\end{equation}
is more interpretable than the absolute mass $s$, and puts profiles with different
$K$ on a common axis.  When an elbow is wanted, it is usually easier to locate on the
marginal profile $\Delta_k=C_k^{\circ}-C_{k-1}^{\circ}$ than on the cumulative
profile $C_k^{\circ}$: the latter is convex and increasing, so the onset of
expensive transport registers only as a change of curvature, whereas $\Delta_k$ is
nondecreasing and registers the same event as a jump.
\end{remark}

\subsection{Estimation and cost}
\label{sec:slicing-cost}

In practice \eqref{eq:sspw} is estimated by Monte Carlo over slices; we write $M$
for their number, reserving $L$ for the circumference as elsewhere.  Draw
$U_1,\dots,U_M$ i.i.d.\ from $\sigma$, for instance by QR factorisation of a
$d\times 2$ Gaussian matrix, and set
\begin{equation}
\widehat{\mathrm{SSPW}}(s;\mu,\nu)=\frac1M\sum_{\ell=1}^{M}
\PW_{\circ}\bigl(s;\,P^{U_\ell}_{\#}\mu,\,P^{U_\ell}_{\#}\nu\bigr).
\label{eq:sspw-mc}
\end{equation}
\begin{proposition}[Uniform concentration of the sliced estimate]
\label{prop:sspw-concentration}
Let $\mu,\nu$ be as in \eqref{eq:sphere-measures} and let $U_1,\dots,U_M$ be i.i.d.\
$\sigma$.  For every fixed $s\in[0,Kw]$ and every $\varepsilon>0$,
\begin{equation}
\Pr\Bigl(\bigl|\widehat{\mathrm{SSPW}}(s;\mu,\nu)-\mathrm{SSPW}(s;\mu,\nu)\bigr|\ge\varepsilon\Bigr)
\;\le\;2\exp\!\Bigl(-\frac{2M\varepsilon^{2}}{\pi^{2}s^{2}}\Bigr).
\label{eq:sspw-hoeffding}
\end{equation}
Uniformly over the profile,
\begin{equation}
\Pr\Bigl(\sup_{s\in[0,Kw]}\bigl|\widehat{\mathrm{SSPW}}(s;\mu,\nu)-\mathrm{SSPW}(s;\mu,\nu)\bigr|\ge\varepsilon\Bigr)
\;\le\;2(K+1)\exp\!\Bigl(-\frac{2M\varepsilon^{2}}{\pi^{2}K^{2}w^{2}}\Bigr),
\label{eq:sspw-uniform}
\end{equation}
so that
$M\ge \bigl(\pi^{2}K^{2}w^{2}/2\varepsilon^{2}\bigr)\log\bigl(2(K+1)/\delta\bigr)$
slices suffice for $\varepsilon$-accuracy on the entire profile with probability at
least $1-\delta$.
\end{proposition}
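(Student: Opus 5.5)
The plan is Hoeffding's inequality for bounded i.i.d.\ summands, followed by a union bound over the $K+1$ breakpoints of the profile. Linear interpolation then transfers the bound from the breakpoints to all of $[0,Kw]$.

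\textbf{Step 1: bounded range of the integrand.} For fixed $s$, the summands $X_\ell(s)=\PW_{\circ}(s;P^{U_\ell}_{\#}\mu,P^{U_\ell}_{\#}\nu)$ are i.i.d.\ with mean $\mathrm{SSPW}(s;\mu,\nu)$. Each slice is a circle of circumference $2\pi$, so every pairwise cost satisfies $0\le\dcirc\le\pi$. Any feasible plan in \eqref{eq:partial-lp} moves total mass $s$, so its cost lies in $[0,\pi s]$. Hence $X_\ell(s)\in[0,\pi s]$, with range $\pi s$.

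Hoeffding's inequality then gives
\[
\Pr\bigl(|\widehat{\mathrm{SSPW}}-\mathrm{SSPW}|\ge\varepsilon\bigr)\le 2\exp\bigl(-2M\varepsilon^2/(\pi s)^2\bigr),
\]
which is \eqref{eq:sspw-hoeffding}. Measurability of $U\mapsto X_\ell(s)$ is needed here. It holds because the value is continuous off the $\sigma$-null set where projections coincide or are undefined; I will note this only briefly.

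\textbf{Step 2: union bound at the breakpoints.} Apply Step 1 at $s=kw$ for $k=0,\dots,K$. Since $kw\le Kw$, each exponent is at least $2M\varepsilon^2/(\pi^2K^2w^2)$. At $k=0$ both quantities vanish identically, so that term is trivially fine. A union bound over the $K+1$ values gives, with the stated probability, $|\widehat{\mathrm{SSPW}}(kw)-\mathrm{SSPW}(kw)|<\varepsilon$ simultaneously for all $k$.

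\textbf{Step 3: extension from the grid to all of $[0,Kw]$.} By \cref{prop:interpolation}, each slice profile is linear on $[kw,(k+1)w]$ with the same $\lambda=(s-kw)/w$ on every slice. Hence both $\widehat{\mathrm{SSPW}}$ and $\mathrm{SSPW}$ are the same convex combinations of their breakpoint values. Their difference at $s$ is therefore $(1-\lambda)e_k+\lambda e_{k+1}$, where $e_k$ is the error at $kw$, and so it is bounded by $\max_k|e_k|$. The supremum event is thus contained in the union of the breakpoint events, which proves \eqref{eq:sspw-uniform}.

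Setting the right-hand side of \eqref{eq:sspw-uniform} to at most $\delta$ and solving for $M$ yields the stated slice count.

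\textbf{Expected obstacle.} The only delicate point is Step 3: the interpolation weight $\lambda$ must not depend on $U$. It does not, because $K$ and $w$ are slice-invariant, as noted after \eqref{eq:sphere-measures}. Everything else is routine.
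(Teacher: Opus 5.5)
Your proposal is correct and follows the paper's proof essentially step for step: Hoeffding's inequality with range $[0,\pi s]$ coming from the geodesic radius $\pi$, then a union bound over the $K+1$ breakpoints after relaxing each range to $\pi Kw$, then the common piecewise-linear structure from \cref{prop:interpolation} to pass from the breakpoints to the supremum. Your remarks on measurability and on the identically vanishing $k=0$ term are refinements the paper leaves implicit.
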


\begin{proof}
Fix $s$ and set $Z_\ell=\PW_{\circ}\bigl(s;P^{U_\ell}_{\#}\mu,P^{U_\ell}_{\#}\nu\bigr)$,
so the $Z_\ell$ are i.i.d.\ with mean $\mathrm{SSPW}(s;\mu,\nu)$.  A transport plan of
mass $s$ moves no mass further than the geodesic radius $\pi$ of a circle of
circumference $2\pi$, whence $0\le Z_\ell\le\pi s$; the upper end is attained when
every transported pair is antipodal, so the range cannot be tightened without further
assumptions on $\mu$ and $\nu$.  Hoeffding's inequality for i.i.d.\ variables confined
to an interval of length $\pi s$ gives \eqref{eq:sspw-hoeffding}.

For \eqref{eq:sspw-uniform}, apply \eqref{eq:sspw-hoeffding} at each breakpoint
$s=kw$, $k=0,\dots,K$, with the range relaxed to the uniform bound $\pi Kw$, and take
a union bound over the $K+1$ breakpoints.  By \cref{prop:interpolation} every slice
profile is linear on $[kw,(k+1)w]$; hence so are $\widehat{\mathrm{SSPW}}$ and
$\mathrm{SSPW}$, being an average and an integral of such profiles over a common
breakpoint set, and hence so is their difference.  A linear function attains its
maximum modulus on an interval at an endpoint, so control at the breakpoints controls
the supremum.
\end{proof}

Each term of \eqref{eq:sspw-mc} costs one projection of the $N=n+m$ atoms, $O(dN)$,
followed by a single call to \cref{alg:pawc-main}, $O(N\log N)$ by \cref{thm:complexity}.  The estimator
therefore costs
\begin{equation}
O\bigl(M\,N(d+\log N)\bigr)
\label{eq:sspw-cost}
\end{equation}
in time and $O(N)$ in memory beyond the input, \emph{for the entire profile in $s$}.
The slices are independent and embarrassingly parallel.

\begin{remark}[A relative-accuracy guarantee is uniform for free]
\label{rem:relative-accuracy}
The range $\pi s$ in \eqref{eq:sspw-hoeffding} scales with $s$, so an
\emph{absolute} accuracy target treats small and large transported masses very
differently, and \eqref{eq:sspw-uniform} -- which relaxes every breakpoint to the
largest range $\pi Kw$ -- is correspondingly weak at small $s$
(\cref{sec:numerics-sphere} measures by how much).  Asking instead for accuracy
\emph{relative} to $s$ removes the disparity: taking $\varepsilon=\eta s$ in
\eqref{eq:sspw-hoeffding} gives, for every $s=kw$ with $k\ge 1$,
\begin{equation}
\Pr\Bigl(\bigl|\widehat{\mathrm{SSPW}}(s;\mu,\nu)-\mathrm{SSPW}(s;\mu,\nu)\bigr|\ge\eta\,s\Bigr)
\;\le\;2\exp\!\Bigl(-\frac{2M\eta^{2}}{\pi^{2}}\Bigr),
\label{eq:sspw-relative}
\end{equation}
in which $s$ has cancelled.  The bound is the same at every breakpoint, so a union
bound over the $K$ of them costs a factor $K$ and nothing else:
$\Pr(\exists k\ge1:\ |\cdot|\ge\eta\,kw)\le 2K\exp(-2M\eta^{2}/\pi^{2})$.  A
relative guarantee over the whole profile is thus no more expensive than a pointwise
one, which is the practical way to state accuracy when the profile spans several
orders of magnitude in $s$.  Retaining $\pi kw$ per breakpoint in
\eqref{eq:sspw-uniform} rather than relaxing to $\pi Kw$ would likewise be strictly
sharper, though for the supremum itself it buys only the factor $K+1$, the $k=K$
term dominating the sum in any case.
\end{remark}

Two points are worth isolating.  First, the profile is free: obtaining
$\mathrm{SSPW}(s;\cdot,\cdot)$ at every admissible $s$ costs the same as obtaining it at one,
because \cref{alg:pawc-main} produces all cardinalities in a single sweep.  A
construction built instead on a per-$s$ circular solver would pay a further factor
of $K=\Theta(N)$, which is exactly the cut-enumeration penalty measured in
\cref{sec:numerics-complexity}.  Second, replacing the circular solver by the
$O(N^2\log N)$ cut-enumeration reference of \cref{cor:cut-envelope} would raise
\eqref{eq:sspw-cost} to $O(LN^2\log N)$; the sliced construction is what makes the
$O(N\log N)$ solver worth having, and the $O(N\log N)$ solver is what makes the
sliced construction affordable at the sizes reported in
\cref{sec:numerics-complexity}.



\section{Numerical validation of the solver}
\label{app:numerics}

\label{sec:numerical}

This section reports an implementation of \cref{alg:pawc-main} and the programme used
to verify it.  The cut envelope \eqref{eq:cut-envelope} supplies a direct exact
oracle -- enumerate every original support gap, cut and unwrap there, run line
\PAWL{} once to obtain all $C_{k,r}^{\mathrm{line}}$, and compare the \PAWC{}
output to $\min_r C_{k,r}^{\mathrm{line}}$ for every $k$ -- and a
cardinality-constrained assignment solver applied separately for each $k$ supplies
a second, slower oracle that does not use the circle-cut structure at all.
Everything below is reproducible from a fixed set of seeds; the timing data and
the fitted slopes are emitted directly by the benchmark script into a CSV whose
header records the environment, package versions, and machine.

\subsection{Solvers and the trust hierarchy}
\label{sec:numerics-solvers}

Four solvers are compared, ordered by how much structure they assume.  A
disagreement is always resolved in favour of the solver that assumes less.

\begin{description}[leftmargin=1.6em,itemsep=2pt]
\item[Baseline A (LP oracle).]  For each $k$ separately, the cardinality-constrained
problem \eqref{eq:integer-problem} is solved as a linear program over the cardinality-$k$ matching
polytope $\mathcal P_k$, using the full circular cost matrix
$\dcirc(x_i,y_j)$.  It uses no cut, no cell and no chain, so it is independent of
every structural result proved above.  Its extreme points are integral, and the
returned vertex is checked to be a matching.
\item[Baseline B (cut enumeration).]  The exact cut envelope
\eqref{eq:cut-envelope}: cut at each of the $N$ original support gaps, unwrap, run
line \PAWL{} once per cut, and minimise over cuts for every $k$.  This is the
$O(N^2\log N)$ reference solver of \cref{cor:cut-envelope}.
\item[\PAWC{} (reference).]  A transparent transcription of \cref{alg:pawc-main} that
mirrors the proof structure and carries the invariant assertions of
\cref{sec:numerics-invariants}.
\item[\PAWC{}.]  The optimised implementation, in two variants -- one in plain
array code and one just-in-time compiled.
\end{description}

Instances are drawn from seven families chosen to stress different parts of the
argument: uniform; clustered (long free gaps and long chains); near-antipodal
pairs, perturbed off the exact tie excluded by \cref{ass:standing}; all atoms
inside a short arc, where the circle should degenerate to the line; strongly
unbalanced $n\ne m$; near-regular spacing, which produces many near-equal
candidate marginals; and strict source/target alternation around the circle.

\subsection{Exactness}
\label{sec:numerics-exactness}

On $504$ random instances with $n,m\in[2,30]$ spanning all seven families, the
cut envelope and the LP oracle agree at every cardinality $k=0,\dots,K$ to
$10^{-9}$, confirming \cref{prop:fixed-cut,cor:cut-envelope}
numerically.  Both \PAWC{} variants reproduce these values on the same battery, and
agree with the cut envelope on larger instances up to $N=2000$.

Two checks guard against the agreement being an artefact of floating point.
First, the whole comparison is repeated in exact rational arithmetic: support
points are taken as distinct multiples of $1/d$, gap midpoints are then exactly
rational, and the line profile under each cut is recomputed by an
$O(nmK)$ dynamic program over non-crossing matchings, which is exact because
$\lvert u-v\rvert$ satisfies the Monge condition on the line.  Second, the two
optimised variants are required to reproduce the reference implementation
\emph{bit for bit} -- identical \texttt{float64} cost arrays and identical
activation orders, not merely costs within a tolerance.  This is a strong test:
it fails under any reassociation of the floating-point sums, and it is the reason
the recursion \eqref{eq:Q-recurrence} is evaluated sequentially rather than as a
segmented cumulative sum (\cref{rem:numerics-Q}).

A separate check confirms the premise of \cref{prop:fixed-cut}: cuts
placed at uniformly random positions on the circle never beat the minimum over
the $N$ inter-point gaps, and moving the cut within a gap leaves the unwrapped
profile unchanged.

\subsection{Structural invariants}
\label{sec:numerics-invariants}

Cost agreement alone is a weak test, because optima need not be unique: a solver
can return the right number from a structurally wrong state.  The following are
therefore asserted directly, at \emph{every} induction step $k$ of every
instance.

\begin{enumerate}[leftmargin=1.8em,itemsep=2pt]
\item \emph{Nestedness} (\cref{thm:nested,cor:nested-sequence}):
$A_k\subset A_{k+1}$, and the increment consists of exactly one source atom and
one target atom.
\item \emph{Cyclic-neighbor addition} (\cref{thm:cyclic-neighbor}): the two newly
activated atoms are consecutive in the cyclic ordering of the inactive set
$A_k^{c}$.
\item \emph{Balanced cell interiors} (\cref{lem:balanced-interiors}): the active
atoms strictly inside each current cell contain equally many sources and targets.
\item \emph{Free-gap invariant} (\cref{lem:free-gap}): the current cells are
recomputed from scratch, and every one of them is confirmed to contain a gap
belonging to no previously selected cell.
\item \emph{Cut compatibility} (\cref{lem:cut-compatibility}), in a strengthened
form: cutting at \emph{any} free gap of \emph{any} current cell reproduces
$C_k^{\circ}$ exactly, not merely bounds it.
\item \emph{Cost consistency}: the incrementally maintained cost is compared with
$W_{1,\circ}(A_k)$ recomputed from the circulation form
\eqref{eq:circular-median} -- a weighted median over gap lengths, which involves
no cut, cell or chain and is therefore independent of how the algorithm arrived
at its value.
\item \emph{Constant-time marginal} (\cref{cor:constant-marginal}): the
four-lookup formula \eqref{eq:constant-marginal} is checked against
\cref{def:cell-marginal} evaluated by explicit sorted matching inside the cell.
\end{enumerate}

At the level of whole instances we further verify that $\theta^\star$ realises
$C_k^{\circ}$ as a line cost simultaneously for every $k$
(\cref{thm:simultaneous-cut}) and lies in a gap contained in no selected cell;
that $k\mapsto C_k^{\circ}$ is nondecreasing and convex; that recovered plans are
feasible, with marginals at most $w$, total mass $kw$, and cost equal to the
reported cost when re-evaluated with $\dcirc$ directly; that
\cref{prop:interpolation} holds at noninteger transported mass; and that the whole
profile is invariant under rotation of the circle, under reflection, and under
exchanging $\mu$ and $\nu$.  Rotation invariance is the sharpest of these in
practice, since wrap-around handling is where circular implementations
characteristically fail.  Degenerate configurations are covered separately:
$k=0$; $k=1$, where $C_1^{\circ}$ must equal the smallest geodesic distance over
all pairs; $n=1$ or $m=1$; and all atoms confined to a short arc, where the
profile must coincide with a single line \PAWL{} call.

\begin{remark}[On evaluating \eqref{eq:Q-recurrence}]
\label{rem:numerics-Q}
The recursion $Q_t=Q_{p_t}+w\lvert S_t-S_{p_t}\rvert$ chains together exactly the
indices sharing a differential rank, so it looks like a segmented cumulative sum
and is tempting to vectorise as a global prefix sum minus its value at each
segment start.  That form is numerically inferior: it differences two partial
sums whose magnitude grows like $NL$ in order to recover an $O(1)$ chain cost.
Measured against the sequential recursion it loses about $1.6\cdot 10^{-9}$
absolute at $N=10^{6}$, which is enough to break exact agreement between
implementations.  Since the recursion is $O(N)$ either way, \cref{thm:complexity}
is indifferent to the choice, and the sequential form is preferable.
\end{remark}

\subsection{Complexity}
\label{sec:numerics-complexity}

\Cref{fig:timings} and \cref{tab:timings} report the cost of computing the
\emph{complete} profile, i.e.\ all cardinalities $k=0,\dots,K$, for
$N$ from $10^{2}$ to $10^{6}$, with five repetitions per size and just-in-time
compilation excluded by an explicit warm-up.  Baseline B is measured to
$N=3\cdot 10^{4}$ and extrapolated beyond that along its theoretical
$N^{2}\log N$ slope, the constant being fitted at the largest measured size.
Baseline A is measured to $N=10^{3}$, beyond which one linear program per
cardinality is no longer practical, and is extrapolated along $N^{3}$.

\begin{table}[t]
\centering
\small
\begin{tabular}{rrrrr}
\toprule
$N$ & \PAWC{} (array) & \PAWC{} (compiled) & Baseline B & Baseline A \\
\midrule
$10^{2}$          & $0.000094$\,s & $0.000042$\,s & $0.0042$\,s & $0.212$\,s \\
$3\cdot 10^{2}$   & $0.000177$\,s & $0.000052$\,s & $0.0315$\,s & $4.52$\,s \\
$10^{3}$          & $0.000555$\,s & $0.000134$\,s & $0.349$\,s & $226$\,s \\
$3\cdot 10^{3}$   & $0.00170$\,s & $0.00048$\,s & $3.35$\,s & --- \\
$10^{4}$          & $0.00643$\,s & $0.00178$\,s & $38.5$\,s & $2.26\cdot 10^{5}$\,s$^{\ddagger}$ \\
$3\cdot 10^{4}$   & $0.0217$\,s  & $0.00574$\,s & $357$\,s & $6.10\cdot 10^{6}$\,s$^{\ddagger}$ \\
$10^{5}$          & $0.108$\,s   & $0.0217$\,s  & $4.43\cdot 10^{3}$\,s$^{\dagger}$ & --- \\
$3\cdot 10^{5}$   & $0.470$\,s   & $0.0777$\,s  & $4.37\cdot 10^{4}$\,s$^{\dagger}$ & --- \\
$10^{6}$          & $2.13$\,s    & $\mathbf{0.365}$\,s & $5.32\cdot 10^{5}$\,s$^{\dagger}$ & --- \\
\bottomrule
\end{tabular}
\caption{Time to compute the complete partial-transport profile for all $k$, mean
of five repetitions (two for Baseline A at $N\ge 600$).
$^{\dagger}$extrapolated along $N^{2}\log N$, $^{\ddagger}$along $N^{3}$, in both
cases with the constant fitted at the largest measured size; a dash marks sizes
past which no extrapolation is attempted.  At $N=10^{6}$ the compiled implementation returns
every cardinality in $0.365$\,s, against roughly six days for cut enumeration.
Baseline A already needs almost four minutes at $N=10^{3}$, where \PAWC{} needs
$1.3\cdot 10^{-4}$\,s.}
\label{tab:timings}
\end{table}

Fitting $\log(\text{time})$ against $\log N$ gives a slope of $2.00$ over all
sizes and $2.03$ over the top decade for cut enumeration, matching its
$O(N^{2}\log N)$ cost.  Baseline A fits at $3.05$, matching the cubic
cost of the task it performs: computing the whole profile by successive shortest
augmenting paths takes $\Theta(N)$ augmentations, each a shortest-path
computation costing $O(N^{2})$ on the dense bipartite graph with node potentials
\citep{ahuja1993network}.  A pure $N^{3}$ law fitted at the largest measured size
reproduces every measured point to within $35\%$ across a decade, which is why
its curve is extrapolated on the same footing as Baseline B's.  For \PAWC{} the same fit gives $1.11$ and
$1.29$ (array code), and $1.03$ and $1.23$ (compiled).  The top-decade values
exceed what one might naively expect, and the discrepancy deserves comment
rather than rounding.

Three observations place it.  First, $N\log N$ itself does not have log-log
slope $1$ on this range: over the top decade its slope is $1.079$, so the
relevant target is $1.08$ and not $1.0$.  Second, restricted to
$N\le 3\cdot 10^{4}$, where the preprocessing tables, the heap and the linked
list still fit in cache, the same code fits at $0.97$ (array) and $0.91$
(compiled); the excess appears only
from $N\ge 10^{5}$ upward, which is the signature of a memory-hierarchy
crossover rather than of an algorithmic term, since an unaccounted factor would
inflate the slope uniformly.  Third, and most directly, the same measurement
applied to line \PAWL{} -- an established $O(N\log N)$ partial-transport solver,
compiled and unmodified -- gives $1.123$ over all sizes and $1.251$ over the top
decade on the same machine, so \PAWC{} is in fact slightly flatter over the top
decade than the line algorithm it extends.

The machine-independent form of \cref{thm:complexity} is cleaner to verify
directly, and is: the heap performs $0.81N$ deletions and $0.31N$ insertions at
$N=10^{4}$, $10^{5}$ and $10^{6}$ alike.  That is, the total number of entries
ever created is linear, each is removed at most once whether valid or stale, and
at most one new candidate is inserted per iteration, exactly as
\cref{alg:pawc-main} prescribes.  Profiling at $N=10^{5}$ shows no stage growing
faster than the others, and in particular no residual cost from candidate-list
cleaning, which lazy deletion removes entirely.

\subsection{Is the simultaneous cut necessary?}
\label{sec:numerics-cut}

\Cref{ex:one-cut-unbounded} shows that a single cut chosen at $k=K$ can be
arbitrarily wrong at $k=1$.  That is a worst case; this subsection measures the
typical one.  Everything below is a reduction of the table
$T[r][k]=C_{k,r}^{\mathrm{line}}$ that the cut envelope already computes, so no
additional solver is involved, and all of it is computed in exact integer
arithmetic: whether a cut is optimal is an equality test, and a tolerance would
otherwise decide the answer.

We compare four rules for choosing one cut and reusing it for every $k$: the
deterministic $\min \Theta_K$; the widest support gap; a uniformly random gap, whose
expectation is evaluated exactly rather than sampled; and a warm start that fixes
the cut at $k_0=K/2$.  \Cref{tab:a1} reports $5600$ instances, $800$ from each of
the seven families of \cref{sec:numerics-solvers}; failure rates are stable to
$1.1$ percentage points against an independent replication of the same size.
\Cref{fig:a1} breaks the deterministic rule down by family and places the
adversarial family of \cref{ex:one-cut-unbounded} against its closed form.

\begin{table}[t]
\centering
\small
\begin{tabular}{lrrrr}
\toprule
cut rule & instances failing & median & 90th pct & worst \\
\midrule
$\min \Theta_K$ (optimal at $k=K$)        & $13.5\%$  & $0.09$ & $1.0$  & $1.8\cdot10^{3}$ \\
widest support gap                   & $22.6\%$  & $0.21$ & $0.7$  & $2.2$ \\
warm start at $k_0=K/2$              & $38.9\%$  & $0.28$ & $28$   & $3.4\cdot10^{4}$ \\
uniformly random gap                 & $100\%$   & $0.26$ & $60$   & $8.3\cdot10^{3}$ \\
\bottomrule
\end{tabular}
\caption{One-cut rules against the exact envelope, $5600$ exact instances.  A rule
\emph{fails} on an instance when its cut is strictly suboptimal at some $k$; the
excess columns are $\max_k\varepsilon_k$ over the instances where it does, with
$\varepsilon_k=(C_{k,r}^{\mathrm{line}}-C_k^{\circ})/C_k^{\circ}$.  Every cut is a
valid upper bound (\cref{cor:cut-envelope}); only looseness is at issue.}
\label{tab:a1}
\end{table}

\begin{figure}[t]
\centering
\begin{tikzpicture}[font=\small,
  ax/.style={draw=black!70,line width=.7pt},
  grid/.style={draw=black!12,line width=.4pt}]
\draw[grid] (0,0.000) -- (4.720,0.000);
\draw[ax] (0,0.000) -- (-.08,0.000);
\node[left,font=\scriptsize] at (-.10,0.000) {0\%};
\draw[grid] (0,0.949) -- (4.720,0.949);
\draw[ax] (0,0.949) -- (-.08,0.949);
\node[left,font=\scriptsize] at (-.10,0.949) {10\%};
\draw[grid] (0,1.897) -- (4.720,1.897);
\draw[ax] (0,1.897) -- (-.08,1.897);
\node[left,font=\scriptsize] at (-.10,1.897) {20\%};
\draw[grid] (0,2.846) -- (4.720,2.846);
\draw[ax] (0,2.846) -- (-.08,2.846);
\node[left,font=\scriptsize] at (-.10,2.846) {30\%};
\filldraw[draw=src!80!black,fill=src!55,line width=.6pt] (0.389,0) rectangle (0.791,1.138);
\node[above,font=\scriptsize] at (0.590,1.138) {12\%};
\node[below,font=\scriptsize,rotate=35,anchor=north east] at (0.590,-.05) {clust};
\filldraw[draw=src!80!black,fill=src!55,line width=.6pt] (0.979,0) rectangle (1.381,2.988);
\node[above,font=\scriptsize] at (1.180,2.988) {32\%};
\node[below,font=\scriptsize,rotate=35,anchor=north east] at (1.180,-.05) {inter};
\filldraw[draw=src!80!black,fill=src!55,line width=.6pt] (1.569,0) rectangle (1.971,2.134);
\node[above,font=\scriptsize] at (1.770,2.134) {22\%};
\node[below,font=\scriptsize,rotate=35,anchor=north east] at (1.770,-.05) {antip};
\filldraw[draw=src!80!black,fill=src!55,line width=.6pt] (2.159,0) rectangle (2.561,0.925);
\node[above,font=\scriptsize] at (2.360,0.925) {10\%};
\node[below,font=\scriptsize,rotate=35,anchor=north east] at (2.360,-.05) {grid};
\filldraw[draw=src!80!black,fill=src!55,line width=.6pt] (2.749,0) rectangle (3.151,0.806);
\node[above,font=\scriptsize] at (2.950,0.806) {8\%};
\node[below,font=\scriptsize,rotate=35,anchor=north east] at (2.950,-.05) {arc};
\filldraw[draw=src!80!black,fill=src!55,line width=.6pt] (3.339,0) rectangle (3.741,0.012);
\node[above,font=\scriptsize] at (3.540,0.012) {0\%};
\node[below,font=\scriptsize,rotate=35,anchor=north east] at (3.540,-.05) {unbal};
\filldraw[draw=src!80!black,fill=src!55,line width=.6pt] (3.929,0) rectangle (4.331,0.984);
\node[above,font=\scriptsize] at (4.130,0.984) {10\%};
\node[below,font=\scriptsize,rotate=35,anchor=north east] at (4.130,-.05) {unif};
\draw[ax] (0,0) -- (4.720,0);
\draw[ax] (0,0) -- (0,3.320);
\node[rotate=90,above,font=\scriptsize] at (-.85,1.660) {instances where $\min R_K$ is suboptimal};
\node[below] at (2.360,-1.15) {\small (a) failure of the deterministic one-cut rule};
\draw[grid] (11.940,0) -- (11.940,3.320);
\draw[ax] (11.940,0) -- (11.940,-.08);
\node[below,font=\scriptsize] at (11.940,-.10) {$10^{-7}$};
\draw[grid] (11.153,0) -- (11.153,3.320);
\draw[ax] (11.153,0) -- (11.153,-.08);
\node[below,font=\scriptsize] at (11.153,-.10) {$10^{-6}$};
\draw[grid] (10.367,0) -- (10.367,3.320);
\draw[ax] (10.367,0) -- (10.367,-.08);
\node[below,font=\scriptsize] at (10.367,-.10) {$10^{-5}$};
\draw[grid] (9.580,0) -- (9.580,3.320);
\draw[ax] (9.580,0) -- (9.580,-.08);
\node[below,font=\scriptsize] at (9.580,-.10) {$10^{-4}$};
\draw[grid] (8.793,0) -- (8.793,3.320);
\draw[ax] (8.793,0) -- (8.793,-.08);
\node[below,font=\scriptsize] at (8.793,-.10) {$10^{-3}$};
\draw[grid] (8.007,0) -- (8.007,3.320);
\draw[ax] (8.007,0) -- (8.007,-.08);
\node[below,font=\scriptsize] at (8.007,-.10) {$10^{-2}$};
\draw[grid] (7.220,0) -- (7.220,3.320);
\draw[ax] (7.220,0) -- (7.220,-.08);
\node[below,font=\scriptsize] at (7.220,-.10) {$10^{-1}$};
\draw[grid] (7.220,0.000) -- (11.940,0.000);
\draw[ax] (7.220,0.000) -- (7.140,0.000);
\node[left,font=\scriptsize] at (7.120,0.000) {$10^{0}$};
\draw[grid] (7.220,1.107) -- (11.940,1.107);
\draw[ax] (7.220,1.107) -- (7.140,1.107);
\node[left,font=\scriptsize] at (7.120,1.107) {$10^{2}$};
\draw[grid] (7.220,2.213) -- (11.940,2.213);
\draw[ax] (7.220,2.213) -- (7.140,2.213);
\node[left,font=\scriptsize] at (7.120,2.213) {$10^{4}$};
\draw[grid] (7.220,3.320) -- (11.940,3.320);
\draw[ax] (7.220,3.320) -- (7.140,3.320);
\node[left,font=\scriptsize] at (7.120,3.320) {$10^{6}$};
\draw[ax] (7.220,0) rectangle (11.940,3.320);
\draw[draw=black!30,line width=2.6pt] (7.457,0.264) -- (8.243,0.930) -- (9.030,1.492) -- (9.817,2.047) -- (10.605,2.601) -- (11.406,3.165);
\draw[draw=tgt,line width=1.1pt] (7.457,0.264) -- (8.243,0.930) -- (9.030,1.492) -- (9.817,2.047) -- (10.605,2.601) -- (11.406,3.165);
\filldraw[draw=tgt,fill=tgt] (7.457,0.264) circle (1.8pt);
\filldraw[draw=tgt,fill=tgt] (8.243,0.930) circle (1.8pt);
\filldraw[draw=tgt,fill=tgt] (9.030,1.492) circle (1.8pt);
\filldraw[draw=tgt,fill=tgt] (9.817,2.047) circle (1.8pt);
\filldraw[draw=tgt,fill=tgt] (10.605,2.601) circle (1.8pt);
\filldraw[draw=tgt,fill=tgt] (11.406,3.165) circle (1.8pt);
\node[below] at (9.580,-1.15) {\small (b) adversarial family: $\varepsilon_1$ against $\delta/L$};
\node[below,font=\scriptsize] at (9.580,-.42) {$\delta/L$ (decreasing)};
\node[rotate=90,above,font=\scriptsize] at (6.370,1.660) {$\varepsilon_1$ at the $k=K$-optimal cut};
\node[right,font=\scriptsize,align=left] at (7.570,2.870) {grey: closed form $(d_2-\delta)/\delta$\\orange: measured, exact};
\end{tikzpicture}
\caption{(a) Failure rate of the deterministic rule $\min \Theta_K$ by instance family,
$800$ instances each.  (b) The family of \cref{ex:one-cut-unbounded}: measured
relative excess at $k=1$ of a cut that is exactly optimal at $k=K$, against the
closed form \eqref{eq:one-cut-blowup}, in exact arithmetic over six decades of
$\delta$.}
\label{fig:a1}
\end{figure}

Three things are worth stating plainly, including the one that cuts against the
result we set out to find.

\emph{The deterministic rule usually works.}  It is optimal at every $k$ on $86.5\%$
of instances.  A reader who expects one-cut heuristics to fail routinely on random
data is wrong, and we do not claim otherwise.  What it lacks is a certificate: on
the $13.5\%$ where it fails there is nothing local distinguishing them, and the
excess reaches three orders of magnitude.  Its failure rate is also uneven --
$0.1\%$ on strongly unbalanced instances, $31.5\%$ on strictly alternating ones --
so it cannot be excused as a boundary effect.  Two further caveats sharpen the
comparison rather than soften it.  First, obtaining $\Theta_K$ is only cheap when
$n=m$, where the $k=K$ problem is balanced and its cut is a weighted median of
\eqref{eq:circular-median}; when $n\ne m$ the full-mass problem is itself partial,
so the rule presupposes the machinery it claims to avoid.  Restricted to $n=m$,
where it is genuinely cheap, its failure rate rises to $27.3\%$.  Second, the
choice within $\Theta_K$ matters: $\min \Theta_K$ is one of several optimal cuts at $k=K$,
and a different element of that same set fails on $46.5\%$ of instances.

\emph{A simultaneously optimal cut is common, not rare.}  The set $\Theta_\cap$ of gaps
optimal at every $k$ has median size between $0.17N$ and $0.92N$ depending on the
family.  \Cref{thm:simultaneous-cut} therefore does not locate a needle in a
haystack on random data, and we should not claim it does.  Its content is that such
a gap exists \emph{always} and is produced constructively, not that it is hard to
stumble on.  Correspondingly, a uniformly random element of $\Theta_K$ is simultaneously
optimal with median probability $1$, but with probability as low as $0.14$ on the
worst instance encountered -- which is the same point as before in another form:
the typical case is benign and carries no guarantee.

\emph{The optimal cut does move with $k$.}  Tracking $\min R_k$ as $k$ grows, the
index changes on a majority of instances in every family except the strongly
unbalanced one.  This is the concrete form of the claim in \cref{sec:intro} that
the optimal circulation varies with transported mass, and it is why a warm start
degrades as $k_0$ moves away from the cardinality of interest: fixing the cut at
$k_0=K/2$ already fails on $38.9\%$ of instances.

\subsection{The spherical construction}
\label{sec:numerics-sphere}

\Cref{sec:slicing} is a construction; this subsection is the evidence for it.  Two
things are measured: that the implementation reproduces an independent published
one at the single point where the two theories must coincide, and how the
Monte-Carlo estimator \eqref{eq:sspw-mc} actually behaves against
\cref{prop:sspw-concentration}.

\paragraph{Reduction to the balanced case.}
\Cref{rem:sspw-inherited}(4) states that at $n=m$ and $s=nw$ every slice problem is
balanced and $\mathrm{SSPW}(nw;\mu,\nu)=nw\,\mathrm{SSW}_1(\mu/nw,\nu/nw)$.  This is
the only quantity in \cref{sec:slicing} with an external reference, and reproducing
it exercises the whole pipeline: Stiefel sampling, geodesic projection, the
circular coordinate \eqref{eq:slice-angle}, \cref{alg:pawc-main}, and the slice average.
We compare against the implementation shipped with POT \citep{flamary2021pot},
which follows \citet{bonet2023spherical}.

The comparison uses \emph{the same slices} on both sides.  Letting each
implementation draw its own would compare two Monte-Carlo estimates, and agreement
would then only be required in distribution -- a weak test that a systematic bias
of the size of the Monte-Carlo noise would pass.  With shared slices both sides
evaluate the same finite average of the same integrand, so agreement is required to
solver tolerance.  \Cref{tab:c3} reports five checks, each isolating what the next
would aggregate away.

\begin{table}[t]
\centering
\small
\begin{tabular}{llrr}
\toprule
 & what it isolates & vs.\ POT & vs.\ LP oracle \\
\midrule
(a) & circular solver alone, no projection & $3.3\cdot10^{-15}$ & $1.4\cdot10^{-15}$ \\
(b) & per slice on the sphere, $384$ slices & $7.5\cdot10^{-9}$ & $5.6\cdot10^{-15}$ \\
(c) & the aggregate \eqref{eq:sspw} against $\mathrm{SSW}_1$ & $1.3\cdot10^{-9}$ & -- \\
(d) & the mass factor \eqref{eq:sspw-reduction}, four $(n,w)$ & $1.5\cdot10^{-15}$ & -- \\
(e) & the $d=2$ degeneracy of \cref{rem:sspw-scope} & $7.9\cdot10^{-16}$ & -- \\
\bottomrule
\end{tabular}
\caption{Maximum relative discrepancy of the reduction cross-check, over
$d\in\{3,5,10,50\}$, $n=m\in\{10,50,200\}$ and both instance families, with slices
shared between implementations.  Check (e) additionally has zero slice-to-slice
variance ($6.7\cdot10^{-16}$), as it must, since at $d=2$ the projection is an
isometry and every slice returns the same value.}
\label{tab:c3}
\end{table}

Checks (b) and (c) sit near $10^{-9}$ rather than near $10^{-15}$, and the cause is
worth recording because it is not on our side.  POT's circular solver dispatches to
a binary search for every $p$ -- it provides no exact $p=1$ circular routine -- and
that search terminates at a tolerance of $10^{-6}$, so $10^{-8}$ is its own accuracy
floor.  On the worst slice of check (b) our value agrees with an LP over the
geodesic cost matrix \emph{exactly}, to the last bit, while POT differs from that LP
by $7.5\cdot10^{-9}$.  The acceptance criterion is therefore split: exact against
the LP oracle, POT's own tolerance against POT.  Requiring $10^{-10}$ of POT would
be requiring it to be more accurate than it is.

\paragraph{Concentration.}
\Cref{prop:sspw-concentration} is next.  We take a reference at $M_{\mathrm{ref}}=10^{5}$
slices as ground truth, drawn from a slice stream \emph{disjoint} from the estimates
under test -- if the estimate's slices were a subset of the reference's, deviations
would be biased downward and the convergence would flatter itself -- and use
$200$ repetitions at each $M\in\{2^3,\dots,2^{12}\}$, over $d\in\{3,10,50\}$,
three families and both $n=m$ and $n\neq m$.

Three checks pass and are reported without further comment: the estimator converges
at the rate the bound implies, with fitted slope $-0.497$ across the eighteen
configurations and none outside $[-0.519,-0.476]$; the bound of
\eqref{eq:sspw-hoeffding} is never violated, on any of the $540$ configuration and
threshold combinations tested; and the piecewise linearity in $s$ that the proof of
\eqref{eq:sspw-uniform} relies on holds exactly, the deviation on a fine $s$-grid
never exceeding its value at the breakpoints.  That last one is a step of the proof
rather than an implementation detail, which is why it is checked directly.

What the measurements add is a quantitative account of where
\cref{prop:sspw-concentration} is loose, and there are two distinct answers.

\begin{figure}[t]
\centering
\begin{tikzpicture}[font=\small,
  ax/.style={draw=black!70,line width=.7pt},
  grid/.style={draw=black!12,line width=.4pt}]
\draw[grid] (0,0.000) -- (4.940,0.000);
\draw[ax] (0,0.000) -- (-.08,0.000);
\node[left,font=\scriptsize] at (-.10,0.000) {$10^{-3}$};
\draw[grid] (0,1.085) -- (4.940,1.085);
\draw[ax] (0,1.085) -- (-.08,1.085);
\node[left,font=\scriptsize] at (-.10,1.085) {$10^{-2}$};
\draw[grid] (0,2.170) -- (4.940,2.170);
\draw[ax] (0,2.170) -- (-.08,2.170);
\node[left,font=\scriptsize] at (-.10,2.170) {$10^{-1}$};
\draw[grid] (0,3.255) -- (4.940,3.255);
\draw[ax] (0,3.255) -- (-.08,3.255);
\node[left,font=\scriptsize] at (-.10,3.255) {$10^{0}$};
\draw[ax] (0.000,0) -- (0.000,-.08);
\node[below,font=\scriptsize] at (0.000,-.10) {0.00};
\draw[ax] (1.235,0) -- (1.235,-.08);
\node[below,font=\scriptsize] at (1.235,-.10) {0.25};
\draw[ax] (2.470,0) -- (2.470,-.08);
\node[below,font=\scriptsize] at (2.470,-.10) {0.50};
\draw[ax] (3.705,0) -- (3.705,-.08);
\node[below,font=\scriptsize] at (3.705,-.10) {0.75};
\draw[ax] (4.940,0) -- (4.940,-.08);
\node[below,font=\scriptsize] at (4.940,-.10) {1.00};
\draw[ax] (0,0) rectangle (4.940,3.580);
\draw[densely dashed,black!60,line width=.9pt] (0,3.255) -- (4.940,3.255);
\node[right,font=\scriptsize,black!60] at (.12,3.035) {the $\pi s$ ceiling};
\draw[draw=src,line width=1.2pt] (0.099,0.613) -- (0.148,0.463) -- (1.186,0.829) -- (1.235,0.833) -- (2.470,1.105) -- (4.940,1.955);
\filldraw[draw=src,fill=src] (0.099,0.613) circle (1.7pt);
\filldraw[draw=src,fill=src] (0.148,0.463) circle (1.7pt);
\filldraw[draw=src,fill=src] (1.186,0.829) circle (1.7pt);
\filldraw[draw=src,fill=src] (1.235,0.833) circle (1.7pt);
\filldraw[draw=src,fill=src] (2.470,1.105) circle (1.7pt);
\filldraw[draw=src,fill=src] (4.940,1.955) circle (1.7pt);
\draw[draw=transportcol,line width=1.2pt] (0.099,0.550) -- (0.148,0.577) -- (1.186,0.916) -- (1.235,0.908) -- (2.470,1.252) -- (4.940,2.269);
\filldraw[draw=transportcol,fill=transportcol] (0.099,0.550) circle (1.7pt);
\filldraw[draw=transportcol,fill=transportcol] (0.148,0.577) circle (1.7pt);
\filldraw[draw=transportcol,fill=transportcol] (1.186,0.916) circle (1.7pt);
\filldraw[draw=transportcol,fill=transportcol] (1.235,0.908) circle (1.7pt);
\filldraw[draw=transportcol,fill=transportcol] (2.470,1.252) circle (1.7pt);
\filldraw[draw=transportcol,fill=transportcol] (4.940,2.269) circle (1.7pt);
\draw[draw=tgt,line width=1.2pt] (0.099,0.667) -- (0.148,0.820) -- (1.186,0.979) -- (1.235,1.044) -- (2.470,1.268) -- (4.940,2.347);
\filldraw[draw=tgt,fill=tgt] (0.099,0.667) circle (1.7pt);
\filldraw[draw=tgt,fill=tgt] (0.148,0.820) circle (1.7pt);
\filldraw[draw=tgt,fill=tgt] (1.186,0.979) circle (1.7pt);
\filldraw[draw=tgt,fill=tgt] (1.235,1.044) circle (1.7pt);
\filldraw[draw=tgt,fill=tgt] (2.470,1.268) circle (1.7pt);
\filldraw[draw=tgt,fill=tgt] (4.940,2.347) circle (1.7pt);
\draw[draw=src,line width=1.2pt] (3.390,1.150) -- (3.790,1.150);
\filldraw[draw=src,fill=src] (3.590,1.150) circle (1.7pt);
\node[right,font=\scriptsize] at (3.850,1.150) {$d=3$};
\draw[draw=transportcol,line width=1.2pt] (3.390,0.810) -- (3.790,0.810);
\filldraw[draw=transportcol,fill=transportcol] (3.590,0.810) circle (1.7pt);
\node[right,font=\scriptsize] at (3.850,0.810) {$d=10$};
\draw[draw=tgt,line width=1.2pt] (3.390,0.470) -- (3.790,0.470);
\filldraw[draw=tgt,fill=tgt] (3.590,0.470) circle (1.7pt);
\node[right,font=\scriptsize] at (3.850,0.470) {$d=50$};
\node[below] at (2.470,-1.05) {\small (a) effective range $/\ \pi s$};
\node[below,font=\scriptsize] at (2.470,-.45) {$k/K$};
\draw[grid] (7.140,0.000) -- (12.080,0.000);
\draw[ax] (7.140,0.000) -- (7.060,0.000);
\node[left,font=\scriptsize] at (7.040,0.000) {0};
\draw[grid] (7.140,0.545) -- (12.080,0.545);
\draw[ax] (7.140,0.545) -- (7.060,0.545);
\node[left,font=\scriptsize] at (7.040,0.545) {350};
\draw[grid] (7.140,1.090) -- (12.080,1.090);
\draw[ax] (7.140,1.090) -- (7.060,1.090);
\node[left,font=\scriptsize] at (7.040,1.090) {700};
\draw[grid] (7.140,1.634) -- (12.080,1.634);
\draw[ax] (7.140,1.634) -- (7.060,1.634);
\node[left,font=\scriptsize] at (7.040,1.634) {1050};
\draw[grid] (7.140,2.179) -- (12.080,2.179);
\draw[ax] (7.140,2.179) -- (7.060,2.179);
\node[left,font=\scriptsize] at (7.040,2.179) {1400};
\draw[grid] (7.140,2.724) -- (12.080,2.724);
\draw[ax] (7.140,2.724) -- (7.060,2.724);
\node[left,font=\scriptsize] at (7.040,2.724) {1750};
\draw[grid] (7.140,3.269) -- (12.080,3.269);
\draw[ax] (7.140,3.269) -- (7.060,3.269);
\node[left,font=\scriptsize] at (7.040,3.269) {2100};
\draw[ax] (7.140,0) -- (7.140,-.08);
\node[below,font=\scriptsize] at (7.140,-.10) {0.00};
\draw[ax] (8.375,0) -- (8.375,-.08);
\node[below,font=\scriptsize] at (8.375,-.10) {0.25};
\draw[ax] (9.610,0) -- (9.610,-.08);
\node[below,font=\scriptsize] at (9.610,-.10) {0.50};
\draw[ax] (10.845,0) -- (10.845,-.08);
\node[below,font=\scriptsize] at (10.845,-.10) {0.75};
\draw[ax] (12.080,0) -- (12.080,-.08);
\node[below,font=\scriptsize] at (12.080,-.10) {1.00};
\draw[ax] (7.140,0) rectangle (12.080,3.580);
\draw[draw=freegap,line width=1.3pt] (7.239,3.508) -- (7.338,0.878) -- (7.436,0.391) -- (7.535,0.220) -- (7.634,0.142) -- (7.733,0.099) -- (7.832,0.073) -- (7.930,0.056) -- (8.029,0.045) -- (8.128,0.036) -- (8.227,0.030) -- (8.326,0.026) -- (8.424,0.022) -- (8.523,0.019) -- (8.622,0.017) -- (8.721,0.015) -- (8.820,0.013) -- (8.918,0.012) -- (9.017,0.011) -- (9.116,0.010) -- (9.215,0.009) -- (9.314,0.009) -- (9.412,0.008) -- (9.511,0.007) -- (9.610,0.007) -- (9.709,0.006) -- (9.808,0.006) -- (9.906,0.006) -- (10.005,0.005) -- (10.104,0.005) -- (10.203,0.005) -- (10.302,0.005) -- (10.400,0.004) -- (10.499,0.004) -- (10.598,0.004) -- (10.697,0.004) -- (10.796,0.004) -- (10.894,0.004) -- (10.993,0.004) -- (11.092,0.003) -- (11.191,0.003) -- (11.290,0.003) -- (11.388,0.003) -- (11.487,0.003) -- (11.586,0.003) -- (11.685,0.003) -- (11.784,0.003) -- (11.882,0.003) -- (11.981,0.003) -- (12.080,0.003);
\node[below] at (9.610,-1.05) {\small (b) $\log_{10}$(uniform term / pointwise bound)};
\node[below,font=\scriptsize] at (9.610,-.45) {$k/K$};
\node[right,font=\scriptsize,align=left] at (7.440,3.180) {identical for $d=3,10,50$};
\end{tikzpicture}
\caption{(a) The effective Hoeffding range $\max_\ell Z_\ell-\min_\ell Z_\ell$ as a
fraction of the ceiling $\pi s$ used in \cref{prop:sspw-concentration}, against
transported cardinality; median over families, $M_{\mathrm{ref}}=10^{5}$.  The
ceiling is never approached, and the gap widens as $s$ shrinks.  (b) The ratio of
the per-breakpoint term of the uniform bound \eqref{eq:sspw-uniform}, which uses the
range $\pi K w$, to the pointwise bound \eqref{eq:sspw-hoeffding} at the same $s$,
which uses $\pi s$; $\varepsilon=0.05\,Kw$, $M=2^{12}$.  The two coincide at $s=Kw$
and diverge without limit as $s\to0$.}
\label{fig:c1}
\end{figure}

First, the range.  The bound uses $0\le Z_\ell\le\pi s$, attained only when every
transported pair is antipodal.  Measured, the effective range
$\max_\ell Z_\ell-\min_\ell Z_\ell$ -- which is what Hoeffding's inequality actually
depends on -- is between $0.3\%$ and $15\%$ of $\pi s$
(\cref{fig:c1}(a)), and the ceiling itself is never approached: $\max_\ell Z_\ell$
stays below $0.29\,\pi s$ throughout.  The looseness is worst at small $s$.  It also
\emph{grows} with dimension rather than shrinking -- the median effective range is
$0.0067$, $0.0091$ and $0.0100$ of $\pi s$ at $d=3,10,50$ -- so the slack is not
explained by the projected measures approaching uniformity on the circle as $d$
grows, which would have been the natural guess.  Whatever a dimension-dependent
refinement of \cref{rem:sspw-open} would exploit, it is not that.

Second, and separately, the uniform bound \eqref{eq:sspw-uniform} is obtained by
relaxing the range at every breakpoint from $\pi kw$ to $\pi Kw$ before taking the
union bound.  For controlling the supremum this costs little, since the $k=K$ term
dominates the sum regardless.  But it makes \eqref{eq:sspw-uniform} a poor guarantee
\emph{at} a small $s$: its per-breakpoint term at $s=kw$ exceeds the pointwise bound
\eqref{eq:sspw-hoeffding} there by a factor whose logarithm grows like $K^2/k^2$,
reaching $10^{361}$ at $k/K=0.05$ for the configuration of \cref{fig:c1}(b), and
identically so at every $d$ tested.  The practical reading is that
\eqref{eq:sspw-uniform} should be used for what it is -- a statement about the whole
profile -- and \eqref{eq:sspw-hoeffding} used when accuracy is wanted at a
particular transported mass.  \Cref{rem:relative-accuracy} records the natural way
to have both at once.

Finally, the slice count.  \Cref{prop:sspw-concentration} demands
$M\ge(\pi^2K^2w^2/2\varepsilon^2)\log(2(K+1)/\delta)$; the smallest tested $M$ at
which no repetition out of $200$ exceeded $\varepsilon$ is smaller than that by a
median factor of $470$, and the empirical figure is itself conservative, being an
observed tail of $0/200$ rather than of $\delta$.  Two orders of magnitude is the
usual price of a distribution-free bound, and it is stated here so that a reader
sizing a computation uses the measurement rather than the proposition.

\begin{remark}[On the two \PAWC{} variants]
\label{rem:numerics-variants}
The array implementation and the compiled one produce bit-identical output; the
latter is $3.7$ to $6.2$ times faster.  Both are reported because Baseline B
calls line \PAWL{}, which is itself compiled: comparing against the array
implementation understates \PAWC{}'s advantage by a constant factor, while the
compiled variant is the like-for-like comparison.  The asymptotic separation
visible in \cref{fig:timings} does not depend on the choice.
\end{remark}


\section{Application detail and what did not work}
\label{app:apps}

\subsection{Shape matching: protocol}

Contours are traced at the $0.5$ level of each \textsc{mpeg-7} mask, resampled to $128$
points uniformly in arclength, and converted to outward normal angles by a centred
difference over five samples.  Pixel quantisation makes long straight runs share a
tangent \emph{exactly}, which violates \cref{ass:standing}; a deterministic jitter of
$10^{-7}$ circle units (seeded by a checksum of the file name, so it is reproducible)
separates them.  That is $1.3\times10^{-4}$ degrees, four orders of magnitude below the
coarsest quantisation step and far below any cost difference these tasks resolve --- it
is the symbolic perturbation of \cref{sec:discussion} made concrete.

A query keeps a contiguous fraction $v$ of the arclength starting at a uniformly random
sample, replaces a fraction $c$ of the survivors by normals drawn uniformly on the
circle, and is rotated by a uniformly random angle.  Every method scores the same grid
of $360$ rotations.  For retrieval, the database holds the intact signatures of all
$1400$ shapes and the query's own entry is excluded; the bullseye score counts how many
of the $20$ shapes of the query's class appear in the top $40$.  For correspondences the
plan is read at the rotation that minimises the cost at $\rho=0.8$, and a pair is
correct if it matches the arclength index the query sample was built from; clutter
samples have no correct partner, so asserting one is an error.

The correlation baseline is the classical extended-Gaussian-image matcher: a von Mises
kernel density on each measure, cross-correlated over rotations by FFT, at bandwidths
$0.01$, $0.02$ and $0.04$ circle units, reported at its best bandwidth per cell.  The
nearest-neighbour baseline for correspondences is mutual nearest neighbour in normal
angle after alignment.

\subsection{Why not unbalanced transport, and is this the same optimum?}

Two checks on one shape per class over the same twelve cells ($864$ queries).  First, a
KL-relaxed unbalanced Sinkhorn cost \citep{peyre2019computational} replaces the partial
one, at marginal weights $\tau\in\{0.01,0.05,0.2\}$ and entropic regularisation
$10^{-2}$, reported at its best $\tau$ per cell.  It matches \PAWC{} while the query is
nearly complete and falls behind as the boundary disappears --- $0.71$ against $0.85$
success at $v=0.35$, $c=0.4$ --- and no single $\tau$ is competitive across cells: the
$\tau$ that is best at full visibility scores $0.61$ there and $\tau=0.2$ scores $0.44$.
The relaxation weight is a parameter to tune; $\rho=1-c$ is a rule.  The rotation sweep
also costs $0.58$\,s per query per $\tau$ against $0.029$\,s for all $360$ \PAWC{}
solves.

Second, on a sample of rotations from every query we compared \PAWC{}'s profile value
against POT's \texttt{partial\_wasserstein2} at the same cardinality with unit atom
weights --- the same problem, not a proportional one.  The largest absolute difference
over the $864$ queries is $4.3\times10^{-14}$.  The applications therefore read exactly
the optimum a general partial solver reports, $13$ times faster at $n=128$ and $2700$
times faster by $n=2048$.

\subsection{Spherical fitting: gradients and protocol}

For one slice $U$ and $p=U^{\top}x$, the circle coordinate is
$\theta_U(x)=\operatorname{atan2}(p_2,p_1)$ with
$\partial\theta/\partial x=U(-p_2,p_1)^{\top}/\|p\|^{2}$; the matched geodesic
distances contribute $\pm1$ through $\partial/\partial\theta$.  The matching found by
\PAWC{} is held fixed while differentiating (the envelope theorem), the gradient is
projected onto the tangent space at each particle and the step retracted by
renormalising.  Analytic gradients agree with central differences to $3\times10^{-6}$
relative.

Training uses $500$ particles from a uniform start, $32$ slices redrawn every step,
$2000$ Adam steps with a cosine schedule from $8\times10^{-2}$ to $4\times10^{-3}$, and
target batches of size $\gamma\cdot500$.  The synthetic target has five von
Mises--Fisher components with masses $0.28,0.22,0.18,0.20,0.12$ and concentrations
$\kappa=80,200,60,120,150$.  The catalogue is split $80/20$ once, with a fixed
permutation seed; the model never sees the evaluation fifth.  Real catalogues round
their coordinates, so a target batch can contain exact duplicates: the tie-break
perturbation has magnitude in $[10^{-9},2\times10^{-9})$ with random sign, bounded away
from zero because a Gaussian jitter of that size falls below one ulp often enough to
leave ties in place over $10^{8}$ draws.

\end{document}